\documentclass{article}
\usepackage[english]{babel}
\usepackage{arxiv/Macros}
\usepackage{mathrsfs}
\usepackage{wrapfig}
\usepackage{subcaption}

\title{SGD at the Edge of Stability: The Stochastic Sharpness Gap}
\author{
    Fangshuo Liao \\ Rice CS \\ Fangshuo.Liao@rice.edu
    \and
    Afroditi Kolomvaki \\ Rice CS \\ ak203@rice.edu
    \and
    Anastasios Kyrillidis \\ Rice CS, Ken Kennedy Institute \\ anastasios@rice.edu
}

\date{}

\begin{document}
\maketitle

\begin{abstract}
    When training neural networks with full-batch gradient descent (\gd) and step size~$\eta$, the largest eigenvalue of the Hessian---the sharpness $ S(\param)$---rises to $2/\eta$ and hovers there, a phenomenon termed the \emph{Edge of Stability} (\eos). \citet{damian2023selfstab} showed that this behavior is explained by a \emph{self-stabilization} mechanism driven by third-order structure of the loss, and that \gd{} implicitly follows projected gradient descent (\pgd) on the constraint $ S(\param)\leq 2/\eta$. For mini-batch stochastic gradient descent (\sgd), the sharpness stabilizes \emph{below}~$2/\eta$, with the gap widening as the batch size decreases; yet no theoretical explanation exists for this suppression.
  
    We introduce \emph{stochastic self-stabilization}, extending the self-stabilization framework to \sgd. Our key insight is that gradient noise injects variance into the oscillatory dynamics along the top Hessian eigenvector, strengthening the cubic sharpness-reducing force and shifting the equilibrium below $2/\eta$. Following the approach of \citet{damian2023selfstab}, we define \emph{stochastic predicted dynamics} relative to a moving projected gradient descent trajectory and prove a stochastic coupling theorem that bounds the deviation of \sgd{} from these predictions. We derive a closed-form equilibrium sharpness gap: $\Delta  S = \eta \beta \sigma_{ \v{u}}^{2}/(4\alpha)$, where $\alpha$ is the progressive sharpening rate, $\beta$ is the self-stabilization strength, and $\sigma_{ \v{u}}^{2}$ is the gradient noise variance projected onto the top eigenvector. This formula predicts that smaller batch sizes yield flatter solutions, recovers \gd{} when the batch equals the full dataset, and identifies \emph{Batch Sharpness} as the quantity that genuinely saturates at $2/\eta$.
\end{abstract}

\section{Introduction}\label{sec:intro}

\textbf{Motivation.}
The choice of optimizer and its hyperparameters profoundly affects both the training dynamics and the generalization performance of neural networks.
A line of empirical work~\citep{cohen2021eos,jastrzebski2019on,jastrzebski2020breakeven,xing2018walk} has established that gradient-based optimizers routinely operate in regimes that classical optimization theory deems unstable.
The classical \emph{descent lemma} guarantees monotonic loss decrease only when the step size satisfies $\eta < 2/ S(\param)$, where $ S(\param) = \lmax(\nabla^{2} L(\param))$ is the largest eigenvalue of the Hessian of the loss function $L(\cdot)$.
Yet in practice, training proceeds---and succeeds---even when this condition is violated.

For full-batch gradient descent (\gd), \citet{cohen2021eos} documented two striking phenomena.
First, \emph{progressive sharpening}: the sharpness steadily increases throughout early training until it reaches the instability threshold~$2/\eta$.
Second, the \emph{Edge of Stability} (\eos): once the sharpness reaches $2/\eta$, it ceases to grow and instead oscillates in a narrow band around this value, while the loss continues to decrease non-monotonically.

\citet{damian2023selfstab} provided a theoretical explanation for \eos{} by identifying a \emph{self-stabilization} mechanism.
Through a cubic Taylor expansion of the loss, they showed that as the iterates diverge along the top Hessian eigenvector due to instability, the third-order term acts as a restoring force that drives the sharpness back below $2/\eta$.
Their analysis does not merely describe the mechanism heuristically; they define a \emph{constrained trajectory} $\{\paramdagger_t\}$ via projected gradient descent (\pgd) on the set $\calM = \{\param :  S(\param)\leq 2/\eta, \ip{\nabla L(\param)}{ \v{u}(\param)}=0\}$, introduce \emph{predicted dynamics} $\hat{\v{v}}_t$ that capture the oscillatory deviation of \gd{} from $\paramdagger_t$, and prove a \emph{coupling theorem} showing that the true \gd{} trajectory tracks $\paramdagger_t + \hat{\v{v}}_t$ up to small errors.

\medskip \noindent \textbf{Problem Statement.}
For mini-batch \sgd, the situation is both empirically richer and theoretically more opaque.
\citet{cohen2021eos} observed that during \sgd{} training, the sharpness stabilizes \emph{below} the $2/\eta$ threshold, with the plateau level depending strongly on the batch size: smaller batches lead to lower sharpness.
More recently, \citet{andreyev2025eoss} identified \emph{Batch Sharpness}---the expected directional curvature of the mini-batch Hessian along the mini-batch gradient---as the quantity that actually saturates at $2/\eta$ for \sgd, terming this the \emph{Edge of Stochastic Stability} (\eoss).
Meanwhile, \citet{lee2023ias} introduced \emph{Interaction-Aware Sharpness} and showed that it characterizes the onset of oscillatory behavior in \sgd. Despite this progress, a fundamental question remains unanswered:
\begin{quote}
\emph{By what mechanism does \sgd{} self-stabilize, and why does the full-batch sharpness settle at a level below $2/\eta$ that depends on the batch size?}
\end{quote}
This question was explicitly posed as an open problem by both \citet{andreyev2025eoss} and \citet{lee2023ias}.

\medskip \noindent \textbf{Main Contributions.}
We answer this question by extending the self-stabilization framework of \citet{damian2023selfstab} from \gd{} to \sgd.
Our contributions are as follows.

\begin{enumerate}[leftmargin=2em,itemsep=4pt]
\item \textbf{Stochastic Self-Stabilization Mechanism (Section~\ref{sec:mechanism}).}
We show that gradient noise injects additional variance into the oscillatory dynamics along the top Hessian eigenvector.
Through the same cubic nonlinearity that drives \gd{} self-stabilization, this variance acts as an additional sharpness-suppressing force, shifting the equilibrium sharpness below $2/\eta$.

\item \textbf{Stochastic Predicted Dynamics and Coupling (Section~\ref{sec:main_results}).}
Following the methodology of \citet{damian2023selfstab}, we define \emph{enhanced stochastic predicted dynamics} that extend their deterministic predicted dynamics to include gradient noise, and we prove a \emph{stochastic coupling theorem} showing that the true \sgd{} trajectory is well-approximated by the constrained trajectory $\paramdagger_t$ plus the stochastic predicted dynamics.
All Taylor expansions are performed around the moving reference $\paramdagger_t$, ensuring that subleading terms remain controlled.

\item \textbf{Closed-Form Sharpness Gap (Section~\ref{sec:sharpness_gap}).}
From the stationarity conditions of the stochastic predicted dynamics, we derive a prediction for the equilibrium full-batch sharpness under \sgd:
\begin{equation}\label{eq:main_result_intro}
   S_{\mathrm{eq}}
   \approx
  \frac{2}{\eta}
   -
  \underbrace{\frac{\eta \beta \sigma_{ \v{u}}^{2}}{4 \alpha}}_{\displaystyle\Delta S},
\end{equation}
where $\alpha$ is the progressive sharpening coefficient, $\beta = \norm{\nabla S^{\perp}}^{2}$ is the self-stabilization strength, and $\sigma_{ \v{u}}^{2} =  \v{u}^{\top}\v{\Sigma}_{b}  \v{u}$ is the gradient noise variance projected onto the top eigenvector.

\item \textbf{Experimental Verification (Section~\ref{sec:experiments}).}
We verify all predictions on MLPs, CNNs, and ResNets trained on CIFAR-10 with varying batch sizes and step sizes, demonstrating quantitative agreement between the predicted and measured sharpness gaps.
\end{enumerate}

% ════════════════════════════════════════════════════════════
\section{Preliminaries}\label{sec:prelim}
% ════════════════════════════════════════════════════════════

We begin by establishing notation, reviewing the \gd{} self-stabilization theory of~\citet{damian2023selfstab}, and recalling the definition of Batch Sharpness.

\medskip \noindent \textbf{Setup.}
We consider the empirical loss $ L(\param) = \frac{1}{n}\sum_{i=1}^{n}\ell(\v{x}_{i};\param)$ over a training set $\{\v{x}_{i}\}_{i=1}^{n}$, where $\param\in\R^{d}$.
Stochastic gradient descent with constant step size $\eta>0$ and mini-batch size $b$ performs:
\begin{equation}\label{eq:sgd_update}
  \param_{t+1} = \param_t - \eta \v{g}_{B_t}(\param_t),
  \qquad
  \v{g}_{B_t}(\param) = \frac{1}{b}\sum_{i\in B_t}\nabla\ell(\v{x}_i;\param),
\end{equation}
where $B_t\subset\{1,\ldots,n\}$ with $|B_t|=b$ is drawn uniformly at random.
We write $\v{g}_{B_t}(\param_t) = \nabla L(\param_t)+\v{\xi}_t$ where $\v{\xi}_t$ is the gradient noise satisfying $\E{\v{\xi}_t|\param_t}=\v{0}$.
Full-batch \gd{} is the special case $b=n$, where $\v{\xi}_t=\v{0}$ identically.

\medskip \noindent \textbf{Sharpness and its gradient.}
When the top eigenvalue of $ \mat{H}(\param) = \nabla^{2} L(\param)$ is simple\footnote{An eigenvalue \(\lambda\) of a matrix \(\mat{A}\) is called \textit{simple} if it has \textit{algebraic multiplicity equal to 1}.}, the sharpness $ S(\param) = \lmax( \mat{H}(\param))$ is differentiable, and its gradient is given by a fundamental identity relating the third derivative of the loss to the sharpness gradient.

\begin{lemma}[Sharpness Gradient Identity~\citep{damian2023selfstab}]\label{lem:sharpness_grad}
If the top eigenvalue of $\nabla^{2} L(\param)$ is simple with eigenvector $ \v{u}(\param)$, then:
\begin{equation}\label{eq:sharpness_grad}
  \nabla S(\param) = \nabla^{3} L(\param)\bigl( \v{u}(\param), \v{u}(\param)\bigr).
\end{equation}
\end{lemma}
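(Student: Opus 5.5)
This is the standard first-order perturbation result for a simple eigenvalue (Hellmann--Feynman), applied to the Hessian. The plan has three steps: establish smoothness of the top eigenpair, differentiate the eigen-equation, and recognise the derivative of the Hessian as the third-derivative tensor.

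\textbf{Step 1: smoothness of the eigenpair.} Assume $L$ is $C^3$, so $\param\mapsto \mat{H}(\param)$ is $C^1$ into symmetric matrices. Because $S(\param)=\lmax(\mat{H}(\param))$ is simple, we apply the implicit function theorem to
\[
F(\param,\lambda,\v{u}) = \bigl((\mat{H}(\param)-\lambda \mat{I})\v{u},\ \tfrac12(\norm{\v{u}}^2-1)\bigr)
\]
at the point $(\param, S(\param), \v{u}(\param))$. The Jacobian of $F$ in $(\lambda,\v{u})$ is invertible exactly when the eigenvalue is simple. Concretely, $\mat{H}-\lambda\mat{I}$ is invertible on $\v{u}^\perp$, and the $\lambda$-column $-\v{u}$ together with the normalization row $\v{u}^\top$ covers the remaining direction. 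Hence there are locally $C^1$ functions $S(\param)$ and $\v{u}(\param)$ with $\norm{\v{u}}=1$. This also confirms that the local branch is the top eigenvalue, since eigenvalues vary continuously and simplicity separates the top one from the rest.

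\textbf{Step 2: differentiate the eigen-equation.} Fix a direction $\v{v}$ and differentiate $\mat{H}(\param)\v{u}(\param)=S(\param)\v{u}(\param)$ along it:
\[
(D\mat{H}[\v{v}])\v{u} + \mat{H}\,D\v{u}[\v{v}] = (DS[\v{v}])\v{u} + S\,D\v{u}[\v{v}].
\]
Take the inner product with $\v{u}$. Symmetry of $\mat{H}$ gives $\v{u}^\top\mat{H}=S\v{u}^\top$, so the two terms involving $D\v{u}[\v{v}]$ cancel. Using $\v{u}^\top\v{u}=1$, we obtain
\[
DS[\v{v}] = \v{u}^\top (D\mat{H}[\v{v}])\v{u}.
\]
Note that we do not even need the fact $\v{u}^\top D\v{u}[\v{v}]=0$ here.

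\textbf{Step 3: identify $D\mat{H}$.} By definition, $D\mat{H}[\v{v}] = \nabla^3 L(\param)[\v{v},\cdot,\cdot]$. Therefore
\[
DS[\v{v}] = \nabla^3L(\param)[\v{v},\v{u},\v{u}].
\]
By the symmetry of the third-derivative tensor, the vector representing this linear functional of $\v{v}$ is $\nabla^3L(\param)(\v{u},\v{u})$. This establishes the identity in \Cref{lem:sharpness_grad}.

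The only step that needs care is Step 1: justifying differentiability of $\v{u}(\param)$ and handling its sign ambiguity. The identity itself is quadratic in $\v{u}$, so it is insensitive to the choice of sign, and any local $C^1$ choice of $\v{u}$ suffices.
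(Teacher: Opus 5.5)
Your argument is correct: the implicit-function-theorem step justifies a local $C^1$ top eigenpair (the bordered Jacobian is invertible precisely because the eigenvalue is simple). Differentiating $\mat{H}\v{u}=S\v{u}$ and pairing with $\v{u}$ then gives $DS[\v{v}]=\nabla^3L(\param)[\v{v},\v{u},\v{u}]$. The paper gives no proof of its own and only cites \citet{damian2023selfstab}, where this is the same standard first-order (Hellmann--Feynman) eigenvalue-perturbation fact, so your route is essentially the intended one.
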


\medskip \noindent \textbf{Progressive sharpening.}
Following \citet{damian2023selfstab}, we define the \emph{progressive sharpening coefficient} as:
\begin{equation}\label{eq:alpha_def}
  \alpha(\param) := -\langle \nabla L(\param), \nabla S(\param)\rangle .
\end{equation}
When $\alpha(\param)>0$, the negative gradient direction $-\nabla L(\param)$ has a positive component along $\nabla S(\param)$, meaning that gradient steps tend to increase the sharpness.
This condition is empirically observed throughout most of training for standard neural networks~\citep{cohen2021eos,damian2023selfstab}.

\medskip \noindent \textbf{The constrained trajectory and self-stabilization under GD.}
We now describe the central construction of~\citet{damian2023selfstab} that serves as the foundation for our extension.
Define the \emph{stable set}:
\begin{equation}\label{eq:calM}
  \calM := \bigl\{\param :  S(\param)\leq 2/\eta \text{ and } \ip{\nabla L(\param)}{ \v{u}(\param)}=0\bigr\}.
\end{equation}
The \emph{constrained trajectory} $\{\paramdagger_t\}$ is defined by projected gradient descent on $\calM$:
\begin{equation}\label{eq:pgd_trajectory}
  \paramdagger_0 := \proj_\calM(\param_0),
  \qquad
  \paramdagger_{t+1} := \proj_\calM\bigl(\paramdagger_t - \eta\nabla L(\paramdagger_t)\bigr),
\end{equation}
where $\proj_\calM(\param):=\arg\min_{\param'\in\calM}\norm{\param-\param'}$ is the orthogonal projection.
\citet[Lemma~I.3]{damian2023selfstab} showed that $ S(\paramdagger_t)=2/\eta$ for all $t$, and the PGD update takes the approximate form $\paramdagger_{t+1}=\paramdagger_t - \eta P_{ \v{u}_t,\nabla S_t}^{\perp}\nabla L_t + O(\epsilon_t^{2}\eta\norm{\nabla L_t})$, where $P_{ \v{u}_t,\nabla S_t}^{\perp}$ projects out the $ \v{u}_t$ and $\nabla S_t$ directions.
That is, the constrained trajectory performs gradient descent with the sharpness-increasing components projected out.

\begin{lemma}[Properties of $\paramdagger_t$, {\citet[Lemma~5, 6]{damian2023selfstab}}]\label{lem:dagger_step}
Under Assumptions~\ref{ass:prog_sharp}--\ref{ass:non_worst} below, for all $t\leq\mathscr{T}$:
\begin{equation}\label{eq:dagger_approx}
   S(\paramdagger_t) = 2/\eta,
  \qquad
  \paramdagger_{t+1} = \paramdagger_t - \eta P_{ \v{u}_t,\nabla S_t}^{\perp}\nabla L_t + O(\epsilon_t^{2}\cdot\eta\norm{\nabla L_t}).
\end{equation}
Moreover, $ L(\paramdagger_{t+1})\leq L(\paramdagger_t)-\Omega(\eta\norm{P_{ \v{u}_t,\nabla S_t}^{\perp}\nabla L_t}^{2})$ (descent lemma for $\paramdagger_t$).
\end{lemma}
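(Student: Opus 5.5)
The plan is to recover the three claims from a local analysis of the projection $\proj_\calM$ near a point of $\calM$ where the constraint $S\le 2/\eta$ is active, working on the window $t\le\mathscr{T}$ where the assumptions (simple top eigenvalue, progressive sharpening $\alpha>0$, bounded third derivatives, non-worst-case alignment) give $\norm{\nabla S_t}$ bounded away from $0$ and $\epsilon_t$ small.

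\textbf{Step 1: $S(\paramdagger_t)=2/\eta$.} I will argue by induction. At $t=0$, $\param_0$ is taken with $S(\param_0)\ge 2/\eta$ (entering EoS), and the nearest point of $\calM$ lies on the boundary, so $S=2/\eta$. For the inductive step, write $\param' = \paramdagger_t-\eta\nabla L_t$ and Taylor expand to get $S(\param') = S(\paramdagger_t) + \eta\alpha(\paramdagger_t) + O(\eta^2\norm{\nabla L_t}^2)$. Since $\alpha>0$ dominates the remainder (the progressive sharpening assumption), $S(\param')>2/\eta$. Hence $\param'\notin\calM$. The projection onto the closed set $\calM$, which is locally a codimension-$\le 2$ manifold with boundary, therefore lands on the face $\{S=2/\eta,\ \ip{\nabla L}{\v{u}}=0\}$. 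The interior face is excluded because it is separated from $\param'$ by the active sharpness constraint.

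\textbf{Step 2: first-order form of the step.} On the active face, $\calM$ is locally the zero set of $F(\param)=(S(\param)-2/\eta,\ \ip{\nabla L(\param)}{\v{u}(\param)})$. The Jacobian rows are $\nabla S$ and $\nabla(\ip{\nabla L}{\v u}) = \mat H\v u + (\nabla\v u)^\top\nabla L = S\v u + O(\norm{\nabla L})$. These are linearly independent because $\nabla S$ is not parallel to $\v u$ (the non-worst-case assumption), so the tangent space is the orthogonal complement of $\mathrm{span}\{\v u_t,\nabla S_t\}$ up to $O(\epsilon_t)$ tilt. Projecting a displacement $\delta=-\eta\nabla L_t$ of size $\eta\norm{\nabla L_t}$ onto a manifold with curvature controlled by $\epsilon_t$ gives $P^\perp_{\v u_t,\nabla S_t}\delta$ plus a second-order correction. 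To compute this, I will solve the Lagrange/Newton system $\param=\param'-J^\top\lambda$ with $F(\param)=0$ via the implicit function theorem. The main obstacle is showing that the error is $O(\epsilon_t^2\eta\norm{\nabla L_t})$ rather than $O(\epsilon_t\eta\norm{\nabla L_t})$. The first-order tilt terms in the normal directions cancel against the constraint equations, because the normal component of $\delta$ is absorbed exactly by the multipliers. I will try expanding $F$ to second order and bounding the curvature terms by the third- and fourth-derivative constants appearing in $\epsilon_t$. Following \citet[Lemma~I.3]{damian2023selfstab}, the same argument also yields that $\proj_\calM$ is well-defined, i.e.\ unique, in this neighborhood.

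\textbf{Step 3: descent.} Set $\v d_t = P^\perp_{\v u_t,\nabla S_t}\nabla L_t$. By $L$-smoothness along the segment, the change in loss is
\[
L(\paramdagger_{t+1}) - L(\paramdagger_t) = -\eta\ip{\nabla L_t}{\v d_t} + \tfrac{\eta^2}{2}\v d_t^\top\mat H\v d_t + O(\epsilon_t^2\eta\norm{\nabla L_t}^2).
\]
Three facts control the terms. First, $\ip{\nabla L_t}{\v d_t}=\norm{\v d_t}^2$. Second, since $\v d_t\perp\v u_t$, the quadratic term sees only eigenvalues of $\mat H$ on $\v u_t^\perp$, which are at most $(2-c)/\eta$ for some $c>0$ by the eigengap assumption. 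Third, the projection $P^\perp$ does not annihilate most of $\nabla L_t$, because $\ip{\nabla L_t}{\v u_t}=0$ on $\calM$ and the assumptions keep $\nabla L_t$ from aligning with $\nabla S_t$; hence $\norm{\nabla L_t}\lesssim\norm{\v d_t}$ and the error term can be absorbed. Combining these gives $L(\paramdagger_{t+1})\le L(\paramdagger_t)-\Omega(\eta\norm{\v d_t}^2)$.
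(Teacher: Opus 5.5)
The paper gives no proof of this lemma; it is imported from \citet{damian2023selfstab}. Your proposal therefore has to stand on its own, and Step~3 has a genuine gap. Your third fact, $\norm{\nabla L_t}\le C\norm{\v{d}_t}$ because ``the assumptions keep $\nabla L_t$ from aligning with $\nabla S_t$'', is supplied by no assumption, and it actually runs against them. Since $\ip{\nabla L_t}{\v{u}_t}=0$, you have exactly $\norm{\v{d}_t}^2=\norm{\nabla L_t}^2-\alpha_t^2/\beta_t$. Your claim is therefore equivalent to an \emph{upper} bound $\alpha_t\le(1-c)\norm{\nabla L_t}\norm{\nabla S_t^{\perp}}$. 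Assumption~\ref{ass:prog_sharp} is instead a \emph{lower} bound: it forces the cosine between $\nabla L_t$ and $-\nabla S_t^{\perp}$ to be at least $c_\alpha$, and nothing prevents that cosine from tending to $1$. In that regime $\v{d}_t\to\v{0}$ while your remainder stays of size $\epsilon_t^2\eta\norm{\nabla L_t}^2$. So your expansion only proves $L(\paramdagger_{t+1})\le L(\paramdagger_t)-(1-\tfrac{c_{\mathrm{gap}}}{2})\eta\norm{\v{d}_t}^2+O(\epsilon_t^2\eta\norm{\nabla L_t}^2)$. To do better you must either add a non-alignment hypothesis or argue from optimality of the projection rather than from the step formula. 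Let $\v{\Delta}_t=\paramdagger_{t+1}-\paramdagger_t$. The variational inequality for projection onto a set of reach $\Omega(1/(\eta\rho_3))$, $\ip{\param'-\paramdagger_{t+1}}{\paramdagger_t-\paramdagger_{t+1}}\le O(\eta\rho_3)\norm{\param'-\paramdagger_{t+1}}\norm{\v{\Delta}_t}^2$, gives $\eta\ip{\nabla L_t}{\v{\Delta}_t}\le-(1-O(\epsilon_t^2))\norm{\v{\Delta}_t}^2$. Combine this with $|\ip{\v{u}_t}{\v{\Delta}_t}|=O(\epsilon_t^2)\norm{\v{\Delta}_t}$ (from the equality constraint) and the eigengap, and you get monotone descent by $\Omega(\norm{\v{\Delta}_t}^2/\eta)$. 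Step~2 converts this to $\Omega(\eta\norm{\v{d}_t}^2)$ whenever $\norm{\v{d}_t}\gg\epsilon_t^2\norm{\nabla L_t}$. Below that scale you still need either an extra hypothesis or an additive $O(\epsilon_t^4\eta\norm{\nabla L_t}^2)$ term.

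Step~1 also contains a non sequitur. $S(\param')>2/\eta$ shows $\param'\notin\calM$, but $\calM$ is a hypersurface with boundary, so $\param'$ is off it anyway. Its nearest point could sit in the relative interior $\{S<2/\eta\}$ if returning to $\{\ip{\nabla L}{\v{u}}=0\}$ used up the excess $\eta\alpha_t$; that return is essentially a move along $\v{u}_t$, which changes $S$ at rate $\kappa_t$. ``Separated by the active constraint'' is exactly what has to be proved. The missing computation is that the gradient step preserves the equality constraint to first order: $\ip{\v{u}_t}{\nabla L_t}=0$ and $\v{u}_t^\top\mat{H}_t\nabla L_t=S(\paramdagger_t)\ip{\v{u}_t}{\nabla L_t}=0$. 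Hence $\param'$ lies within $O(\epsilon_t^2\eta\norm{\nabla L_t})$ of that hypersurface, and the return changes $S$ by only $O(\rho_3\epsilon_t^2\eta\norm{\nabla L_t})\ll\eta\alpha_t$. Local uniqueness of the projection within $2\eta\norm{\nabla L_t}$ of $\paramdagger_t$ then excludes the interior. This argument, and the comparison of $\eta\alpha_t$ with the Taylor remainder, both need the quantitative bound $\alpha_t\ge c_\alpha\norm{\nabla L_t}\norm{\nabla S_t^{\perp}}$ with $\norm{\nabla S_t^{\perp}}\ge c_\beta\rho_3$. (The remainder is $O(\eta^3\rho_3^2\norm{\nabla L_t}^2)$ once you include the $O(\eta\rho_3^2)$ Lipschitz constant of $\nabla S$.) Note that Assumption~\ref{ass:prog_sharp} as printed has a minus sign, under which $\alpha_t<0$ is allowed and $S(\paramdagger_{t+1})<2/\eta$ can occur, so you must read it with a plus sign. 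Your base case ($S(\param_0)\ge 2/\eta$) is an added hypothesis, not a consequence of the assumptions. Finally, in Step~2 the normal $\mat{H}\v{u}+(\nabla\v{u})^\top\nabla L$ is tilted from $S\,\v{u}_t$ by only $O(\eta^2\rho_3\norm{\nabla L_t})=O(\epsilon_t^2)$, not $O(\epsilon_t)$. So there is no first-order cancellation to chase: the $\epsilon_t^2$ error comes directly from this tilt and from the $O(\eta\rho_3)$ curvature of both constraint surfaces over a step of length $\eta\norm{\nabla L_t}$.
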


The key idea is that progressive sharpening ($\alpha_t>0$) forces the PGD constraint to be active, so $ S(\paramdagger_t)=2/\eta$ exactly and the projection removes the $ \v{u}_t$ and $\nabla S_t$ components from the gradient step. Based on the reference trajectory obtained from the projected gradient descent, we can define the following landscape quantities that will be used in our analysis.

The \gd{} trajectory $\{\param_t\}$ oscillates around $\{\paramdagger_t\}$.
The displacement $\v{v}_t := \param_t - \paramdagger_t$ is captured by the \emph{predicted dynamics} $\hat{\v{v}}_t$, which track the 2D oscillatory system in the $( \v{u}_t,\nabla S_t^{\perp})$ directions (see Section~\ref{sec:mechanism} for a heuristic preview).
The main result of~\citet{damian2023selfstab} is a \emph{coupling theorem}: for $T=O(\epsilon^{-1})$ steps, $\norm{\v{v}_t-\hat{\v{v}}_t}=O(\epsilon\delta)$, and the loss, sharpness, and deviation from $\paramdagger_t$ are all predicted by $\hat{\v{v}}_t$.

\medskip \noindent \textbf{Background: self-stabilization under \gd.} We briefly summarize the self-stabilization mechanism of \citet{damian2023selfstab} for full-batch \gd, as our extension to \sgd{} follows the same conceptual structure.
Consider \gd{} with step size $\eta$: $\param_{t+1} = \param_{t} - \eta\nabla L(\param_{t})$.
Suppose the iterate reaches a point $\paramstar$ where $ S(\paramstar)=2/\eta$.
Define the displacements:
\[
  x_{t} := \langle  \v{u}, \param_{t}-\paramstar\rangle ,
  \qquad
  y_{t} := \langle \nabla S, \param_{t}-\paramstar\rangle ,
\]
where $ \v{u}= \v{u}(\paramstar)$ and $\nabla S = \nabla S(\paramstar)$ are evaluated at $\paramstar$.
Note that $y_{t}$ approximates the sharpness deviation: $ S(\param_{t}) \approx 2/\eta + y_{t}$; this decomposition arises from the specific dynamics at the edge of stability. When gradient descent uses step size \(\eta\) and the sharpness \( S(\param_t)\) approaches \(2/\eta\), the iterates begin to oscillate rather than converge monotonically, and empirically the sharpness hovers around this threshold value. \citet{damian2023selfstab} models this by writing \( S(\param_t) = 2/\eta + y_t\) where \(2/\eta\) is the baseline ``target'' sharpness that characterizes the edge-of-stability regime, and \(y_t\) captures the time-varying deviations that drive the oscillatory behavior and implicit bias toward flatter minima.

% ════════════════════════════════════════════════════════════
\section{The Four Stages of EoS under SGD}\label{sec:mechanism}
% ════════════════════════════════════════════════════════════
In this section, we give a heuristic derivation of the four-stage dynamic of the sharpness under SGD. The four-stage description generalizes the setting in \citet{damian2023selfstab}, and reduced to their setting by letting the noise to be zero.
For the simplicity of the demonstration, the below derivation uses a fixed reference $\paramstar$ and ignores the time dependence of $\alpha,\beta,\delta$.
The rigorous analysis in Section~\ref{sec:main_results} addresses both issues.

\medskip \noindent \textit{Setup and Reference Point.}
Suppose the \sgd{} trajectory first reaches a point where the sharpness is approximately $2/\eta$.
Let $\paramstar$ denote the projection of this point onto the manifold $\calM := \{\param :  S(\param) \leq 2/\eta, \langle \nabla L(\param),  \v{u}(\param)\rangle =0\}$.
We fix the reference point $\paramstar$ and define the displacement vector:
\begin{equation}\label{eq:displacement}
  \v{v}_{t} := \param_{t} - \paramstar,
\end{equation}
with projections $x_{t} := \langle  \v{u}, \v{v}_{t}\rangle $ and $y_{t} := \langle \nabla S, \v{v}_{t}\rangle$, where $ \v{u} =  \v{u}(\paramstar)$ and $\nabla S = \nabla S(\paramstar)$.

\medskip \noindent \textit{Stage 1: Progressive Sharpening under SGD.}
When $\v{v}_{t}$ is small, $\nabla L(\param_{t}) \approx \nabla L(\paramstar) = \nabla L$.
The \sgd{} update is $\param_{t+1} = \param_{t} - \eta(\nabla L + \v{\xi}_{t})$, where $\v{\xi}_{t}=\v{g}_{B_{t}}(\param_{t})-\nabla L(\param_{t})$ satisfies $\E{\v{\xi}_{t}}=\v{0}$.
Therefore,
\begin{equation}\label{eq:stage1_sgd}
  \E{y_{t+1} - y_{t}}
  = \langle \nabla S, \E{\param_{t+1}-\param_{t}} \rangle 
  = -\eta \langle \nabla L, \nabla S \rangle
  = \eta\alpha.
\end{equation}
The \emph{expected} rate of progressive sharpening is identical to \gd.

\medskip \noindent \textit{Stage 2: Blowup under SGD.}
The displacement $x_{t}$ evolves as:
\begin{equation}\label{eq:x_update_sgd}
  x_{t+1}
  = \langle  \v{u}, \param_{t+1}-\paramstar\rangle
  = x_{t} - \eta \langle  \v{u}, \v{g}_{B_{t}}(\param_{t}) \rangle = x_{t} - \eta \langle  \v{u}, \nabla  L(\param_t) \rangle - \eta\zeta_t.
\end{equation}
where $\zeta_t = \langle  \v{u}, \v{\xi}_t\rangle$ represents the noise projected onto the top eigenvector of the Hessian.
To evaluate $\langle  \v{u}, \nabla  L(\param_t) \rangle$, we Taylor-expand $\nabla  L$ around $\paramstar$.
To leading order (see Appendix~\ref{app:proofs} for the full expansion),
\begin{equation}\label{eq:taylor_g}
  \langle  \v{u},  \nabla  L(\param_t)\rangle
  \approx  S(\paramstar) x_{t} + y_{t} x_{t} = \left(\frac{2}{\eta} + y_t\right)x_t,
\end{equation}
where we used $ S(\paramstar)=2/\eta$.
Substituting~\eqref{eq:taylor_g} into~\eqref{eq:x_update_sgd}:
\begin{equation}\label{eq:x_dynamics_sgd}
  x_{t+1}
  = -(1+\eta y_{t}) x_{t} - \eta \zeta_{t}.
\end{equation}
To recover the heuristic dynamic of \gd{} in \cite{damian2023selfstab}, we only need to set the additive noise term $-\eta\zeta_{t}$ to zero.
When $y_{t}>0$, the deterministic part $-(1+\eta y_{t})x_{t}$ causes $|x_{t}|$ to grow, but the noise $\zeta_{t}$ continuously perturbs the trajectory.

\medskip \noindent \textit{Stage~3 (Self-Stabilization under SGD).} Similar to \cite{damian2023selfstab}, expanding the cubic term in the Taylor expansion gives
\[
    \frac{1}{2}\nabla^3L(\param_t - \param^\star, \param_t - \param^\star) \approx \frac{x_t^2}{2}\nabla^3L( \v{u}, \v{u}) = \frac{x_t^2}{2}\nabla S
\]
When $x_t^2$ is large, the additional cubic term causes self-stabilization effect
\[
\E{y_{t+1}-y_{t}}= \eta\alpha + \left\langle \nabla S, \frac{x_t^2}{2}\nabla S\right\rangle =  \eta\left(\alpha-\frac{\beta}{2}\E{x_{t}^{2}}\right)
\]
However, $x_t$ is random with noise-inflated second moment. Thus, this term should have a magnitude that grows with the noise scale.

\medskip \noindent \textit{Stage~4 (Role of Noise Variance).} In this stage, $x_t$ grows to a large enough magnitude that drives the sharpness back to stability condition. In particular, the dynamic of $x_t$ is governed by 
\begin{equation}\label{eq:heur_noise_effect}
\E{x_{t+1}^{2}}=\E{(1+\eta y_{t})^{2}x_{t}^{2}}+\eta^{2}\sigma_{ \v{u}}^{2}
\end{equation}
Apply a mean-field approximation that decouples the randomness in $x_t$ and $y_t$ and solve for the sharpness at equilibrium gives
\begin{equation}\label{eq:eq_sharpness}
  \bar{y}=-\frac{\eta\beta\sigma_{ \v{u}}^{2}}{4\alpha},
  \qquad
  \bar{ S}\approx\frac{2}{\eta}-\frac{\eta\beta\sigma_{ \v{u}}^{2}}{4\alpha}
  =\frac{2}{\eta}-\Delta S.
\end{equation}
The gradient noise projection $\zeta_t$ introduces an additional growth to the sequence $\E{x_t^2}$. In order to stabilize,  the $(1 + \eta y_t)$ has to be smaller than $1$ to contract the magnitude of the first term in (\ref{eq:heur_noise_effect}). Thus, to maintain equilibrium, the system shifts to $\bar{y}<0$; and $\Delta S\propto\sigma_{ \v{u}}^{2}$, resulting in a sharpness gap.

% ════════════════════════════════════════════════════════════
\section{Main Theoretical Results}\label{sec:main_results}
% ════════════════════════════════════════════════════════════

In this section, we derive a rigorous approximation of the heuristic dynamic in Section~\ref{sec:mechanism}. Similar to \cite{damian2023selfstab}, our analysis also relies on a reference trajectory induced by the projected gradient descent defined in (\ref{eq:calM}). We make the following definitions along the reference trajectory.

\begin{definition}[Taylor Expansion Quantities at $\paramdagger_t$]\label{def:taylor_quant}~
\begin{gather*}
  \nabla L_t := \nabla L(\paramdagger_t),
  \quad
   \mat{H}_t := \nabla^{2} L(\paramdagger_t),
  \quad
   \v{u}_t :=  \v{u}(\paramdagger_t), \\
  \nabla S_t := \nabla S(\paramdagger_t),
  \quad
  \nabla S_t^{\perp} := P_{ \v{u}_t}^{\perp}\nabla S(\paramdagger_t),
  \quad
  \kappa_t := \ip{\nabla S_t}{ \v{u}_t}.
\end{gather*}
\end{definition}

\begin{definition}[Landscape Parameters]\label{def:alpha_beta_epsilon}
Define
\begin{align*}
  \alpha_t &:= -\ip{\nabla L_t}{\nabla S_t}, &
  \beta_t &:= \norm{\nabla S_t^{\perp}}^{2}, &
  \delta_t &:= \sqrt{2\alpha_t/\beta_t}, \\
  \epsilon_t &:= \eta\sqrt{\alpha_t}, &
  \delta &:= \sup_t\delta_t, &
  \epsilon &:= \sup_t\epsilon_t.
\end{align*}
\end{definition}

The coefficient $\alpha_t$ is the \emph{progressive sharpening force}: when $\alpha_t>0$, unconstrained gradient descent at $\paramdagger_t$ would increase the sharpness.
The coefficient $\beta_t$ is the \emph{strength of the self-stabilization force}: larger $\beta_t$ means stronger cubic restoring.
The scale $\delta_t$ sets the amplitude of the oscillations and serves as the fixed-point value of $|x_t|$ in the simplified dynamics.

\begin{definition}[Sharpness Propagation Factor]\label{def:beta_st}
Define $\mat{A}_k := (I-\eta \mat{H}_k)P_{ \v{u}_k}^{\perp}$ and
\begin{equation}\label{eq:beta_st}
  \beta_{s\rightarrow  t} := (\nabla S_{t+1}^{\perp})^{\top}\left[\prod_{k=t}^{s+1}\mat{A}_k\right]\nabla S_s^{\perp}.
\end{equation}
\end{definition}

The factor $\beta_{s\rightarrow  t}$ has a concrete interpretation: it measures the change in sharpness between times $s$ and $t+1$ resulting from a displacement of $\nabla S_s^{\perp}$ at time $s$.
At time $s$, a displacement in the $\nabla S_s^{\perp}$ direction increases the sharpness.
This displacement then evolves under the linearized dynamics, being multiplied by $(I-\eta \mat{H}_k)P_{ \v{u}_k}^{\perp}$ at each step $k$.
After $t-s$ steps, the resulting displacement is $[\prod_{k=t}^{s+1}\mat{A}_k]\nabla S_s^{\perp}$, and its effect on the sharpness at time $t+1$ is measured by the inner product with $\nabla S_{t+1}^{\perp}$.
When the landscape quantities are constant (as in the heuristic Section~\ref{sec:mechanism}), $\beta_{s\rightarrow  t}\equiv\beta$. 
Our theoretical result depend on the following assumptions.
\begin{assumption}[Eigengap]\label{ass:eigengap}
For some constant $c_{\mathrm{gap}}<2$, $\ltwo( \mat{H}_t) < c_{\mathrm{gap}}/\eta$ for all $t\leq\mathscr{T}$.
\end{assumption}

\begin{assumption}[Higher-Order Derivative Regularity]\label{ass:sample_reg}
Let $\rho_3,\rho_4$ be the minimum constant such that there exists a neighborhood $\mathcal{N}$ of $\{\paramdagger_t\}_{t\le\mathscr{T}}$ where every sample loss $\ell_i$ is $C^{4}$ on $\mathcal{N}$, and
\begin{itemize}
    \item $\norm{\nabla^{3}L(\param)}_{\op}\le\rho_{3}$ for all $\param\in\mathcal{N}$ and $i$.
    \item $\norm{\nabla^{3}L(\param)-\nabla^{3}L(\param')}_{\op} \leq \rho_{4}\norm{\param-\param'}$ for all $\param,\param'\in\mathcal{N}$ and $i$.
\end{itemize}
\end{assumption}

\begin{assumption}[Progressive Sharpening]\label{ass:prog_sharp}
There exists some constant $c_{\alpha},c_{\beta}\geq 0$ such that for all $t\leq\mathscr{T}$,
$\alpha_t\ge -c_{\alpha}\norm{\nabla L_t}\norm{\nabla S_t^{\perp}}$, and $\norm{\nabla S_t^{\perp}}\ge c_{\beta}\rho_3$.
\end{assumption}

\begin{assumption}[Non-Worst-Case Bounds]\label{ass:non_worst}
For all $t\leq\mathscr{T}$ and all $\v{v},\v{w}\perp \v{u}_t$:
\[
  \frac{\norm{\nabla^{3} L_t(\v{v},\v{w})}}{\rho_3\norm{\v{v}}\norm{\v{w}}}
  \le C_{\mathrm{nw}}\epsilon,
  \qquad
  \frac{|\v{v}^{\top} \mat{H}_t\v{v}|}{\norm{ \mat{H}_t}\norm{\v{v}}^{2}}
  \le C_{\mathrm{nw}}\epsilon,
  \qquad
  \frac{|\lambda_{\min}( \mat{H}_t)|}{\norm{ \mat{H}_t}}
  \le C_{\mathrm{nw}}\epsilon.
\]
\end{assumption}

\begin{assumption}[Noise Projection Non-Degeneracy]\label{ass:nondegen}
$\sigma_{ \v{u},t}^{2}>0$ for all $t$.
\end{assumption}

\begin{assumption}[Scaling conditions.]\label{ass:scaling}
We assume that $\mathscr{T}\le \frac{c_{\mathscr{T}}}{\epsilon}$ for some constant $c_{\mathscr{T}}$, $\E{\|\v{\xi}_t\|^{2}}\leq O\left(\frac{\epsilon\delta^2}{\eta^2}\right)$, and $\rho_4\leq O(\eta\rho_3^2)$.
\end{assumption}
Here, Assumption~\ref{ass:eigengap}-\ref{ass:non_worst} are inherited from \citet{damian2023selfstab}. Assumption~\ref{ass:nondegen} assumes that the gradient noise vector has a non-vanishing component when projected onto the top eigenvector of the Hessian. In later analysis we will see that this quantity determines the sharpness gap, and Assumption~\ref{ass:nondegen} is made to make sure that the sharpness gap is non-trivial. Assumption~\ref{ass:scaling} contains requirements on $\mathscr{T}, \rho_4$, which are inherited from \citet{damian2023selfstab}. Furthermore, it also requires an upper bound on the noise variance $\E{\|\v{\xi}_t\|^{2}}\leq O\left(\frac{\epsilon\delta^2}{\eta^2}\right)$. This is to make sure that the noise scale does not drive the dynamic of the sharpness into the chaotic regime where the training could potentially blow-up. With these assumptions, we are ready to state our theoretical results.

\subsection{Stochastic Coupling Theorem}\label{sec:stoch_pred}
We begin by defining the displacement from the constrained trajectory is $\v{v}_t := \param_t - \paramdagger_t$, with projections:
\begin{equation}\label{eq:xt_yt_def}
  x_t := \ip{ \v{u}_t}{\v{v}_t},
  \qquad
  y_t := \ip{\nabla S_t^{\perp}}{\v{v}_t}.
\end{equation}
The dynamic of $\v{t}$, and thus $x_t$ and $y_t$, during training are chaotic and hard to analyze. To cope with this difficulty, we define a predicted dynamic $\hat{v}_t$ and correspondingly $\hat{x}_t$ and $\hat{y}_t$ that approximate $\v{v}_{t}$ and $x_t,y_t$ reasonably well, but at the same time enjoys easier interpretability. 
\begin{definition}[Enhanced Stochastic Predicted Dynamics]\label{def:stoch_pred}
Define $\hat{\v{v}}_0 = \v{v}_0$, and for $t\geq 0$:
\begin{align}
  \hat{\v{v}}_{t+1}
  &= P_{ \v{u}_{t+1}}^{\perp}\left[(I-\eta \mat{H}_t)P_{ \v{u}_t}^{\perp}\hat{\v{v}}_t
    + \eta\nabla S_t^{\perp}\tfrac{\delta_t^{2}-\hat{x}_t^{2}}{2}\right]
  - \left((1+\eta\hat{y}_t)\hat{x}_t+\tfrac{\eta}{2}\kappa_t\hat{x}_t^{2}\right) \v{u}_{t+1}
  - \eta\v{\xi}_t,
  \label{eq:stoch_pred_full}
\end{align}
where $\hat{x}_t:=\ip{ \v{u}_t}{\hat{\v{v}}_t}$ and $\hat{y}_t:=\ip{\nabla S_t^{\perp}}{\hat{\v{v}}_t}$.
\end{definition}

\begin{definition}[Deterministic and Stochastic Step Map] The deterministic part of the step map, namely $\estep_t(\v{v})$, is
\begin{align*}
  P_{ \v{u}_{t+1}}^{\perp}\estep_t(\v{v})
  &= P_{ \v{u}_{t+1}}^{\perp}\left[\mat{A}_t\v{v}+\eta\nabla S_t^{\perp}\tfrac{\delta_t^{2}-x^{2}}{2}\right],\\
  \ip{ \v{u}_{t+1}}{\estep_t(\v{v})}
  &= -(1+\eta y)x-\tfrac{\eta}{2}\kappa_t x^{2},
\end{align*}
where $x=\ip{ \v{u}_t}{\v{v}}$, $y=\ip{\nabla S_t^{\perp}}{\v{v}}$.
The stochastic step map is $\estep_t^{S}(\v{v}):=\estep_t(\v{v})-\eta\v{\xi}_t$.
\end{definition}

\begin{lemma}[Projected Dynamics for $\hat{x},\hat{y}$]\label{lem:stoch_pred_xy}
Define $\mathcal{I}_{t+1}:=(\nabla S_{t+1}^{\perp})^{\top}[\prod_{k=t}^{0}\mat{A}_k]\hat{\v{v}}_0^{\perp}$.
Then:
\begin{align}
  \hat{x}_{t+1} &= -(1+\eta\hat{y}_t)\hat{x}_t - \tfrac{\eta}{2}\kappa_t\hat{x}_t^{2} - \eta\zeta_t,
  \label{eq:xhat_update}\\
  \hat{y}_{t+1} &= \mathcal{I}_{t+1} + \eta\sum_{s=0}^{t}\beta_{s\rightarrow  t}\frac{\delta_s^{2}-\hat{x}_s^{2}}{2}
  - \eta\sum_{s=0}^{t}\langle \v{\gamma}_{s\rightarrow  t},\v{\xi}_s\rangle,
  \label{eq:yhat_update}
\end{align}
where $\v{\gamma}_{s\rightarrow  t}:=(\nabla S_{t+1}^{\perp})^{\top}[\prod_{k=t}^{s+1}\mat{A}_k]P_{ \v{u}_{s+1}}^{\perp}$.
\end{lemma}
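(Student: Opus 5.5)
The plan is to read the identities off Definition~\ref{def:stoch_pred} by taking inner products with $\v{u}_{t+1}$ and with $\nabla S_{t+1}^{\perp}$, and then to unroll the resulting linear recursion for the perpendicular part.

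\textbf{The $\hat{x}$ update.} Take the inner product of \eqref{eq:stoch_pred_full} with $\v{u}_{t+1}$. The bracketed term is killed by $P_{\v{u}_{t+1}}^{\perp}$, and $\langle \v{u}_{t+1},\v{u}_{t+1}\rangle=1$. The noise term contributes $-\eta\langle\v{u}_{t+1},\v{\xi}_t\rangle$. This gives \eqref{eq:xhat_update} once $\zeta_t$ is read as the noise projected onto $\v{u}_{t+1}$.

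This is the one point where I expect a notational obstacle. Section~\ref{sec:mechanism} defines $\zeta_t=\langle\v{u},\v{\xi}_t\rangle$ at a fixed reference point. In the moving-frame setting the consistent choice is $\zeta_t:=\langle\v{u}_{t+1},\v{\xi}_t\rangle$, and I would state this explicitly. Its conditional mean is still zero, because $\v{u}_{t+1}$ is deterministic given the PGD trajectory. Its variance is $\sigma^2_{\v{u},t+1}$.

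\textbf{The perpendicular component.} Set $\hat{\v{v}}_t^{\perp}:=P_{\v{u}_t}^{\perp}\hat{\v{v}}_t$ and apply $P_{\v{u}_{t+1}}^{\perp}$ to \eqref{eq:stoch_pred_full}. The $\v{u}_{t+1}$ term vanishes, and $(I-\eta\mat{H}_t)P_{\v{u}_t}^{\perp}\hat{\v{v}}_t=\mat{A}_t\hat{\v{v}}_t^{\perp}$, using $P^{\perp}$ idempotent. This gives
\[
\hat{\v{v}}_{t+1}^{\perp}=P_{\v{u}_{t+1}}^{\perp}\Bigl[\mat{A}_t\hat{\v{v}}_t^{\perp}+\eta\nabla S_t^{\perp}\tfrac{\delta_t^2-\hat{x}_t^2}{2}-\eta\v{\xi}_t\Bigr].
\]

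The projection $P_{\v{u}_{t+1}}^{\perp}$ sitting in front of $\mat{A}_t\hat{\v{v}}_t^{\perp}$ is harmless at the next step, since $\mat{A}_{t+1}=(I-\eta\mat{H}_{t+1})P_{\v{u}_{t+1}}^{\perp}$ already begins with that projection. Likewise $\nabla S_s^{\perp}=P_{\v{u}_s}^{\perp}\nabla S_s$, so $P_{\v{u}_{s+1}}^{\perp}\nabla S_s^{\perp}$ is absorbed by $\mat{A}_{s+1}$.

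\textbf{Unrolling.} By induction on $t$,
\[
\hat{\v{v}}_{t+1}^{\perp}=P_{\v{u}_{t+1}}^{\perp}\Bigl(\Bigl[\textstyle\prod_{k=t}^{0}\mat{A}_k\Bigr]\hat{\v{v}}_0^{\perp}+\eta\sum_{s=0}^{t}\Bigl[\prod_{k=t}^{s+1}\mat{A}_k\Bigr]\Bigl(\nabla S_s^{\perp}\tfrac{\delta_s^2-\hat{x}_s^2}{2}-P_{\v{u}_{s+1}}^{\perp}\v{\xi}_s\Bigr)\Bigr),
\]
with the empty product for $s=t$ equal to $I$.

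For the $s=t$ terms the convention must be checked carefully. There is no $\mat{A}$ factor to absorb the projection, so $\nabla S_t^{\perp}$ appears bare, whereas the noise keeps $P_{\v{u}_{t+1}}^{\perp}$. This matches $\v{\gamma}_{t\to t}=(\nabla S_{t+1}^{\perp})^{\top}P_{\v{u}_{t+1}}^{\perp}$, which is harmless since $\nabla S_{t+1}^{\perp}\perp\v{u}_{t+1}$. For the deterministic term, $(\nabla S_{t+1}^{\perp})^{\top}P_{\v{u}_{t+1}}^{\perp}\nabla S_t^{\perp}=(\nabla S_{t+1}^{\perp})^{\top}\nabla S_t^{\perp}=\beta_{t\to t}$ for the same reason.

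\textbf{The $\hat{y}$ update.} Because $\nabla S_{t+1}^{\perp}\perp\v{u}_{t+1}$, we have $\hat{y}_{t+1}=\langle\nabla S_{t+1}^{\perp},\hat{\v{v}}_{t+1}^{\perp}\rangle$. Pairing the unrolled expression with $\nabla S_{t+1}^{\perp}$ lets the outer projection drop, and the three groups of terms match the definitions of $\mathcal{I}_{t+1}$, $\beta_{s\to t}$ and $\v{\gamma}_{s\to t}$ term by term. This yields \eqref{eq:yhat_update}.

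The main care points are the projection bookkeeping, namely placing the $P_{\v{u}_{k}}^{\perp}$ factors correctly at the endpoint $s=t$, and fixing the definition of $\zeta_t$ in the moving frame.
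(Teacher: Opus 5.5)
Your proposal is correct and follows essentially the same route as the paper. The paper also takes the inner product with $\v{u}_{t+1}$ for the $\hat{x}$ update, with the noise projected onto $\v{u}_{t+1}$ exactly as you propose for $\zeta_t$, and also unrolls the recursion for $\hat{\v{v}}^{\perp}_{t+1}$ by induction before pairing with $\nabla S_{t+1}^{\perp}$ to read off $\mathcal{I}_{t+1}$, $\beta_{s\rightarrow t}$ and $\v{\gamma}_{s\rightarrow t}$; your explicit check of the redundant projections at the endpoint $s=t$ is a bit more careful than the paper's own bookkeeping.
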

See Appendix~\ref{app:pred_xy_proof} for the full proof. Our main theoretical result is a coupling theorem that shows a strong dependence of the loss and sharpness dynamic on the predicted dynamic we defined above. Establishing this result will allow us to derive the explicit sharpness gap in later sections.

\begin{theorem}[Stochastic Coupling in Mean Square]\label{thm:coupling}
Let Assumption~\ref{ass:eigengap}-\ref{ass:scaling} hold with a small enough constant $c_{\mathscr{T}}$. If $\norm{\v{v}_0}^2 \leq c_0\delta^2$ for some small enough constant $c_0$, then for any small constant $\omega$ such that $\omega^{-1}\leq O(1)$, we have that with probability at least $1 - \omega$, the following holds for all $t\leq \mathscr{T}$:
\begin{align*}
    & \max\left\{\norm{\v{v}_t},\norm{\hat{\v{v}}_t}\right\} \leq O(\delta), \tag{Norm Bound}\label{eq:norm_bound}\\
  & \norm{\v{v}_t-\hat{\v{v}}_t}
  \leq O(\epsilon\delta),
  \tag{Deviation}
  \label{eq:coupling_dev}\\
  &  L(\param_t) =  L(\paramdagger_t)+\frac{1}{\eta}\hat{x}_t^{2}+ O\left(\frac{\epsilon\delta^{2}}{\eta}\right),
  \tag{Loss}
  \label{eq:coupling_loss}\\
  &  S(\param_t) = 
   \frac{2}{\eta} + \hat{y}_t + \kappa_t\hat{x}_t + O\left(\frac{\epsilon^{2}}{\eta}\right).
  \tag{Sharpness}
  \label{eq:coupling_sharp}
\end{align*}
\end{theorem}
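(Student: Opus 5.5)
The plan follows the inductive coupling argument of \citet{damian2023selfstab}, run on the error $\v{r}_t := \v{v}_t - \hat{\v{v}}_t$. The stochastic predicted dynamics of Definition~\ref{def:stoch_pred} inject exactly the same noise $-\eta\v{\xi}_t$ that enters the true update. The noise therefore cancels in the error recursion, and it influences $\v{r}_t$ only through the size of $\hat{\v{v}}_t$ and $\v{v}_t$, which enter the Taylor remainders.

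\textbf{Step 1 (exact true update).} Using $\param_{t+1} = \param_t - \eta(\nabla L(\param_t)+\v{\xi}_t)$ and Lemma~\ref{lem:dagger_step}, I would write
\[
\v{v}_{t+1} = \v{v}_t - \eta\nabla L(\paramdagger_t+\v{v}_t) + \eta P^{\perp}_{\v{u}_t,\nabla S_t}\nabla L_t - \eta\v{\xi}_t + O(\epsilon_t^2\eta\norm{\nabla L_t}).
\]
I would then Taylor-expand $\nabla L(\paramdagger_t+\v{v}_t)$ to third order about $\paramdagger_t$, with the remainder controlled by $\rho_4$ (Assumption~\ref{ass:sample_reg}). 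Projecting onto $\v{u}_{t+1}$ and $P^\perp_{\v{u}_{t+1}}$ and using $S(\paramdagger_t)=2/\eta$, $\nabla S_t=\nabla^3L_t(\v{u}_t,\v{u}_t)$ (Lemma~\ref{lem:sharpness_grad}) and $\ip{\nabla L_t}{\v{u}_t}=0$ should give $\v{v}_{t+1} = \estep^S_t(\v{v}_t) + \v{e}_t$. Here $\v{e}_t$ collects the terms $\nabla^3L_t(\v{v}^\perp,\v{v}^\perp)$, $\mat{H}_t$ applied to perpendicular components, and the $\rho_4\norm{\v{v}}^3$ remainder. Assumption~\ref{ass:non_worst} and the scaling $\rho_4\le O(\eta\rho_3^2)$ should then bound this as $\norm{\v{e}_t}\le O(\eta\epsilon\rho_3\delta^2)\approx O(\epsilon^2\delta)$ whenever $\norm{\v{v}_t}\le O(\delta)$; the same bookkeeping as in \citet{damian2023selfstab} applies. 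Consequently $\v{r}_{t+1} = \estep_t(\v{v}_t)-\estep_t(\hat{\v{v}}_t) + \v{e}_t$.

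\textbf{Step 2 (stability of the linearized map and induction).} Next I would linearize $\estep_t$ around $\hat{\v{v}}_t$. The perpendicular block is $\mat{A}_t$, which is a contraction-or-bounded map by Assumption~\ref{ass:eigengap}. The $\v{u}$-block is $-(1+\eta\hat y_t)$, which is marginally unstable. The cross terms $\eta\hat{x}_t\nabla S^\perp_t$ and $\eta\hat{x}_t\nabla S_t^{\perp\top}$ couple the two, giving a $2\times2$ rotation-like system in $(\v{u}_t,\nabla S^\perp_t)$ exactly as in \citet{damian2023selfstab}. Given $\norm{\hat{\v{v}}_t}\le O(\delta)$, the one-step amplification is $1+O(\epsilon)$, so over $\mathscr{T}\le c_{\mathscr T}/\epsilon$ steps, $\norm{\v{r}_t}\le e^{O(c_{\mathscr T})}\sum_s\norm{\v{e}_s}\le O(\epsilon\delta)$. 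I would run this as a joint induction with a stopping time $\tau$, defined as the first time the Norm Bound fails. On $\{t<\tau\}$ the Deviation bound propagates, and the Deviation bound together with the bound on $\hat{\v{v}}$ gives $\norm{\v{v}_t}\le O(\delta)$.

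\textbf{Step 3 (bounding $\hat{\v{v}}_t$ with high probability; main obstacle).} The main obstacle is to show that $\hat{\v{v}}_t$ itself stays $O(\delta)$ despite the noise. I would use Lemma~\ref{lem:stoch_pred_xy}. The noise contributes the martingale $M_t = \eta\sum_{s\le t}\ip{\v{\gamma}_{s\to t}}{\v{\xi}_s}$ to $\hat y$, and $\eta\zeta_t$ to $\hat x$. By Assumption~\ref{ass:scaling}, $\E{\norm{\eta\v{\xi}_s}^2}\le O(\epsilon\delta^2)$, so summing over $O(1/\epsilon)$ steps gives $\E{\sup_t\norm{\eta\sum_s(\cdot)\v{\xi}_s}^2}\le O(\delta^2)$ by Doob's inequality, and Markov's inequality then gives the bound with probability $1-\omega$. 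The delicate part is the $\hat x$-recursion, whose linear part has multiplier of modulus about $1$. I would follow the deterministic argument of \citet{damian2023selfstab}, which uses a conserved-quantity/Lyapunov function for the $(\hat x,\hat y)$ system such as $\hat{x}^2$ combined with $\hat y$ and $\delta^2$. I would treat the noise as an additive perturbation of that Lyapunov function whose cumulative effect is the martingale above plus the quadratic-variation term $\eta^2\sum\zeta_s^2=O(\delta^2)$. This gives $\sup_t(\hat{x}_t^2+|\hat{y}_t|\delta/\ldots)\le O(\delta^2)$ on the good event. A union over the $O(1)$ failure events fixes the constants in terms of $\omega$.

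\textbf{Step 4 (Loss and Sharpness).} With the Norm and Deviation bounds in hand, I would expand $L(\paramdagger_t+\v{v}_t)$ to second order. The linear term is $\ip{\nabla L_t}{\v{v}_t}$, and the quadratic term is $\tfrac12\v{v}_t^\top\mat{H}_t\v{v}_t = \tfrac1\eta x_t^2 + O(\epsilon\delta^2/\eta)$ by Assumption~\ref{ass:non_worst}. Swapping $x_t$ for $\hat{x}_t$ costs $O(\delta\cdot\epsilon\delta/\eta)$, and I would absorb the linear term as in \citet{damian2023selfstab}. For sharpness, $S(\param_t)=S(\paramdagger_t)+\ip{\nabla S_t}{\v{v}_t}+O(\rho_3\ldots\norm{\v{v}_t}^2)$, which equals $2/\eta+y_t+\kappa_t x_t+O(\epsilon^2/\eta)$. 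Replacing $(x_t,y_t)$ by their hatted versions costs $O(\norm{\nabla S_t}\epsilon\delta)=O(\epsilon^2/\eta)$, using $\delta\sqrt{\beta}\sim\sqrt{\alpha}=\epsilon/\eta$.
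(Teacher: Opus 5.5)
Your Steps 1, 2 and 4 are sound. Steps 1--2 are actually tidier than the paper's argument. Because $\paramdagger_t$ is deterministic and both $\v{v}_{t+1}$ and $\hat{\v{v}}_{t+1}$ receive the same $-\eta\v{\xi}_t$, the residual $\v{v}_{t+1}-\estep^S_t(\v{v}_t)$ is a deterministic function of $\v{v}_t$, of size $O(\epsilon^2\delta)$ on $\{\norm{\v{v}_t}\le O(\delta)\}$. The Deviation bound therefore follows pathwise from the $(1+O(\epsilon))$-Lipschitz property of $\estep_t$ (Lemma~\ref{lem:lip_step}) and a discrete Gr\"onwall over $\mathscr{T}\le c_{\mathscr T}/\epsilon$ steps. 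The paper instead runs a mean-square supermartingale for $\norm{\v{v}_t-\hat{\v{v}}_t}^2$.

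The gap is Step 3, which you correctly identify as the crux but do not close. Under Assumptions~\ref{ass:eigengap}--\ref{ass:scaling}, the $(\hat x,\hat y)$ system of Lemma~\ref{lem:stoch_pred_xy} is not a closed two-dimensional recursion. $\hat y_{t+1}$ depends on the whole history through the time-varying weights $\beta_{s\rightarrow t}$ and $\mathcal{I}_{t+1}$, so there is no conserved quantity to perturb; frozen coefficients are only assumed later, for Theorem~\ref{thm:sharp_gap}. Even with frozen coefficients, the natural approximate invariant is, up to constants, $\frac12\hat y^2+\frac{\beta}{4}\hat x^2-\frac{\alpha}{2}\log\hat x^2$. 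This is singular at $\hat x=0$. The noise $\eta\zeta_t$ pushes $\hat x$ through zero, and the hypothesis $\norm{\v{v}_0}^2\le c_0\delta^2$ even allows $\hat x_0=0$. So treating the noise as an additive perturbation of this Lyapunov function cannot be carried out.

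Two further problems in Step 3:
\begin{itemize}
\item $\eta\sum_{s\le t}\ip{\v{\gamma}_{s\rightarrow t}}{\v{\xi}_s}$ is not a martingale in $t$, because the weights depend on $t$. Doob's inequality does not apply to it as written.
\item Controlling $(\hat x,\hat y)$ alone does not bound the remaining components of $\hat{\v{v}}_t^\perp$, which the Norm Bound requires.
\end{itemize}

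The missing observation is that none of this structure is needed. Over the short horizon, the marginal instability of the $\v{u}$-block costs only $(1+O(\epsilon))^{\mathscr T}=e^{O(c_{\mathscr T})}$. On $\{\norm{\hat{\v{v}}_t}\le M\delta\}$ you have:
\begin{itemize}
\item $|\ip{\v{u}_{t+1}}{\estep_t(\hat{\v{v}}_t)}|\le(1+O(\epsilon))|\hat x_t|$;
\item $\norm{\mat{A}_t}\le 1+O(\epsilon)$;
\item the forcing $\eta\nabla S_t^\perp(\delta_t^2-\hat x_t^2)/2$ is $O(\epsilon\delta)$;
\item the noise cross term vanishes, since $\estep_t(\hat{\v{v}}_t)$ is $\calF_t$-measurable.
\end{itemize}
Together these give $\E{\norm{\hat{\v{v}}_{t+1}}^2\mid\calF_t}\le(1+O(\epsilon))\norm{\hat{\v{v}}_t}^2+O(\epsilon\delta^2)$, which is the content of Lemmas~\ref{lem:local_growth} and~\ref{lem:bound_pred_full}. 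Now apply a maximal inequality to the associated stopped nonnegative supermartingale, for instance $(1+C\epsilon)^{-t}(\norm{\hat{\v{v}}_t}^2+\delta^2)$, with $c_0$ and $c_{\mathscr T}$ small relative to $\omega$. This yields $\norm{\hat{\v{v}}_t}\le O(\delta)$ for all $t\le\mathscr T$ with probability at least $1-\omega$, which is exactly the input your Step 2 induction needs.
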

Full details are in Appendix~\ref{app:coupling_proof}. In particular, the theorem states three results. First, both the true deviation from constrained trajectory, namely $\v{v}_t = \param_t - \paramdagger_t$, and the predicted dynamic $\hat{\v{v}}_t$ has a norm upper bounded by $O(\delta)$. Second, the difference between $\v{v}_t$ and $\hat{\v{v}}_t$ is upper bounded by $O(\epsilon\delta)$, showing a relatively small difference compared to the magnitude of the two vectors $O(\delta)$. This shows that the predicted dynamic roughly describes the true deviation from the constrained trajectory. Moreover, the theorem shows that the loss of the SGD training deviates from the loss of the constrained loss by a magnitude of $\frac{1}{\eta}\hat{x}_t^2$ with an additional error of $O\left(\frac{\epsilon\delta^2}{\eta}\right)$. As $\hat{x}_t$ roughly scales with $O(\delta)$, the error term $O\left(\frac{\epsilon\delta^2}{\eta}\right)$ has a small influence in the loss dynamic. Combined with Lemma~\ref{lem:dagger_step}, this shows a decrease of the SGD loss with an oscillation cased by the instability term $\hat{x}_t$. Lastly, we can approximate the SGD trajectory's sharpness by a combination of $\hat{y}_t$ and $\hat{x}_t$. In the next section, we will demonstrate how this approximation of the sharpness leads to a sharpness gap that depends on the gradient noise variance.

\subsection{Derivation of the Sharpness Gap}\label{sec:sharpness_gap}
To derive a meaningful closed-form sharpness gap, we need make certain simplification to avoid excessive dependencies. In particular, we work in a frozen-coefficient regime with an explicit closure assumption, and assumes a mean-field approximation of the coupling between $\hat{y}_t$ and $\hat{x}_t$, as stated by the assumptions below.

\begin{assumption}[Stationarity and Decorrelation]\label{ass:decorr}
The frozen-coefficient stochastic predicted dynamics admit a stationary distribution for $(\hat{x}_t,\hat{y}_t)$.
At stationarity:
\[
  \left|\E{(1+\eta\hat{y}_t)^{2}\hat{x}_t^{2}}
   -(1+\eta\bar{y})^{2}\E{\hat{x}_t^{2}}\right|
  \le O(\epsilon^{2}\delta^{2}),
  \qquad \bar{y}:=\E{\hat{y}_t}.
\]
\end{assumption}

\begin{assumption}[Frozen Coefficients]\label{ass:frozen}
Over the equilibrium window $[T_0, T_0+\mathscr{T}]$,
the following quantities evaluated along the reference
trajectory $\paramdagger_t$ are approximately constant.
Specifically:
\begin{enumerate}[label=(\roman*)]
\item \emph{Landscape geometry:}
The Hessian $ \mat{H}_t\equiv \mat{H}$,
the top eigenvector $ \v{u}_t\equiv \v{u}$,
and the sharpness gradient
$\nabla S_t^{\perp}\equiv\nabla S^{\perp}$
are constant.
As a consequence, the derived scalar quantities are also
constant:
$\alpha_t\equiv\alpha>0$,
$\beta_t\equiv\beta>0$,
$\delta_t\equiv\delta=\sqrt{2\alpha/\beta}$,
$\kappa_t\equiv\kappa  = \langle \nabla S,  \v{u}\rangle \le O(\epsilon\rho_3)$.
\item \emph{Sharpness propagation:}
$\nabla S^{\perp}\in\ker( \mat{H}|_{ \v{u}^{\perp}})$.
\end{enumerate}
\end{assumption}
Assumption~\ref{ass:decorr} decouples the expectation of $(1 + \eta\hat{y}_t)^2\hat{x}_t^2$ into expectation of $\hat{y}_t$ and $\hat{x}_t^2$. Intuitively, $\hat{y}_t$ and $\hat{x}_t$ are the projection of $\hat{v}_t$ onto orthogonal directions, so the statistical correlation between the two are small. This is verified experimentally in Appendix~\ref{app:mean-field-exp}. The set of conditions in Assumption~\ref{ass:frozen} are made so that the dynamic of $\hat{x}_t$ and $\hat{y}_t$ can be well understood. Assuming fixed values of $\mat{H},\alpha,\beta,\delta$, and $\kappa$ allows us to express the sharpness gap as a fixed value instead of depending on the complicated aggregation of these quantities. Moreover, the assumption that $\kappa$ is small and $\nabla S^\perp \in \ker( \mat{H}|_{ \v{u}^{\perp}})$ are simplification of the real-world scenario where high dimensionality implies near-orthogonality between vectors, and the neural network Hessian are effectively low-rank, especially after entering the EoS regime.

With the assumptions above, we present the theorem that characterizes the sharpness gap induced by SGD. The predicted dynamics are at equilibrium when the joint
distribution of $(\hat{x}_t, \hat{y}_t)$ is at stationary, and thus so are the statistics of the joint distribution. In this section, we will focus on the following statistics:
\[
    \E{\hat{y}_{t+1}} = \E{\hat{y}_t},\;\;\text{(mean stationarity of } \hat{y}\text{)};\quad \E{\hat{x}_{t+1}^{2}} = \E{\hat{x}_{t}^{2}},\;\;\text{(second-moment stationarity of }\hat{x}\text{)}
\]
These two conditions yield a system of two equations that
determine $\E{\hat{x}_t^{2}}$ and $\bar{y} = \E{\hat{y}_t}$.
The mean stationarity of $\hat{y}$ naturally involves
$\E{\hat{x}_t^{2}}$ (because the self-stabilization force is
proportional to $x_t^{2}$), while the second-moment stationarity
of $\hat{x}$ is where the noise variance $\sigma_{ \v{u}}^{2}$
enters (because the noise is mean-zero, it is invisible to
first moments but leaves a trace in second moments).

\begin{theorem}[Equilibrium Sharpness under SGD]\label{thm:sharp_gap}
Under Assumptions~\ref{ass:eigengap}--\ref{ass:scaling},\ref{ass:decorr}, and \ref{ass:frozen}, with probability at least 1 - $\omega$, at stationary distribution of the predicted dynamic $\hat{y}_t$ and $\hat{x}_t$, we have that:
\[
    \E{\hat{x}_t^2} = \delta^2;\quad \E{\hat{y}_t} = -\frac{\eta\beta\sigma_{ \v{u}}^{2}}{4\alpha}
  + O \left(\frac{\epsilon^{2}}{\eta}\right)
\]
and the sharpness at stability is given by
\begin{equation}\label{eq:sharp_gap}
  \E{ S(\param_t)}
  = \frac{2}{\eta} - \frac{\eta\beta\sigma_{ \v{u}}^{2}}{4\alpha}
  + O \left(\frac{\epsilon^{2}}{\eta}\right).
\end{equation}
\end{theorem}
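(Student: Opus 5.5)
The plan is to write down the two stationarity conditions (mean of $\hat y$, second moment of $\hat x$) for the frozen-coefficient predicted dynamics, solve the resulting $2\times 2$ system, and then transfer the answer to $\E{S(\param_t)}$ through the sharpness estimate of Theorem~\ref{thm:coupling}.

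\textbf{Step 1: reduce the $\hat y$ recursion.} Under Assumption~\ref{ass:frozen}, $\mat{A}=(I-\eta\mat{H})P_{\v{u}}^{\perp}$ is constant. Because $\nabla S^{\perp}\in\ker(\mat{H}|_{\v{u}^{\perp}})$, we get $\mat{A}\nabla S^{\perp}=\nabla S^{\perp}$. Since $\mat{H}$ is symmetric and $\v{u}$ is an eigenvector, $\mat{A}$ is symmetric on $\v{u}^{\perp}$, so also $(\nabla S^{\perp})^{\top}\mat{A}=(\nabla S^{\perp})^{\top}$. It follows that $\beta_{s\to t}\equiv\beta$ and $\v{\gamma}_{s\to t}=(\nabla S^{\perp})^{\top}$. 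Lemma~\ref{lem:stoch_pred_xy}, or directly Definition~\ref{def:stoch_pred}, then gives the one-step recursion
\[
\hat y_{t+1}=\hat y_t+\eta\beta\,\tfrac{\delta^2-\hat x_t^2}{2}-\eta\ip{\nabla S^{\perp}}{\v{\xi}_t}.
\]
The noise term has conditional mean zero. Imposing $\E{\hat y_{t+1}}=\E{\hat y_t}$ at stationarity yields $\E{\hat x_t^2}=\delta^2$, which is the first claim.

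\textbf{Step 2: second moment of $\hat x$.} Square \eqref{eq:xhat_update} and take expectations. Since $\zeta_t$ has conditional mean zero given $(\hat x_t,\hat y_t)$, the cross terms with $\zeta_t$ vanish, leaving
\[
\E{\hat x_{t+1}^2}=\E{(1+\eta\hat y_t)^2\hat x_t^2}+\eta\kappa\,\E{(1+\eta\hat y_t)\hat x_t^3}+\tfrac{\eta^2\kappa^2}{4}\E{\hat x_t^4}+\eta^2\sigma_{\v{u}}^2 .
\]
The $\kappa$-terms are controlled using $\kappa\le O(\epsilon\rho_3)$ together with the norm bound $|\hat x_t|,|\eta\hat y_t|\le O(\delta)$ from Theorem~\ref{thm:coupling}. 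This bound holds on the probability-$(1-\omega)$ event; that event is the reason the probability qualifier appears in the statement. The aim is to show these terms are $O(\epsilon^2\delta^2)$. For the cubic term I would first try a symmetry or odd-moment argument; if that fails, I would bound it crudely and absorb it into the error. Next apply Assumption~\ref{ass:decorr} and the stationarity condition $\E{\hat x_{t+1}^2}=\E{\hat x_t^2}=\delta^2$. This gives
\[
\bigl[1-(1+\eta\bar y)^2\bigr]\delta^2=\eta^2\sigma_{\v{u}}^2+O(\epsilon^2\delta^2).
\]
Expanding, $-(2\eta\bar y+\eta^2\bar y^2)\delta^2=\eta^2\sigma_{\v u}^2+O(\epsilon^2\delta^2)$. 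Hence $\bar y=-\eta\sigma_{\v u}^2/(2\delta^2)+O(\epsilon^2/\eta)+O(\eta\bar y^2)$. The quadratic term $\eta\bar y^2$ is of lower order: the noise scaling in Assumption~\ref{ass:scaling} gives $\eta\sigma_{\v u}^2/\delta^2\le O(\epsilon/\eta)$, so $\eta\bar y^2\le O(\epsilon^2/\eta)$. Substituting $\delta^2=2\alpha/\beta$ yields $\bar y=-\eta\beta\sigma_{\v u}^2/(4\alpha)+O(\epsilon^2/\eta)$.

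\textbf{Step 3: sharpness.} Take expectations in the (Sharpness) estimate of Theorem~\ref{thm:coupling}:
\[
\E{S(\param_t)}=2/\eta+\bar y+\kappa\E{\hat x_t}+O(\epsilon^2/\eta).
\]
It remains to show $\kappa\E{\hat x_t}=O(\epsilon^2/\eta)$. From \eqref{eq:xhat_update}, stationarity of the mean gives $2\E{\hat x_t}=-\eta\E{\hat y_t\hat x_t}-\tfrac{\eta}{2}\kappa\delta^2$, so $|\E{\hat x_t}|\le O(\eta\,\E{|\hat y_t\hat x_t|})+O(\eta\kappa\delta^2)$. Combined with $\kappa=O(\epsilon\rho_3)$, the relation $\delta^2\rho_3^2\lesssim\alpha/\beta\cdot\rho_3^2$ from Assumption~\ref{ass:prog_sharp}, and $\epsilon=\eta\sqrt{\alpha}$, this should give $O(\epsilon^2/\eta)$. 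A cruder bound $\kappa|\E{\hat x_t}|\le O(\epsilon\rho_3\delta)$ may also suffice once the scalings are tracked.

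The main obstacle is the bookkeeping of error orders. The ingredients are the $\kappa$-terms in Step 2, the passage from the almost-sure bounds (which hold only on a high-probability event) to bounds on expectations, and the conversion of $O(\epsilon^2\delta^2)$ errors into $O(\epsilon^2/\eta)$ after dividing by $\eta\delta^2$. The last point is delicate because dividing gives $O(\epsilon^2/\eta)$ only if the decorrelation error is measured relative to $\eta\delta^2$; I would verify this normalization first.
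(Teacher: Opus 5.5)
Your proposal matches the paper's proof step for step: the frozen-coefficient reduction to $\beta_{s\to t}\equiv\beta$, mean-stationarity of $\hat y$ giving $\E{\hat x_t^2}=\delta^2$, squaring the $\hat x$ update and absorbing the $\kappa$-terms, applying Assumption~\ref{ass:decorr}, linearizing in $\eta\bar y$, and then reading off $\E{S(\param_t)}$ from the (Sharpness) estimate of Theorem~\ref{thm:coupling}. Each point you hedge on closes with the crude bound, which is exactly what the paper uses (and the paper likewise glosses over the passage from the high-probability event to expectations that you flag):
\begin{itemize}
\item Since $\eta\rho_3\delta=\Theta(\epsilon)$, you get $|\eta\kappa(1+\eta\hat y_t)\hat x_t^3|\le O(\epsilon^2\delta^2)$ and $|\kappa\hat x_t|\le O(\epsilon\rho_3\delta)=O(\epsilon^2/\eta)$, so neither a symmetry argument nor stationarity of $\E{\hat x_t}$ is needed.
\item The a priori bound $|\eta\bar y|\le O(\epsilon)$ handles the quadratic term without circularity.
\item An $O(\epsilon^2\delta^2)$ error divided by $2\eta\delta^2$ is already $O(\epsilon^2/\eta)$, so the normalization worry does not arise.
\end{itemize}
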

The proof is provided in Appendix~\ref{app:gap_proof}. Theorem~\ref{thm:sharp_gap} states that, at the stationary distribution of $\hat{x}_t$ and $\hat{y}_t$, the unstable force $\hat{x}_t$ has a second moment that matches the stabilizing force $\delta^2$. This recovers the same condition in \citet{damian2023selfstab}. In the meantime, the sharpness deviation from $\frac{2}{\eta}$, namely $\hat{y}_t$, has a mean at $-\frac{\eta\beta\sigma_{ \v{u}}^{2}}{4\alpha} + O \left(\frac{\epsilon^{2}}{\eta}\right)$. Ignoring the small error term $O \left(\frac{\epsilon^{2}}{\eta}\right)$, we derive a sharpness gap of $-\frac{\eta\beta\sigma_{ \v{u}}^{2}}{4\alpha}$, showcasing that the supressed sharpness under the injection of gradient noise. Noticeably, this sharpness gap depends linearly on the variance of the noise when projected onto the top Hessian eigenvector $\sigma_{\v{u}}$, which matches the prior work that a smaller batch size result in a less sharp solution \cite{cohen2021eos,andreyev2025eoss}.

\begin{figure}[t]
    \centering
    \includegraphics[width=\textwidth]{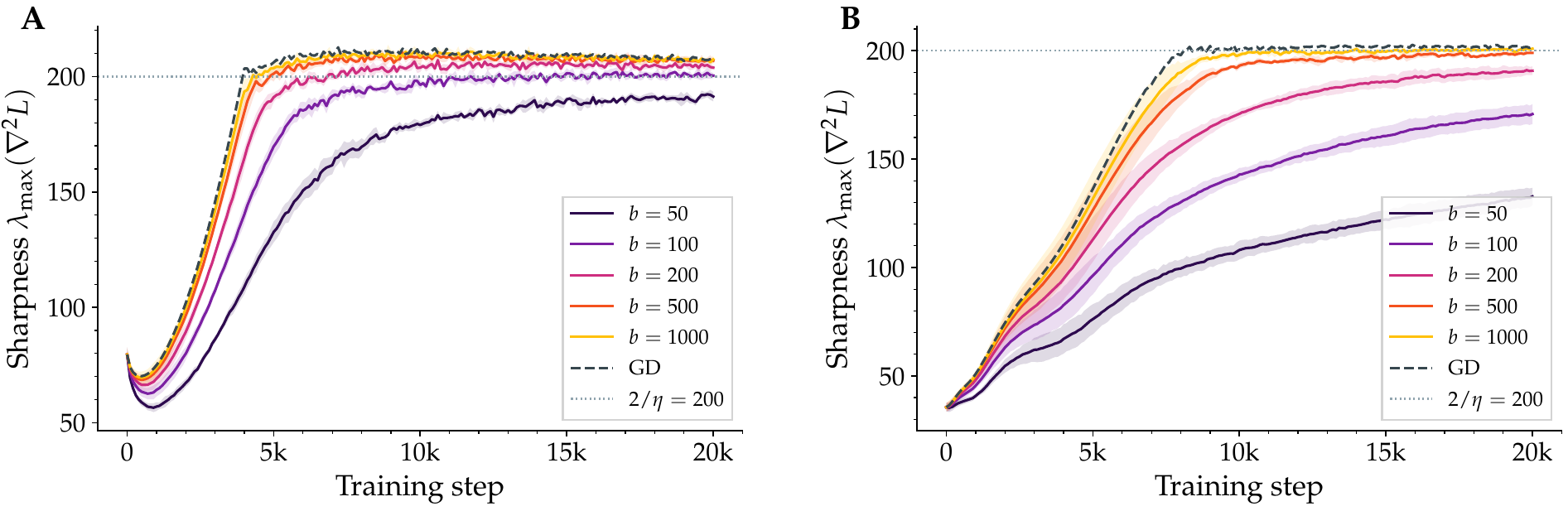}
    \caption{Sharpness $\lambda_{\max}(\nabla^2 L)$ during training
    for different batch sizes.
    \textbf{(A)}~FC-Tanh + MSE: smaller batch sizes produce
    markedly lower equilibrium sharpness. The dashed black curve
    shows full-batch GD.
    \textbf{(B)}~CNN + MSE: the same pattern holds with a
    convolutional architecture, with an even larger gap
    ($S_{\mathrm{eq}} \approx 131$ at $b=50$ vs.\
    $S_{\mathrm{GD}} \approx 202$).
    Shaded regions indicate $\pm 1$ standard deviation across 5 seeds.}
    \label{fig:trajectories}
\end{figure}

\section{Experimental Validation}\label{sec:experiments}
\begin{wrapfigure}{r}{0.67\linewidth}
  \centering
  \includegraphics[width=\linewidth]{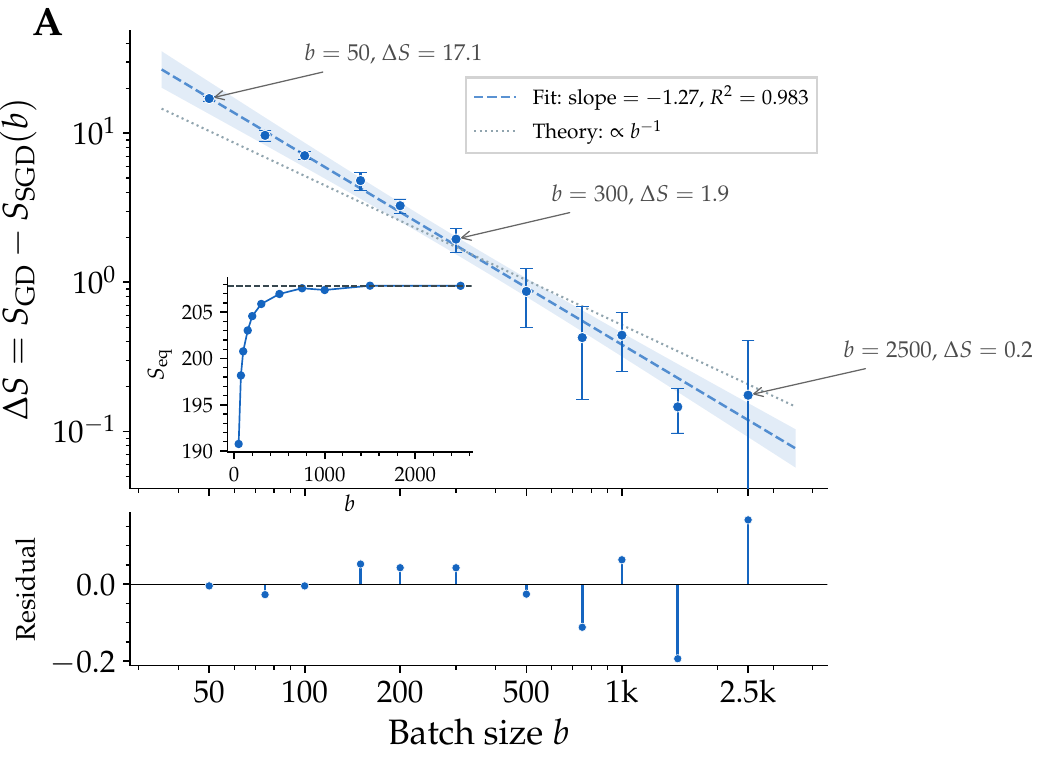}
    \caption{Log-log plot of the sharpness gap $\Delta S$ versus
    batch size~$b$ for FC-Tanh.
    Fitted slope $= -1.27$ (theory: $-1$), $R^2 = 0.98$.
    The dotted line shows the theoretical $\Delta S \propto 1/b$
    reference slope. The shaded band indicates the 95\% confidence
    interval on the fit. Error bars indicate $\pm 1$ standard
    deviation across 3--5 seeds. The inset shows raw equilibrium
    sharpness $S_{\mathrm{eq}}$ versus $b$. The lower panel
    shows residuals from the log-log fit.}
    \label{fig:scaling}\vspace{-0.5cm}
\end{wrapfigure}

We validate the predictions of our theoretical analysis through comprehensive
experiments on neural networks trained with SGD at the edge of stability.
Our experiments are designed to test two specific predictions: \vspace{-0.2cm}
\begin{enumerate}[leftmargin=*]
    \item \textbf{Batch-size-dependent sharpness gap:} SGD with batch size~$b$
    equilibrates at sharpness $S_{\mathrm{SGD}}(b) < S_{\mathrm{GD}}$,
    with $\Delta S = S_{\mathrm{GD}} - S_{\mathrm{SGD}}(b) \propto 1/b$. \vspace{-0.2cm}
    \item \textbf{Noise variance scaling:} The projected gradient noise
    $\sigma_u^2 = u^\top \Sigma_b u$ scales as $\sigma_u^2 \propto 1/b$,
    providing the mechanism through which batch size modulates sharpness.
\end{enumerate}

Our primary setup follows \cite{cohen2021eos}: a two-hidden-layer FC-Tanh network trained with MSE loss on CIFAR-10, with additional experiments on a CNN and FC-ReLU network (with both MSE and cross-entropy losses) to verify generality. We train on 5{,}000 CIFAR-10 images using vanilla SGD with default learning rate $\eta = 0.01$ (stability threshold $2/\eta = 200$), varying $\eta \in \{0.005, 0.008, 0.01, 0.015, 0.02\}$ for scaling experiments, and repeat each run over 3--5 random seeds. Throughout training we periodically compute the top Hessian eigenvalue, projected gradient noise variance, batch sharpness, and the sharpness gradient quantities $\alpha$ and $\beta$\footnote{Notice that in our theory these quantities are measured along the constrained trajectory. In experiments we use the approximation of the measurements along the SGD trajectory.}; equilibrium sharpness is taken as the average of $\lambda_{\max}$ over the final 20 measurements.

\subsection{Batch-Size-Dependent Sharpness Gap}\label{sec:exp-gap}
\textbf{Sharpness trajectories.}
Figure~\ref{fig:trajectories} shows the sharpness trajectories for representative batch sizes. All methods exhibit progressive sharpening: the top eigenvalue increases monotonically during early training before reaching a plateau. Full-batch GD equilibrates at $S_{\mathrm{GD}} \approx 207.8$, slightly above $2/\eta = 200$ due to the oscillation asymmetry identified by \citet{damian2023selfstab}. Critically, SGD with smaller batch sizes equilibrates at \emph{strictly lower} sharpness: $S_{\mathrm{eq}} \approx 190.8$ for $b = 50$ versus $S_{\mathrm{eq}} \approx 207.4$ for $b = 1000$. This $17$-point gap between $b = 50$ and GD is the stochastic self-stabilization effect predicted by our theory.

The CNN architecture with MSE loss (Figure~\ref{fig:trajectories}B) exhibits the same qualitative pattern with a dramatically amplified effect: $S_{\mathrm{eq}} \approx 131$ at $b = 50$ versus $S_{\mathrm{GD}} \approx 202$, yielding a gap of $\Delta S \approx 71$. This is possibly due to the stronger gradient noise of CNN along the top Hessian eigenvector.

\medskip \noindent \textbf{Scaling with batch size.}
Figure~\ref{fig:scaling} plots the sharpness gap $\Delta S = S_{\mathrm{GD}} - S_{\mathrm{SGD}}(b)$ against batch size on a log-log scale. The theory predicts $\Delta S \propto 1/b$, corresponding to a slope of $-1$ in log-log coordinates. We observe a fitted slope of $-1.27$ ($R^2 = 0.98$, $p < 10^{-5}$), in good agreement with the theoretical prediction. The slightly steeper slope may reflect higher-order corrections or the fact that $\beta$ and $\alpha$ have mild batch-size dependence not captured by the leading-order formula. The residual panel confirms that the power-law fit captures the data well, with no systematic deviations across the range of batch sizes tested.

\begin{wrapfigure}{r}{0.5\textwidth}
    \centering
    \includegraphics[width=\linewidth]{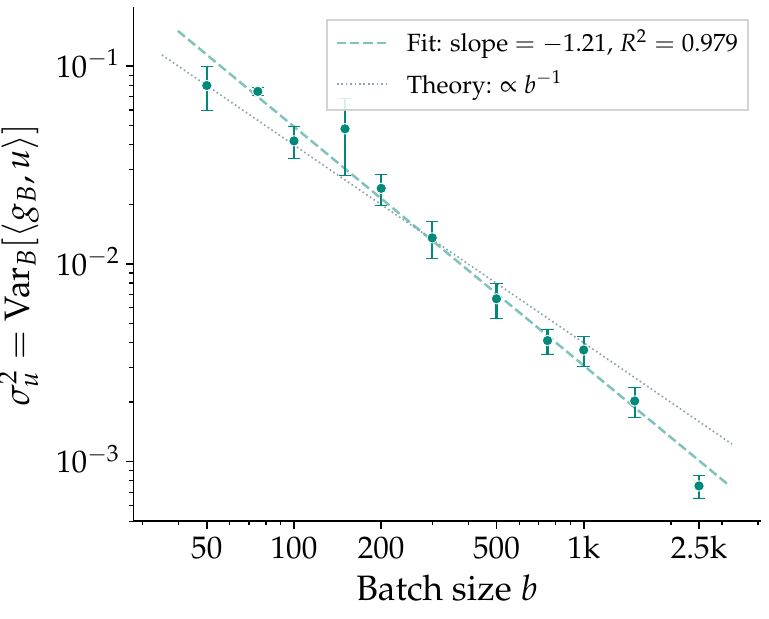}
    \caption{Projected gradient noise variance $\sigma_u^2 =
    \mathrm{Var}_B[\langle g_B, u\rangle]$ versus batch size
    on a log-log scale. Fitted slope $= -1.21$ (theory: $-1$),
    $R^2 = 0.979$.}
    \label{fig:noise}\vspace{-1cm}
\end{wrapfigure}

The CNN architecture with MSE loss exhibits an even stronger effect: the sharpness gap at $b = 50$ is $\Delta S \approx 71$ (versus $\Delta S \approx 17$ for FC-Tanh), with GD reaching $S_{\mathrm{GD}} \approx 202$. The larger gap likely reflects the CNN's richer parameter space and stronger gradient noise along the top eigenvector. The FC-ReLU architecture shows a comparable pattern to FC-Tanh ($\Delta S \approx 16$ at $b=50$), confirming that the mechanism is not activation-specific.

\subsection{Noise Variance Scaling}\label{sec:exp-noise}
The formula $\Delta S = \eta \beta \sigma_u^2 / (4\alpha)$ predicts that the sharpness gap is mediated by the projected gradient noise $\sigma_u^2 = u^\top \Sigma_b u$, where $\Sigma_b$ is the covariance of the mini-batch gradient. Since mini-batch gradients are averages over $b$ independent per-sample gradients, $\Sigma_b = \Sigma_1 / b$ and hence $\sigma_u^2 \propto 1/b$.

Figure~\ref{fig:noise} confirms this prediction: the measured $\sigma_u^2$ scales with batch size with a fitted log-log slope of $-1.21$ ($R^2 = 0.979$), close to the theoretical $-1$. Combined with the batch-size scaling of~$\Delta S$, this establishes the causal chain: smaller batches lead to larger projected noise $\sigma_u^2$, which lead to stronger stochastic self-stabilization and, thus, lower equilibrium sharpness.

\subsection{Anatomy of the Sharpness Gap}\label{sec:exp-anatomy}

The equilibrium formula $\Delta S = \eta \beta \sigma_u^2 / (4\alpha)$
decomposes the sharpness gap into three landscape-dependent factors.
While the absolute quantitative prediction is sensitive to the
measurement of~$\alpha$ (see Section~\ref{sec:exp-landscape}), we
can test the \emph{relative} structure of the gap across batch sizes
by examining how the measured gap compares to the predicted scaling
from the landscape quantities.

Figure~\ref{fig:anatomy}A shows the measured sharpness gap
$\Delta S = S_{\mathrm{GD}} - S_{\mathrm{SGD}}(b)$ for five
batch sizes ($b \in \{50, 100, 200, 500, 1000\}$), overlaid with
the power-law fit $\Delta S \propto b^{-1.27}$ from the log-log
regression. The fit accurately captures the monotonic decrease
across the full range.

Figure~\ref{fig:anatomy}B tests whether the product
$\beta \sigma_u^2$---the numerator of the gap formula---tracks the
measured gap across batch sizes. We plot the measured $\Delta S$
alongside $\beta \sigma_u^2$ (rescaled to match at $b = 50$) for
each batch size. The close agreement confirms that the product
$\beta \sigma_u^2$ captures the batch-size dependence of the gap,
consistent with the theoretical prediction that $\alpha$ (the
denominator) varies slowly across batch sizes at fixed~$\eta$.

\begin{figure}[t]
    \centering
    \includegraphics[width=0.9\textwidth]{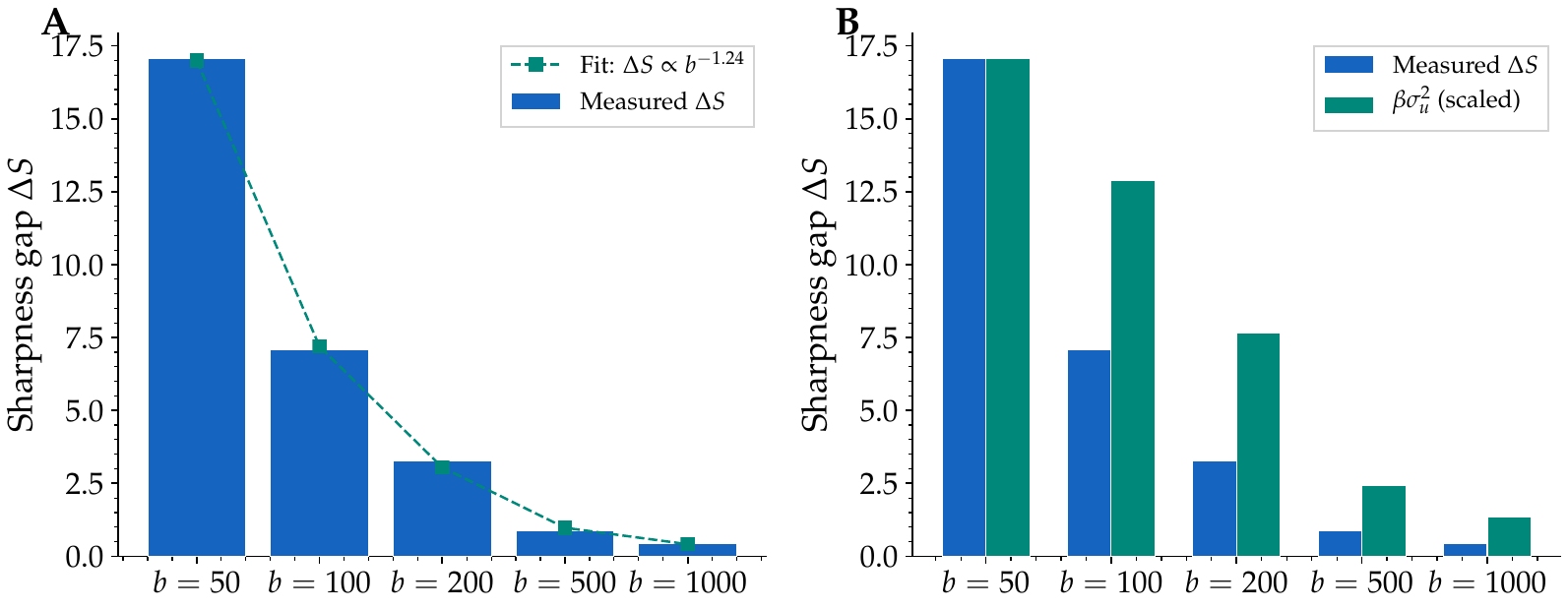}
    \caption{Anatomy of the sharpness gap.
    \textbf{(A)}~Measured $\Delta S$ across batch sizes with the
    fitted power-law $\Delta S \propto b^{-1.27}$.
    \textbf{(B)}~Comparison of measured $\Delta S$ (blue) with the
    rescaled product $\beta\sigma_u^2$ (teal) for each batch size.
    The close tracking confirms that the gap is driven by the
    product of gradient noise and self-stabilization strength.}
    \label{fig:anatomy}
\end{figure}

\begin{figure}[t]
    \centering
    \includegraphics[width=0.9\textwidth]{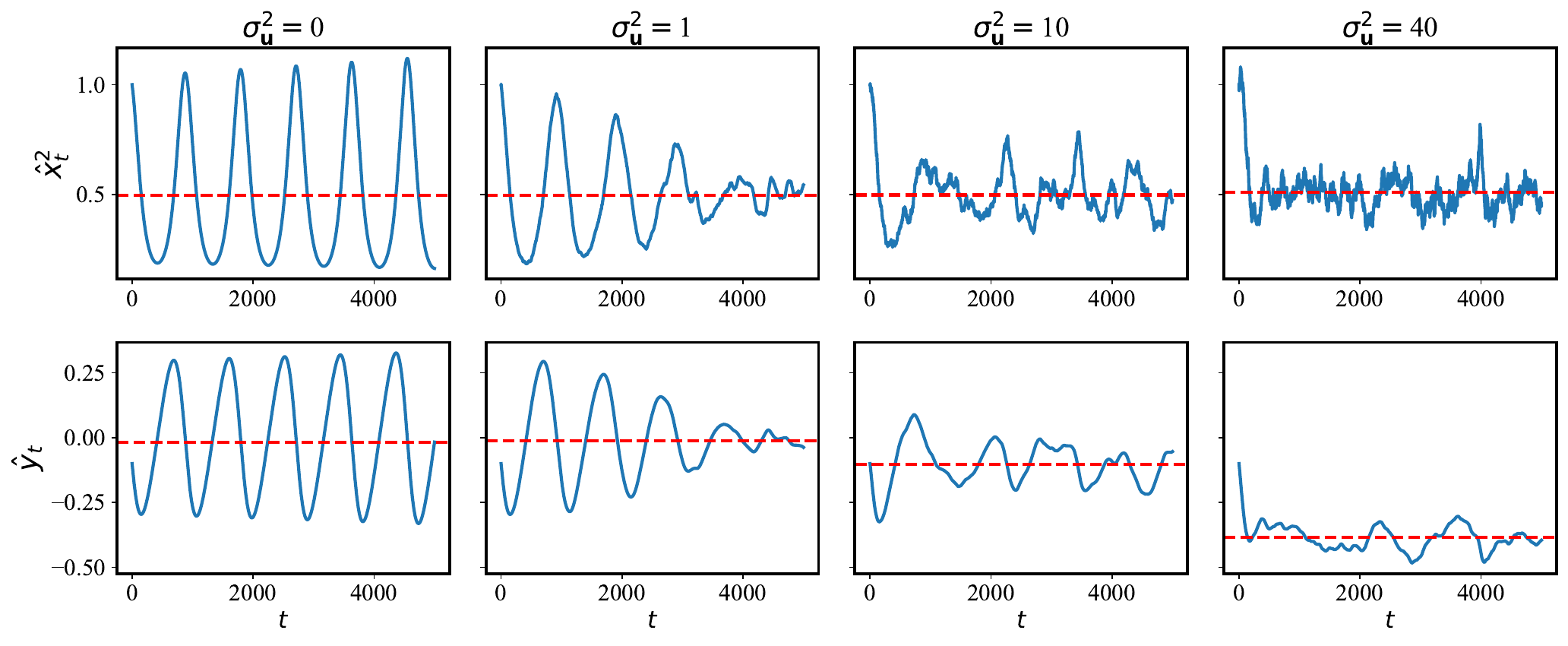}
    \caption{Simulation of the $\hat{x}_t^2,\hat{y}_t$ dynamic given in Lemma~\ref{lem:stoch_pred_xy}. We choose $\beta_{s\rightarrow t} = \beta = 1$, and $\delta_t^2 = \delta^2 = 0.5$. Also, we choose $\eta = 0.01, \kappa_t = 0, \v{\gamma}_{s\rightarrow t} = \bm{0}$, and $\zeta_t \sim\mathcal{N}(0, \sigma_{\v{u}}^2)$ for difference choice of $\sigma_{\v{u}}$. We choose initial value $\hat{x}_t = 1$ and $\hat{y}_t = 0$. Trajectories are averaged over 100 runs.}
    \label{fig:sde_dynamic}
\end{figure}

\subsection{Simulation of Predicted Dynamic}
We simulate the predicted dynamic $\hat{x}_t$ and $\hat{y}_t$ given in Lemma~\ref{lem:stoch_pred_xy}. To isolate the interesting factors, we simplify the dynamic to $\beta_{s\rightarrow t} = \beta, \delta_t = \delta$, and $\kappa_t = 0, \v{\gamma}_{s\rightarrow t} = \bm{0}$. The simulation result is shown in Figure~\ref{fig:sde_dynamic}, where the red dashed line shows the approximation of the stationary condition by taking a time-average of trajectory. Each column in Figure~\ref{fig:sde_dynamic} shows the dynamic of a difference $\sigma_{\bfu}^2$ value, ranging from $0$ (deterministic case) to $40$. The $\hat{x}_t^2,\hat{y}_t$ dynamic in the first column reproduces the behavior in \citet{damian2023selfstab}. Observing the first row that demonstrate the $\hat{x}_t^2$ dynamic, one could see that increasing the noise suppresses the oscillation scale of $\hat{x}_t^2$, but keeping the mean $\E{\hat{x}_t^2}$ at $\delta^2 = 0.5$ and breaking the periodic structure. In the second row, one could observe that the mean at stationary (dashed red line) gradually shifts down as $\sigma_{\v{u}}^2$ grows. In the mean time, the scale of the oscillation seems to also be suppressed. Together, Figure~\ref{fig:sde_dynamic} validates the theoretical result in Theorem~\ref{thm:sharp_gap}.

\section{Related Work}\label{sec:related}

\medskip\noindent\textbf{Edge of Stability.}
The \emph{Edge of Stability} (\eos) phenomenon was systematically documented by \citet{cohen2021eos}, who observed that full-batch gradient descent on neural networks operates in a regime where the sharpness $S(\param) = \lmax(\nabla^2 L(\param))$ rises steadily until it reaches $2/\eta$ and then hovers near this instability threshold, while the training loss continues to decrease non-monotonically over long timescales.
Earlier empirical work by \citet{jastrzebski2019on,jastrzebski2020breakeven} and \citet{xing2018walk} had already suggested that the sharpness of the loss landscape along the SGD trajectory is strongly influenced by the optimizer hyperparameters, particularly the learning rate and batch size.

Subsequent theoretical work has analyzed the EoS from multiple angles.
\citet{arora2022eos} studied EoS for normalized gradient descent and established conditions under which training dynamics evolve along a sharpness-minimizing flow on the manifold of minimizers.
\citet{damian2023selfstab} provided the first rigorous mechanism for GD self-stabilization via a cubic Taylor expansion of the loss, proving that the third-order term acts as a restoring force that keeps $S(\param)$ near $2/\eta$; our work directly extends this framework to the stochastic setting.
\citet{zhu2023eos} analyzed EoS dynamics through a minimalist example, deriving the sharpness behavior near the stability threshold.
\citet{li2022sharpness} analyzed sharpness dynamics in two-layer linear networks, providing further theoretical grounding for the four-phase picture of progressive sharpening and edge-of-stability behavior.
\citet{chen2023beyond} studied gradient descent behavior beyond the edge of stability via two-step gradient updates, characterizing a local implicit bias toward symmetric solutions.
\citet{kreisler2023eos} showed that the sharpness of the gradient flow solution decreases monotonically during EoS training in scalar networks.
\citet{agarwala2023second} demonstrated that second-order regression models exhibit progressive sharpening and EoS, suggesting these phenomena are not unique to neural networks.

On the SGD side, \citet{andreyev2025eoss} identified \emph{Batch Sharpness}---the expected directional curvature of the mini-batch Hessian along the mini-batch gradient---as the quantity that saturates at $2/\eta$ during SGD training, terming this the \emph{Edge of Stochastic Stability}; their work explicitly poses as an open problem the mechanism behind the suppression of full-batch sharpness below $2/\eta$, which our paper resolves.
\citet{lee2023ias} similarly introduced \emph{Interaction-Aware Sharpness} and characterized the onset of oscillatory behavior in SGD, also identifying the sharpness suppression as an open theoretical question.

\medskip\noindent\textbf{Implicit Bias of SGD, Sharpness, and the Role of Batch Size and Learning Rate.}
A foundational empirical observation is that SGD with smaller batch sizes or larger learning rates tends to find flatter minima that generalize better \citep{keskar2017largebatch,smith2017three,jastrzebski2019on}.
\citet{keskar2017largebatch} showed that large-batch SGD converges to sharper minimizers with worse generalization, while small-batch SGD consistently finds flatter minima due to the inherent noise in gradient estimation.
The relationship between batch size and learning rate in controlling sharpness was further studied by \citet{smith2017three}, who showed that the equilibrium properties of SGD depend primarily on the ratio $\eta/b$, and by \citet{wu2018sgd}, who analyzed the dynamical stability of global minima under SGD from a linear stability perspective.
\citet{smith2021origin} characterized the implicit regularizer of SGD using backward error analysis, showing that the scale of implicit regularization is proportional to $\eta/b$ and penalizes the norms of mini-batch gradients.
The alignment between SGD noise and the sharp directions of the loss landscape was studied by \citet{wu2022alignment}, who proved that linearly stable minima under SGD must satisfy a Frobenius-norm bound on the Hessian that depends on the ratio $b/\eta$.
\citet{haochen2021shape} showed that the \emph{shape} of the noise covariance---not merely its scale---determines which minimizers SGD converges to, demonstrating that parameter-dependent mini-batch noise yields a stronger implicit bias toward flat minima than spherical Gaussian noise.
\citet{damian2021labelnoise} proved that label noise SGD provably prefers flat global minimizers by penalizing sharpness through its parameter-dependent noise structure.
Taken together, these works establish the empirical and theoretical backdrop for the batch-size-dependent sharpness we explain: our closed-form sharpness gap $\Delta S = \eta\beta\sigma_{\v{u}}^2/(4\alpha)$ provides a mechanistic account of the $\eta/b$ dependence through the projected noise variance $\sigma_{\v{u}}^2 \propto 1/b$.

\medskip\noindent\textbf{Progressive Sharpening and Feature Learning.}
Progressive sharpening---the steady increase of $S(\param)$ during early training until it reaches $2/\eta$---was identified as a key phenomenon alongside EoS by \citet{cohen2021eos}.
\citet{jastrzebski2020breakeven} characterized the \emph{break-even point} on the SGD trajectory, after which the curvature of the loss surface and noise in the gradient become implicitly regularized; they showed that using a large learning rate in the early phase of training reduces gradient variance and improves the conditioning of the loss surface.
\citet{li2022sharpness} analyzed the sharpness dynamics across all four phases of the GD trajectory in two-layer linear networks, providing a theoretical proof of progressive sharpening in this setting.
\citet{agarwala2023second} showed that second-order regression models exhibit progressive sharpening to the edge of stability, with the NTK eigenvalue sharpening toward a value that can be computed explicitly.
\citet{kalra2023universal} unified all four phases of sharpness dynamics---early reduction, saturation, progressive sharpening, and EoS---through a fixed-point analysis of a two-layer linear network, also characterizing a period-doubling route to chaos as the learning rate is increased.

Progressive sharpening is intimately connected to feature learning: the sharpening arises because gradient steps in the direction of $-\nabla L(\param)$ increase $S(\param)$ when $\alpha(\param) = -\langle\nabla L(\param),\nabla S(\param)\rangle > 0$, which is in turn tied to how the network's representations evolve.
\citet{damian2022representations} demonstrated that gradient descent on two-layer networks can escape the kernel (NTK) regime and learn task-relevant representations, with the sharpening of the loss Hessian being a signature of this feature learning process.
The connection between sharpness and the transition from lazy to rich (feature-learning) regimes has been further studied in the context of $\mu$P and related parameterizations \citep{lewkowycz2020catapult,wang2022regularity}, showing that networks operating at EoS are precisely those undergoing non-trivial feature learning.

\section{Conclusion}
We have presented a theoretical explanation for why mini-batch \sgd{} self-stabilizes at sharpness values strictly below the $2/\eta$ threshold observed in full-batch gradient descent. By extending the self-stabilization framework of \citet{damian2023selfstab} to the stochastic setting, we identified gradient noise as a sharpness-suppressing force: noise projected onto the top Hessian eigenvector inflates the second moment of the oscillatory dynamics, strengthening the cubic restoring force and shifting the equilibrium sharpness downward. Our stochastic coupling theorem formalizes this mechanism, showing that \sgd{} trajectories are well-approximated by a constrained reference trajectory plus stochastic predicted dynamics. From the stationarity conditions of these dynamics, we derived the closed-form sharpness gap $\Delta S = \eta\beta\sigma_{\v{u}}^{2}/(4\alpha)$, which cleanly decomposes the gap into the learning rate, the self-stabilization strength, the projected noise variance, and the progressive sharpening rate. This formula recovers full-batch \gd{} as a special case, predicts that smaller batches yield flatter solutions, and identifies \emph{Batch Sharpness} as the quantity that genuinely saturates at $2/\eta$. Experimental validation across FC-Tanh, FC-ReLU, CNN, and ResNet architectures confirms the predicted power-law scaling of the sharpness gap with batch size, resolving an open question posed by both \citet{cohen2021eos} and \citet{andreyev2025eoss}.
\bibliography{ref}
\bibliographystyle{plainnat}

\newpage
\appendix

\section{Cubic Taylor Expansion Details}\label{app:proofs}

\begin{lemma}[Cubic expansion of the mini-batch gradient]\label{lem:cubic_expansion}
Let $\param^\star\in \mathcal{M}$ Let $\param_{t}=\paramstar+\v{v}_{t}$ with $\norm{\v{v}_{t}}\le R$ for some $R>0$.
Under Assumptions~\ref{ass:sample_reg}, the mini-batch gradient satisfies:
\begin{align}
  \nabla L(\param_{t})
  &= \nabla L(\paramstar)
     + \nabla^2 L(\paramstar) \v{v}_{t}
     + \frac{1}{2}\nabla^{3}L(\paramstar)(\v{v}_{t},\v{v}_{t})
     + \v{r}_{t},\label{eq:cubic_exp}
\end{align}
where $\norm{\v{r}_{t}} \leq O(\rho_{4})\norm{\v{v}_{t}}^{3}$.
\end{lemma}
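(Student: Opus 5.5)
This is a Taylor expansion with integral remainder of the map $\param\mapsto\nabla L(\param)$ about $\paramstar$, taken along the segment $\param(s)=\paramstar+s\v{v}_t$, $s\in[0,1]$. I will assume, as the statement implicitly does, that this segment lies in the neighborhood $\mathcal{N}$ of Assumption~\ref{ass:sample_reg}; for instance $R$ is small enough that the ball of radius $R$ around $\paramstar$ is contained in $\mathcal{N}$. On this segment every sample loss is $C^4$, hence so is $L$, and $\nabla^3 L$ is $\rho_4$-Lipschitz in operator norm.

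\textbf{Step 1: exact second-order expansion with integral remainder.} Apply the fundamental theorem of calculus twice to $s\mapsto\nabla L(\param(s))$, whose derivatives are $\nabla^2L(\param(s))\v{v}_t$ and $\nabla^3L(\param(s))(\v{v}_t,\v{v}_t)$. This gives
\begin{equation*}
\nabla L(\param_t)=\nabla L(\paramstar)+\nabla^2L(\paramstar)\v{v}_t+\int_0^1(1-s)\,\nabla^3L(\paramstar+s\v{v}_t)(\v{v}_t,\v{v}_t)\,ds .
\end{equation*}
This only requires $C^3$ along the segment.

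\textbf{Step 2: isolate the cubic term.} Since $\int_0^1(1-s)\,ds=\tfrac12$, adding and subtracting $\nabla^3L(\paramstar)$ inside the integral produces $\tfrac12\nabla^3L(\paramstar)(\v{v}_t,\v{v}_t)$ plus the remainder
\begin{equation*}
\v{r}_t:=\int_0^1(1-s)\bigl[\nabla^3L(\paramstar+s\v{v}_t)-\nabla^3L(\paramstar)\bigr](\v{v}_t,\v{v}_t)\,ds .
\end{equation*}

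\textbf{Step 3: bound the remainder.} By the Lipschitz bound in Assumption~\ref{ass:sample_reg}, the operator norm of the bracket is at most $\rho_4 s\norm{\v{v}_t}$. Applying it to $(\v{v}_t,\v{v}_t)$ contributes another factor $\norm{\v{v}_t}^2$. Integrating, $\int_0^1 s(1-s)\,ds=\tfrac16$, so $\norm{\v{r}_t}\le\tfrac{\rho_4}{6}\norm{\v{v}_t}^3=O(\rho_4)\norm{\v{v}_t}^3$.

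The statement mentions the mini-batch gradient, but the displayed identity is for $\nabla L$. If the mini-batch version is intended, I would apply the same argument to each $\nabla\ell_i$, using the per-sample bounds ``for all $i$'' in Assumption~\ref{ass:sample_reg}, and then average over $B_t$. The remainder bound is preserved under averaging, so the result holds for $\v{g}_{B_t}$ with the same constant.

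The only delicate point is ensuring the whole segment stays inside $\mathcal{N}$, so that the regularity bounds apply at every $s$. I would handle this by stating it as the standing requirement on $R$ described above. Everything else is routine.
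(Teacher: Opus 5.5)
Your proof is correct and is essentially the paper's argument: an integral-form Taylor expansion of $\nabla L$ along the segment from $\paramstar$ to $\param_t$, ending in the same bound $\norm{\v{r}_t}\le\frac{\rho_4}{6}\norm{\v{v}_t}^3$. The one difference is that you stop at second order and apply the Lipschitz bound on $\nabla^3 L$ directly, which needs only $C^3$ with a Lipschitz third derivative and is slightly cleaner. The paper instead expands to third order with a $\nabla^4 L$ remainder and implicitly converts the Lipschitz constant into $\norm{\nabla^4 L}_{\mathrm{op}}\le\rho_4$. You also state explicitly that the segment must stay inside $\mathcal{N}$, a hypothesis the paper leaves unstated.
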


\begin{proof}
By the integral form of Taylor's theorem applied to $\v{g}_{B_{t}}(\cdot) = \nabla L_{B_{t}}(\cdot)$ around $\paramstar$:
\begin{align*}
  \nabla L(\paramstar+\v{v})
  &= \nabla L(\paramstar)
     + \nabla^2L(\paramstar)\v{v}
     + \frac{1}{2}\nabla^3L(\paramstar)(\v{v},\v{v}) \\
  &\quad\quad\quad   + \underbrace{\int_{0}^{1}\frac{(1-s)^{2}}{2}\nabla^4 L(\paramstar+s\v{v})(\v{v},\v{v},\v{v}) \dd s}_{\v{r}_t}.
\end{align*}
By Assumption~\ref{ass:sample_reg}, we have that $\norm{\nabla^4 L(\paramstar+s\v{v})(\v{v},\v{v},\v{v})}_{\text{op}}\leq \rho_4$. Therefore, 
\[
  \norm{\v{r}_{t}}
  \le \frac{1}{6}\sup_{s\in[0,1]}\norm{\nabla^{4}L(\paramstar+s\v{v}_{t})}_{\op}\norm{\v{v}_{t}}^{3}
  \le \frac{\rho_{4}}{6}\norm{\v{v}_{t}}^{3}.
\]
\end{proof}

\section{Lipschitz Properties of Landscape Quantities}\label{app:dagger_proofs}
 
\begin{lemma}[Lipschitz Properties, {\citet{damian2023selfstab}}]\label{lem:lipschitz}
Define $\calS_t:=B(\paramdagger_t,(2-c_{\mathrm{gap}})/(4\eta\rho_3))$.
Under Assumption~\ref{ass:eigengap},\ref{ass:sample_reg}, it holds that
\begin{enumerate}[label=(\arabic*)]
\item $\param\mapsto\nabla L(\param)$ is $O(\eta^{-1})$-Lipschitz in $\calS_t$.
\item $\param\mapsto \mat{H}(\param)$ is $\rho_3$-Lipschitz.
\item $\param\mapsto\lambda_i( \mat{H}(\param))$ is $\rho_3$-Lipschitz.
\item $\param\mapsto \v{u}(\param)$ is $O(\eta\rho_3)$-Lipschitz in $\calS_t$.
\item $\param\mapsto\nabla S(\param)$ is $L_S$-Lipschitz in $\calS_t$ with $L_S:=O(\eta\rho_3^2)$.
\end{enumerate}
\end{lemma}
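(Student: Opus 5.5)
The plan is to reduce all five statements to two facts on the ball $\calS_t$. First, the Hessian changes by at most $\rho_3$ per unit of distance. Second, the spectral gap between the top two eigenvalues of $\mat{H}(\param)$ stays of order $1/\eta$ throughout the ball. Once these hold, (1), (4) and (5) follow from the mean value theorem and standard eigenvector perturbation bounds. Write $r:=(2-c_{\mathrm{gap}})/(4\eta\rho_3)$ for the radius of $\calS_t$, and note that $\calS_t$ is convex, so any two points in it are joined by a segment lying in $\calS_t$. I will treat $\calS_t\subseteq\mathcal{N}$ so that Assumption~\ref{ass:sample_reg} applies on it; this should be checked against the radius of $\mathcal{N}$, or else imposed as a smallness condition on $r$.

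\textbf{Items (2) and (3).} For (2), write $\mat{H}(\param)-\mat{H}(\param')=\int_0^1\nabla^3L(\param'+s(\param-\param'))(\param-\param',\cdot,\cdot)\,\dd s$. Bounding the integrand by $\norm{\nabla^3L}_{\op}\le\rho_3$ from Assumption~\ref{ass:sample_reg} gives $\norm{\mat{H}(\param)-\mat{H}(\param')}\le\rho_3\norm{\param-\param'}$. Item (3) then follows from Weyl's inequality, $|\lambda_i(\mat{H})-\lambda_i(\mat{H}')|\le\norm{\mat{H}-\mat{H}'}$.

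\textbf{Item (1).} At $\paramdagger_t$ we have $\lambda_1=2/\eta$ by Lemma~\ref{lem:dagger_step}, and $\lambda_2<c_{\mathrm{gap}}/\eta$ by Assumption~\ref{ass:eigengap}. The third part of Assumption~\ref{ass:non_worst} gives $|\lambda_{\min}|\le C_{\mathrm{nw}}\epsilon\norm{\mat{H}_t}$, so $\norm{\mat{H}_t}=2/\eta$ for small $\epsilon$. Using (2), on $\calS_t$ we get $\norm{\mat{H}(\param)}\le 2/\eta+\rho_3 r=O(1/\eta)$. Applying the mean value theorem to $\nabla L$ along a segment in $\calS_t$ then gives the $O(\eta^{-1})$ Lipschitz bound.

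\textbf{Uniform eigengap.} By (3), for $\param\in\calS_t$,
\[
\lambda_1(\mat{H}(\param))-\lambda_2(\mat{H}(\param))\ \ge\ \frac{2-c_{\mathrm{gap}}}{\eta}-2\rho_3 r=\frac{2-c_{\mathrm{gap}}}{2\eta}.
\]
So the top eigenvalue is simple everywhere on $\calS_t$. This means $S$ is differentiable there, Lemma~\ref{lem:sharpness_grad} applies, and $\v{u}(\param)$ can be chosen as a smooth field on $\calS_t$ with a consistent sign. The sign can be fixed continuously because the ball is simply connected and the top eigenspace never becomes degenerate.

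\textbf{Item (4).} For $\param,\param'\in\calS_t$, apply the Davis--Kahan $\sin\Theta$ theorem with the uniform gap $g=(2-c_{\mathrm{gap}})/(2\eta)$:
\[
\norm{\v{u}(\param)-\v{u}(\param')}\lesssim\frac{\norm{\mat{H}(\param)-\mat{H}(\param')}}{g}\le O(\eta\rho_3)\norm{\param-\param'}.
\]
Davis--Kahan needs the perturbation to be small relative to the gap. To guarantee this, I will work locally: apply the bound on short subsegments of $[\param',\param]$, where it holds trivially, and sum along the segment. This also handles the sign issue, since the continuous choice of $\v{u}$ is aligned on nearby points. An alternative route differentiates $\v{u}$ using the resolvent formula, $D\v{u}[\v{h}]=(\lambda_1I-\mat{H})^{+}\nabla^3L(\v{h},\v{u})$, whose norm is at most $\rho_3/g$.

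\textbf{Item (5).} Use $\nabla S=\nabla^3L(\v{u},\v{u})$ from Lemma~\ref{lem:sharpness_grad} and split the difference as
\[
\nabla S(\param)-\nabla S(\param') = \bigl[\nabla^3L(\param)-\nabla^3L(\param')\bigr](\v{u},\v{u})+\nabla^3L(\param')(\v{u}-\v{u}',\v{u})+\nabla^3L(\param')(\v{u}',\v{u}-\v{u}').
\]
The first term is at most $\rho_4\norm{\param-\param'}$. Each of the other two is at most $\rho_3\cdot O(\eta\rho_3)\norm{\param-\param'}$, by Assumption~\ref{ass:sample_reg} and item (4). Finally, $\rho_4\le O(\eta\rho_3^2)$ by Assumption~\ref{ass:scaling}, so $L_S=O(\eta\rho_3^2)$.

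The main obstacle is item (4), specifically keeping the eigengap uniform over the whole ball and choosing the sign of $\v{u}$ consistently. The radius $r$ is chosen exactly so that the gap only halves, and the local, path-integrated form of the perturbation argument is what makes the sign choice consistent.
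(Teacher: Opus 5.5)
Your proposal is correct and follows essentially the argument the paper invokes by deferring to \citet{damian2023selfstab} (Lemma 12). That argument has three pieces: Weyl's inequality, together with the choice of radius, keeps the top eigengap at least $(2-c_{\mathrm{gap}})/(2\eta)$ on $\calS_t$; Davis--Kahan (or the resolvent derivative) then controls $\v{u}$; and the product rule on $\nabla S=\nabla^3 L(\v{u},\v{u})$, combined with $\rho_4\le O(\eta\rho_3^2)$, gives item (5). You are also right that items (1), (4) and (5) silently need more than the two listed assumptions: $S(\paramdagger_t)=2/\eta$ from Lemma~\ref{lem:dagger_step}, the $\lambda_{\min}$ bound in Assumption~\ref{ass:non_worst}, the $\rho_4$ scaling in Assumption~\ref{ass:scaling}, and $\calS_t\subseteq\mathcal{N}$. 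Stating these explicitly, as you do, improves on the lemma as written.
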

\begin{proof}
    The proof is identical to \citet{damian2023selfstab}, Lemma 12.
\end{proof}

\section{Proof of Lemma~\ref{lem:stoch_pred_xy}}\label{app:pred_xy_proof}
 
\begin{proof}
We  first derive the $\hat{x}$ update, then unroll the recursion for $\hat{\v{v}}^{\perp}$ to obtain the $\hat{y}$ formula.
 
\medskip\noindent\textit{Part A: Derivation of the $\hat{x}$ update.}
From Definition~\ref{def:stoch_pred}, project $\hat{\v{v}}_{t+1}$ onto $ \v{u}_{t+1}$:
\begin{align*}
  \hat{x}_{t+1} = \ip{ \v{u}_{t+1}}{\hat{\v{v}}_{t+1}} = \underbrace{\ip{ \v{u}_{t+1}}{P_{ \v{u}_{t+1}}^{\perp}[\cdots]}}_{=0\text{ by definition}}
    - (1+\eta\hat{y}_t)\hat{x}_t - \tfrac{\eta}{2}\kappa_t\hat{x}_t^{2}
    - \eta\underbrace{\ip{ \v{u}_{t+1}}{\tilde{\v{\xi}}_t}}_{\tilde{\zeta}_t}.
\end{align*}
The first term vanishes because $P_{ \v{u}_{t+1}}^{\perp}$ projects orthogonally to $ \v{u}_{t+1}$. This yields~\eqref{eq:xhat_update}.
 
\medskip\noindent\textit{Part B: Unrolling the recursion for $\hat{\v{v}}^{\perp}_{t+1}$.} Define $\hat{\v{v}}_t^{\perp}:=P_{ \v{u}_t}^{\perp}\hat{\v{v}}_t$. Then we have that
\[
    \hat{y}_{t+1} = \langle \nabla S_{t+1}^\perp, \hat{\v{v}}_{t+1}\rangle = \nabla S_{t+1}^\top P_{ \v{u}_t}^{\perp}\hat{\v{v}}_t = \langle \nabla S_{t+1},\hat{\v{v}}^\perp_{t+1}\rangle 
\]
By definition of $\hat{\v{v}}_t$, we have
\begin{align*}
    \hat{\v{v}}_{t+1}^\perp & = P_{ \v{u}_{t+1}}^\perp\left[( \mat{I} - \eta \mat{H}_t)\hat{\v{v}}_t^\perp + \frac{\eta}{2}\nabla S_t^\perp(\delta_t^2 - \hat{x}_t^2)\right] - P_{ \v{u}_{t+1}}^\perp\tilde{\v{\xi}}_t\\
    & = P_{ \v{u}_{t+1}}^\perp( \mat{I} - \eta \mat{H}_t)\hat{\v{v}}_t^\perp + \frac{\eta}{2}P_{ \v{u}_{t+1}}^\perp\nabla S_t^\perp(\delta_t^2 - \hat{x}_t^2) - P_{ \v{u}_{t+1}}^\perp\tilde{\v{\xi}}_t
\end{align*}
We shall prove by induction that
\[
    \hat{\v{v}}_{t+1}^\perp  = P_{ \v{u}_{t+1}}^\perp\prod_{k=t}^0\mat{A}_k\hat{\v{v}}_0^\perp + \eta P_{ \v{u}_{t+1}}^\perp\sum_{s=0}^t\prod_{k=t}^{s+1}\mat{A}_k\left(\nabla S_s^{\perp}\frac{\delta_s^{2}-\hat{x}_s^{2}}{2} - P_{ \v{u}_{s+1}}^{\perp}\tilde{\v{\xi}}_s\right)
\]
where $\mat{A}_k:=(I-\eta \mat{H}_k)P_{ \v{u}_k}^{\perp}$. For base case, it is straight forward to hold as $\hat{\v{v}}_0^\perp = \hat{\v{v}}_0^\perp$. Assume that $\hat{\v{v}}_{t}^\perp$ takes the desired form. Then we have that
\begin{align*}
    \hat{\v{v}}_{t+1}^\perp & = P_{ \v{u}_{t+1}}^\perp( \mat{I} - \eta\mat{H}_t)\left(P_{ \v{u}_{t}}^\perp\prod_{k=t-1}^0\mat{A}_k\hat{\v{v}}_0^\perp + \eta P_{ \v{u}_{t}}^\perp\sum_{s=0}^{t-1}\prod_{k=t-1}^{s+1}\mat{A}_k\left(\nabla S_s^{\perp}\frac{\delta_s^{2}-\hat{x}_s^{2}}{2} - P_{ \v{u}_{s+1}}^{\perp}\tilde{\v{\xi}}_s\right)\right)\\
    & \quad\quad\quad + \eta P_{ \v{u}_{t+1}}^\perp\sum_{s=0}^t\prod_{k=t}^{s+1}\mat{A}_k\left(\nabla S_s^{\perp}\frac{\delta_s^{2}-\hat{x}_s^{2}}{2} - P_{ \v{u}_{s+1}}^{\perp}\tilde{\v{\xi}}_s\right)\\
    & = P_{ \v{u}_{t+1}}\prod_{k=t}^0\mat{A}_t\hat{\v{v}}_0^\perp + \eta P_{ \v{u}_{t+1}}^\perp\sum_{s=0}^t\prod_{k=t}^{s+1}\mat{A}_k\left(\nabla S_s^\perp \frac{\delta_s^{2}-\hat{x}_s^{2}}{2} - P_{ \v{u}_{s+1}}^{\perp}\tilde{\v{\xi}}_s\right)
\end{align*}
finishing the induction proof. This gives that
\begin{align*}
    \hat{y}_{t+1} & = \langle \nabla S_{t+1},\hat{\v{v}}^\perp_{t+1}\rangle \\
    & = \nabla S_{t+1}^\top P_{ \v{u}_{t+1}}^\perp\prod_{k=t}^0\mat{A}_k\hat{\v{v}}_0^\perp + \eta \nabla S_{t+1}^\top P_{ \v{u}_{t+1}}^\perp\sum_{s=0}^t\prod_{k=t}^{s+1}\mat{A}_k\nabla S_s^{\perp}\frac{\delta_s^{2}-\hat{x}_s^{2}}{2}\\
    & \quad\quad\quad - \eta \nabla S_{t+1}^\top P_{ \v{u}_{t+1}}^\perp\sum_{s=0}^t\prod_{k=t}^{s+1}\mat{A}_k P_{ \v{u}_{s+1}}^{\perp}\tilde{\v{\xi}}_s\\
    & = \mathcal{I}_{t+1} + \eta\sum_{s=0}^t\beta_{s\rightarrow t}\frac{\delta_s^2 -\hat{x}_s^2}{2} - \eta\sum_{s=0}^t\langle \v{\gamma}_{s\rightarrow t},\tilde{\v{\xi}}_s\rangle
\end{align*}
\end{proof}

\section{Heuristic Derivation of the Predicted Dynamics}\label{app:pred_dynamics}
 
The enhanced stochastic predicted dynamics (Definition~\ref{def:stoch_pred}) arise from a systematic Taylor expansion of the \sgd{} update rule around the constrained trajectory $\paramdagger_t$, keeping all leading-order terms and dropping subleading corrections.
We carry out this derivation in full.

\begin{lemma}\label{lem:heur_pred}
    Under Assumptions~\ref{ass:eigengap}-\ref{ass:scaling}, we have that
    \begin{align*}
        x_{t+1} & =  -(1 +\eta y_t)x_t - \frac{1}{2}\eta\kappa_tx_t^2 - \eta \langle\v{\xi}_t, \v{u}_{t+1}\rangle + \langle\v{u}_{t+1} - \v{u}_t, \v{v}_{t+1}\rangle  + O\left(E_t\right)\\
        P_{ \v{u}_{t+1}}^\perp\v{v}_{t+1} & = P_{ \v{u}_{t+1}}^\perp\left[( \mat{I} - \eta\mat{H})P_{ \v{u}_t}^\perp\v{v}_t + \eta \nabla S_t^\perp\frac{\delta_t^2 - x_t^2}{2}\right] - \eta P_{ \v{u}_{t+1}}^\perp\v{\xi}_t + O\left(E_t\right)
    \end{align*}
    where $E_t = \frac{\epsilon^2}{\delta}\norm{\v{v}_t}^2 + \frac{\epsilon^2}{\delta^2}\norm{\v{v}_t}^3 + \epsilon^3\delta$.
\end{lemma}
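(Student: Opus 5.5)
The plan is to write the exact \sgd{} update relative to the moving reference and expand the full-batch gradient around $\paramdagger_t$. Since $\param_{t+1}=\param_t-\eta\nabla L(\param_t)-\eta\v{\xi}_t$, subtracting Lemma~\ref{lem:dagger_step} gives
\[
\v{v}_{t+1}=\v{v}_t-\eta\bigl(\nabla L(\param_t)-P^{\perp}_{\v{u}_t,\nabla S_t}\nabla L_t\bigr)-\eta\v{\xi}_t+O(\epsilon_t^2\eta\norm{\nabla L_t}).
\]
Next I apply Lemma~\ref{lem:cubic_expansion} with $\paramstar=\paramdagger_t$. This yields
\[
\nabla L(\param_t)=\nabla L_t+\mat{H}_t\v{v}_t+\tfrac12\nabla^3L_t(\v{v}_t,\v{v}_t)+O(\rho_4\norm{\v{v}_t}^3).
\]
The leftover gradient term is $\nabla L_t-P^\perp_{\v{u}_t,\nabla S_t}\nabla L_t$. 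It is the projection of $\nabla L_t$ onto $\mathrm{span}(\v{u}_t,\nabla S_t)$. The $\v{u}_t$-component vanishes because $\paramdagger_t\in\calM$. The $\nabla S_t^\perp$-component is $\ip{\nabla L_t}{\nabla S_t^\perp}/\beta_t\cdot\nabla S_t^\perp$, and $\ip{\nabla L_t}{\nabla S_t^\perp}$ equals $-\alpha_t$ up to the $\kappa_t$ correction. So this term contributes $+\eta\frac{\alpha_t}{\beta_t}\nabla S_t^\perp=\eta\nabla S_t^\perp\frac{\delta_t^2}{2}$, which is the constant self-stabilization target. The remainder $O(\epsilon_t^2\eta\norm{\nabla L_t})$ should be converted into $O(\epsilon^3\delta)$ using $\eta\norm{\nabla L_t}\lesssim\epsilon\delta$-type bounds from Assumption~\ref{ass:scaling} and the definitions of $\epsilon,\delta$. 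I will have to check this conversion carefully.

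Second, I decompose the cubic term. I write $\v{v}_t=x_t\v{u}_t+\v{v}_t^\perp$, which gives
\[
\nabla^3L_t(\v{v}_t,\v{v}_t)=x_t^2\nabla S_t+2x_t\nabla^3L_t(\v{u}_t,\v{v}_t^\perp)+\nabla^3L_t(\v{v}_t^\perp,\v{v}_t^\perp),
\]
using Lemma~\ref{lem:sharpness_grad}. The three pieces are handled as follows.
\begin{itemize}
\item In the $\v{u}_t$ direction, the first piece produces $\tfrac{\eta}{2}\kappa_tx_t^2$. The cross term gives $\eta x_t\ip{\nabla^3L_t(\v{u}_t,\v{u}_t)}{\v{v}_t^\perp}=\eta x_t\ip{\nabla S_t}{\v{v}_t^\perp}=\eta x_ty_t$. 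Combined with $\v{u}_t^\top(I-\eta\mat{H}_t)\v{v}_t=(1-2)x_t=-x_t$ from $S(\paramdagger_t)=2/\eta$, this yields $-(1+\eta y_t)x_t-\tfrac{\eta}{2}\kappa_tx_t^2$.
\item The third piece is bounded by Assumption~\ref{ass:non_worst} as $\eta\rho_3C_{\rm nw}\epsilon\norm{\v{v}_t}^2=O(\frac{\epsilon^2}{\delta}\norm{\v{v}_t}^2)$, using $\eta\rho_3\lesssim\epsilon/\delta$. That scaling follows from $\delta=\sqrt{2\alpha/\beta}$, $\epsilon=\eta\sqrt\alpha$ and $\norm{\nabla S^\perp}\ge c_\beta\rho_3$ from Assumption~\ref{ass:prog_sharp}.
\item The quartic remainder gives $\eta\rho_4\norm{\v{v}_t}^3\le O(\eta^2\rho_3^2)\norm{\v{v}_t}^3=O(\frac{\epsilon^2}{\delta^2}\norm{\v{v}_t}^3)$.
\end{itemize}

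Third, I handle the change of basis. The left-hand sides are projected with $\v{u}_{t+1}$ rather than $\v{u}_t$. For $x_{t+1}=\ip{\v{u}_{t+1}}{\v{v}_{t+1}}$, I write it as $\ip{\v{u}_t}{\v{v}_{t+1}}+\ip{\v{u}_{t+1}-\v{u}_t}{\v{v}_{t+1}}$. The second term is kept explicitly, as in the statement. The noise is already projected on $\v{u}_{t+1}$, so the discrepancy $\eta\ip{\v{u}_{t+1}-\v{u}_t}{\v{\xi}_t}$ has to be absorbed. I would bound it by Lemma~\ref{lem:lipschitz}(4) with $\norm{\paramdagger_{t+1}-\paramdagger_t}\le\eta\norm{\nabla L_t}$, and then use Assumption~\ref{ass:scaling} on the size of $\v{\xi}_t$ (in high probability/mean-square form) to put it in $O(\epsilon^3\delta)$. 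The orthogonal part is simpler: applying $P^\perp_{\v{u}_{t+1}}$ to the expansion gives $(I-\eta\mat{H}_t)P^\perp_{\v{u}_t}\v{v}_t$, plus $x_t(I-\eta\mat{H}_t)\v{u}_t=-x_t\v{u}_t$ (whose $P^\perp_{\v{u}_{t+1}}$ image is small), plus $-\tfrac{\eta}{2}x_t^2\nabla S_t^\perp$ plus the target term. The cross term $\eta x_t\nabla^3L_t(\v{u}_t,\v{v}_t^\perp)$ projected orthogonally remains, and it must be absorbed.

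I expect the main obstacle to be these residual cross terms and basis-mismatch terms: $P^\perp_{\v{u}_{t+1}}\v{u}_t$, $\eta x_t P^\perp\nabla^3L_t(\v{u}_t,\v{v}_t^\perp)$, and $\kappa_t$-induced corrections. A naive bound such as $\eta\rho_3\norm{\v{v}_t}^2=\frac{\epsilon}{\delta}\norm{\v{v}_t}^2$ is one factor of $\epsilon$ too large. To recover the missing factor I would first try Assumption~\ref{ass:non_worst} (a component of $\nabla^3L_t(\v{u}_t,\cdot)$ restricted to $\v{u}_t^\perp$ beyond $\nabla S_t$). Failing that, I would use $\norm{\v{u}_{t+1}-\v{u}_t}\le O(\eta\rho_3)\eta\norm{\nabla L_t}$, which is $O(\epsilon^2)$ relative to scale. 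If neither suffices, I would follow Damian et al.'s treatment, which tolerates these terms as part of $E_t$.
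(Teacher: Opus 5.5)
Your outline matches the paper's proof step for step:
\begin{itemize}
\item subtract the step of Lemma~\ref{lem:dagger_step} and expand $\nabla L$ to third order at $\paramdagger_t$;
\item rewrite $P_{\v{u}_t,\nabla S_t}\nabla L_t=-\tfrac{\delta_t^2}{2}\nabla S_t^\perp$;
\item split $\nabla^3 L_t(\v{v}_t,\v{v}_t)$ along $\v{u}_t$ and $\v{u}_t^\perp$, and keep $\ip{\v{u}_{t+1}-\v{u}_t}{\v{v}_{t+1}}$ explicitly.
\end{itemize}
Two small corrections. First, $\ip{\nabla L_t}{\nabla S_t^\perp}=-\alpha_t$ is exact, with no $\kappa_t$ correction, because $\ip{\nabla L_t}{\v{u}_t}=0$. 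Second, $\eta\norm{\nabla L_t}=O(\epsilon\delta)$ comes from progressive sharpening, not from Assumption~\ref{ass:scaling}; the paper uses Assumption~\ref{ass:prog_sharp} in the form $\alpha_t=\Theta(\norm{\nabla L_t}\norm{\nabla S_t^\perp})$, together with $\norm{\nabla S_t^\perp}\ge c_\beta\rho_3$.

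For the cross term, commit to your first option; it is the paper's step. The justification the paper leaves implicit is the symmetry of $\nabla^3 L_t$. For $\v{w}\perp\v{u}_t$, $\ip{\v{w}}{\nabla^3 L_t(\v{u}_t,\v{v}_t^\perp)}=\ip{\v{u}_t}{\nabla^3 L_t(\v{v}_t^\perp,\v{w})}$, and now both arguments are orthogonal to $\v{u}_t$. Assumption~\ref{ass:non_worst} then gives $\norm{P_{\v{u}_t}^\perp\nabla^3 L_t(\v{u}_t,\v{v}_t^\perp)}\le C_{\mathrm{nw}}\epsilon\rho_3\norm{\v{v}_t}$, hence an $O(\tfrac{\epsilon^2}{\delta}\norm{\v{v}_t}^2)$ contribution. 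The $P_{\v{u}_{t+1}}-P_{\v{u}_t}$ correction is smaller by a further factor $O(\epsilon^2)$. Your drift-of-$\v{u}$ fallback would not help for this term, since the issue is the $P^\perp_{\v{u}_t}$ component itself.

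Where your proposal actually breaks is the two basis-mismatch terms, and the paper's proof breaks at the same place.
\begin{itemize}
\item With $\norm{\v{u}_{t+1}-\v{u}_t}=O(\epsilon^2)$ from Lemma~\ref{lem:lipschitz}, the term $-x_tP_{\v{u}_{t+1}}^\perp\v{u}_t$ is only bounded by $O(\epsilon^2|x_t|)$. This is not dominated by $E_t$ when $\epsilon\delta\ll|x_t|\ll\delta$.
\item Under Assumption~\ref{ass:scaling}, $\eta\ip{\v{u}_{t+1}-\v{u}_t}{\v{\xi}_t}$ has root-mean-square size up to $\epsilon^{5/2}\delta$, not $\epsilon^3\delta$. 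Being random, it cannot sit in a deterministic $O(E_t)$ at all.
\end{itemize}
The paper does not resolve either term. It writes $\ip{\v{\xi}_t}{\v{u}_{t+1}}$ where expanding $\ip{\v{u}_t}{\v{v}_{t+1}}$ actually yields $\ip{\v{\xi}_t}{\v{u}_t}$. It drops $-x_tP_{\v{u}_{t+1}}^\perp\v{u}_t$ by citing Assumption~\ref{ass:eigengap}, which does not imply it.

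To close the first term, sharpen the drift bound with Assumption~\ref{ass:non_worst}. The leading part of $\paramdagger_{t+1}-\paramdagger_t$ is orthogonal to $\v{u}_t$, so first-order eigenvector perturbation with gap $(2-c_{\mathrm{gap}})/\eta$ gives $\norm{\v{u}_{t+1}-\v{u}_t}\le O(\eta\cdot\epsilon\rho_3\cdot\epsilon\delta)+O(\epsilon^4)=O(\epsilon^3)$. Then $\epsilon^3\norm{\v{v}_t}\le\epsilon^3\delta+\tfrac{\epsilon^2}{\delta}\norm{\v{v}_t}^2=O(E_t)$. Alternatively, enlarge $E_t$ by $\epsilon^2\norm{\v{v}_t}$, which Lemma~\ref{lem:local_error} already absorbs through its $\epsilon^4\norm{\v{v}_t}^2$ term.

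For the noise term, do not try to bound it; move it into the retained term. That is, state the $x$-update with $\ip{\v{u}_{t+1}-\v{u}_t}{\v{v}_{t+1}+\eta\v{\xi}_t}$, for which the identity holds with a deterministic $O(E_t)$. Its square is controlled in Lemma~\ref{lem:local_error} by the same $O(\epsilon^4)(\norm{\v{v}_{t+1}}^2+\eta^2\norm{\v{\xi}_t}^2)$ bound used there.
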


\begin{proof}
Recall that the \sgd{} update is $\param_{t+1} = \param_t - \eta\v{g}_{B_t}(\param_t)$.
Writing $\param_t = \paramdagger_t + \v{v}_t$ and decomposing $\v{g}_{B_t}(\param_t) = \nabla L(\param_t) + \v{\xi}_t$:
\begin{equation}\label{eq:derive_step1}
  \param_{t+1} = \param_t - \eta\nabla L(\param_t) - \eta\v{\xi}_t.
\end{equation}
The constrained trajectory satisfies (Lemma~\ref{lem:dagger_step}):
\[
\paramdagger_{t+1} = \paramdagger_t - \eta P_{ \v{u}_t,\nabla S_t}^{\perp}\nabla L_t + O(\epsilon^2\eta\norm{\nabla L_t}).
\]
Subtracting the two gives:
\begin{equation}\label{eq:derive_step1c}
  \v{v}_{t+1} = \v{v}_t - \eta\left[\nabla L(\paramdagger_t+\v{v}_t) - P_{ \v{u}_t,\nabla S_t}^{\perp}\nabla L_t\right] - \eta\v{\xi}_t  + O(\epsilon^2\eta\norm{\nabla L_t}).
\end{equation}
 
To expand the full-batch gradient, we invoke Lemma~\ref{lem:cubic_expansion}:
\[
  \nabla L(\paramdagger_t+\v{v}_t)
  = \nabla L_t +  \mat{H}_t\v{v}_t + \tfrac{1}{2}\nabla^{3} L_t(\v{v}_t,\v{v}_t) + O(\rho_4\norm{\v{v}_t}^{3}).
\]
Plugging into (\ref{eq:derive_step1c}) gives 
\begin{equation}\label{eq:vt_full}
    \v{v}_{t+1} = \v{v}_t - \eta\left[\nabla L_t - P_{ \v{u}_t,\nabla S_t}^{\perp}\nabla L_t\right] - \mat{H}_t\v{v}_t - \tfrac{1}{2}\nabla^{3} L_t(\v{v}_t,\v{v}_t) - \eta\v{\xi}_t + O(\eta\rho_4\norm{\v{v}_t}^{3} + \epsilon^2\eta\norm{\nabla L_t}).
\end{equation}
The projected gradient satisfies $\nabla L_t - P_{ \v{u}_t,\nabla S_t}^{\perp}\nabla L_t = P_{ \v{u}_t,\nabla S_t}\nabla L_t$.
Since $\ip{ \v{u}_t}{\nabla L_t}=0$ on $\calM$, this projection lies entirely in the $\nabla S_t^{\perp}$ direction. Therefore,
\begin{equation}\label{eq:derive_proj2}
  \nabla L_t - P_{ \v{u}_t,\nabla S_t}^{\perp}\nabla L_t = P_{ \v{u}_t,\nabla S_t}\nabla L_t
  = \frac{\ip{\nabla L_t}{\nabla S_t^{\perp}}}{\norm{\nabla S_t^{\perp}}^{2}}\nabla S_t^{\perp}
  = -\frac{\alpha_t}{\beta_t}\nabla S_t^{\perp}
  = -\frac{\delta_t^{2}}{2}\nabla S_t^{\perp},
\end{equation}
where we used $\alpha_t=-\ip{\nabla L_t}{\nabla S_t^{\perp}}$ (since $\ip{ \v{u}_t}{\nabla L_t}=0$), $\beta_t=\norm{\nabla S_t^{\perp}}^{2}$, and $\delta_t^{2}=2\alpha_t/\beta_t$.
 To handle the cubic term, we
decompose $\v{v}_t = x_t \v{u}_t + \v{v}_t^{\perp}$ where $\v{v}_t^{\perp}=P_{ \v{u}_t}^{\perp}\v{v}_t$.
The cubic term expands bilinearly (using symmetry of $\nabla^{3} L_t$):
\begin{equation}\label{eq:derive_cubic}
  \tfrac{1}{2}\nabla^{3} L_t(\v{v}_t,\v{v}_t)
  = \tfrac{1}{2}x_t^{2}\cdot\nabla^3L_t(\v{u}_t,\v{u}_t) + x_t\cdot \nabla^{3} L_t( \v{u}_t,\v{v}_t^{\perp}) + \tfrac{1}{2}\nabla^{3} L_t(\v{v}_t^{\perp},\v{v}_t^{\perp}).
\end{equation}
By Assumption~\ref{ass:sample_reg},\ref{ass:non_worst}, and noticing that $|x_t|,\norm{\v{v}_t^\perp}\leq \norm{\v{v}_t}$ we have that $\nabla^3L_t(\v{v}_t^\perp,\v{v}_t^\perp) \leq O(\epsilon\rho_3\norm{\v{v}_t}^2)$. Moreover, by Lemma~\ref{lem:sharpness_grad} additionally, we 
\[
\frac{1}{2}x_t^2\cdot \nabla^3L_t(\v{u}_t,\v{u}_t) = \frac{x_t^2}{2}\nabla S_t = \frac{x_t^2}{2}\nabla S_t^\perp + \frac{x_t^2}{2}\langle \nabla S_t, \v{u}_t\rangle\v{u}_t = \frac{x_t^2}{2}\nabla S_t^\perp + \frac{\kappa_t}{2}x_t^2\cdot \v{u}_t
\]
Combining gives
\begin{align*}
    \v{v}_{t+1} & = ( \mat{I} - \eta\mat{H}_t)\v{v}_t + \eta\cdot\frac{\delta_t^2-x_t^2}{2}\nabla S_t^\perp -  \frac{1}{2}\eta \kappa_tx_t^2\cdot \v{u}_t - \eta x_t\cdot \nabla^{3} L_t( \v{u}_t,\v{v}_t^{\perp}) - \eta\v{\xi}_t\\
    & \quad\quad\quad + O(\eta\epsilon\rho_3\norm{\v{v}_t}^2 + \rho_4\norm{\v{v}_t}^3 + \epsilon^2\eta\norm{\nabla L_t})
\end{align*}
Notice that, by Assumption~\ref{ass:prog_sharp} and the definition that $\beta_t = \norm{\nabla S_t}^2 = \Theta(\rho_3^2)$, we have 
\[
    \alpha_t = \Theta\left(\norm{\nabla L_t}\norm{\nabla S_t^\perp}\right) = \Theta(\rho_3\norm{\nabla L_t}) \Rightarrow \eta \norm{\nabla L_t} = \Theta\left(\frac{\eta\alpha_t}{\rho_3}\right) = \Theta\left(\eta \sqrt{\alpha_t} \cdot \frac{\sqrt{\alpha_t}}{\sqrt{\beta_t}}\right) = \Theta(\epsilon\delta)
\]
Also, we have that
\[
    \eta\rho_3\delta = \Theta\left(\eta \sqrt{\beta_t}\cdot \frac{\sqrt{\alpha_t}}{\sqrt{\beta_t}}\right) = \Theta (\eta\sqrt{\alpha_t}) = \Theta(\epsilon)
\]
Therefore, combining with Assumption~\ref{ass:scaling}, we have that
\begin{align*}
    \v{v}_{t+1} & = ( \mat{I} - \eta\mat{H}_t)\v{v}_t + \eta\cdot\frac{\delta_t^2- x_t^2}{2}\nabla S_t^\perp -  \frac{1}{2}\eta\kappa_tx_t^2\cdot \v{u}_t - \eta x_t\cdot \nabla^{3} L_t( \v{u}_t,\v{v}_t^{\perp}) - \eta\v{\xi}_t\\
    & \quad\quad\quad + O\left(\frac{\epsilon^2}{\delta}\norm{\v{v}_t}^2 + \frac{\epsilon^2}{\delta^2}\norm{\v{v}_t}^3 + \epsilon^3\delta\right)
\end{align*}
We will track the projection of $\v{v}_{t+1}$ onto $\v{u}_{t+1}$ and $P_{ \v{u}_{t+1}}^\perp$. To start, we have that
\begin{align*}
    x_{t+1} & = \langle \v{v}_{t+1},\v{u}_{t+1}\rangle\\
    & = \langle \v{v}_{t+1},\v{u}_{t}\rangle + \langle\v{u}_{t+1} - \v{u}_t, \v{v}_{t+1}\rangle\\
    & = \v{u}_t^\top( \mat{I} - \eta H_t)\v{v}_t - \frac{1}{2}\eta\kappa_tx_t^2 - \eta x_t\langle \nabla S_t, \v{v}_t^\perp \rangle - \eta \langle\v{\xi}_t, \v{u}_{t+1}\rangle\\
    & \quad\quad\quad + \langle\v{u}_{t+1} - \v{u}_t, \v{v}_{t+1}\rangle + O\left(\frac{\epsilon^2}{\delta}\norm{\v{v}_t}^2 + \frac{\epsilon^2}{\delta^2}\norm{\v{v}_t}^3 + \epsilon^3\delta\right)\\
    & = -(1 +\eta y_t)x_t - \frac{1}{2}\eta\kappa_tx_t^2 - \eta \langle\v{\xi}_t, \v{u}_{t+1}\rangle + \langle\v{u}_{t+1} - \v{u}_t, \v{v}_{t+1}\rangle\\
    & \quad\quad\quad + O\left(\frac{\epsilon^2}{\delta}\norm{\v{v}_t}^2 + \frac{\epsilon^2}{\delta^2}\norm{\v{v}_t}^3 + \epsilon^3\delta\right)
\end{align*}
Projecting onto $P_{ \v{u}_{t+1}}^\perp$ gives
\begin{align*}
    P_{ \v{u}_{t+1}}^\perp\v{v}_{t+1} & = P_{ \v{u}_{t+1}}^\perp\left[( \mat{I} - \eta\mat{H})\v{v}_t + \eta \nabla S_t^\perp\frac{\delta_t^2 - x_t^2}{2}\right] - \eta x_tP_{ \v{u}_{t+1}}^\perp\nabla^3L_t(\v{u}_t,\v{v}_t^\perp) - \eta P_{ \v{u}_{t+1}}^\perp\v{\xi}_t \\
    & \quad\quad\quad + O\left(\frac{\epsilon^2}{\delta}\norm{\v{v}_t}^2 + \frac{\epsilon^2}{\delta^2}\norm{\v{v}_t}^3 + \epsilon^3\delta\right)
\end{align*}
We have that
\[
    \eta x_tP_{ \v{u}_{t+1}}^\perp\nabla^3L_t(\v{u}_t,\v{v}_t^\perp) = O\left(\eta |x_t|\norm{P_{ \v{u}_{t}}^\perp\nabla^3L_t(\v{u}_t,\v{v}_t^\perp)}\right) + O\left(\eta|x_t|\norm{P_{ \v{u}_{t+1}} - P_{ \v{u}_t}}\norm{\nabla^3L_t(\v{u}_t,\v{v}_t^\perp)}\right)
\]
where, by Assumption~\ref{ass:non_worst},
\[                 
    \norm{P_{ \v{u}_{t}}^\perp\nabla^3L_t(\v{u}_t,\v{v}_t^\perp)} \leq O(\epsilon \rho_3\norm{\v{v}_t})
\]
and by Lemma~\ref{lem:lipschitz}
\[
    \norm{P_{ \v{u}_{t+1}} - P_{ \v{u}_t}} \leq O(\norm{\v{u}_{t+1} - \v{u}_t}) \leq O(\eta^2\rho_3\norm{\nabla L_t}) \leq O(\epsilon^2)
\]
Thus, we have that
\[
    \norm{\eta x_tP_{ \v{u}_{t+1}}^\perp\nabla^3L_t(\v{u}_t,\v{v}_t^\perp)} \leq O(\eta\epsilon\rho_3\norm{\v{v}_t}^2 + \eta\epsilon^2\rho_3\norm{\v{v}_t}^2) \leq O\left(\epsilon^2\frac{\norm{\v{v}_t}^2}{\delta}\right)
\]
Therefore, the projection of $\v{v}_{t+1}$ onto $P_{ \v{u}_{t+1}}$ is given by
\begin{align*}
    P_{ \v{u}_{t+1}}^\perp\v{v}_{t+1} & = P_{ \v{u}_{t+1}}^\perp\left[( \mat{I} - \eta\mat{H})\v{v}_t + \eta \nabla S_t^\perp\frac{\delta_t^2 - x_t^2}{2}\right] - \eta P_{ \v{u}_{t+1}}^\perp\v{\xi}_t + O\left(\frac{\epsilon^2}{\delta}\norm{\v{v}_t}^2 + \frac{\epsilon^2}{\delta^2}\norm{\v{v}_t}^3 + \epsilon^3\delta\right)\\
    & = P_{ \v{u}_{t+1}}^\perp\left[( \mat{I} - \eta\mat{H})P_{ \v{u}_t}^\perp\v{v}_t + \eta \nabla S_t^\perp\frac{\delta_t^2 - x_t^2}{2}\right] - \eta P_{ \v{u}_{t+1}}^\perp\v{\xi}_t + O\left(\frac{\epsilon^2}{\delta}\norm{\v{v}_t}^2 + \frac{\epsilon^2}{\delta^2}\norm{\v{v}_t}^3 + \epsilon^3\delta\right)
\end{align*}
where the last equality follows from Assumption~\ref{ass:eigengap}.
\end{proof}
 
% ════════════════════════════════════════════════════════════
\section{Complete Proof of the Stochastic Coupling Theorem}\label{app:coupling_proof}
% ════════════════════════════════════════════════════════════
 
The proof follows a four-lemma structure: $(i)$~local growth of the enhanced step map; $(ii)$~one-step local truncation error; $(iii)$~mean-square boundedness; $(iv)$~mean-square coupling.
 
\subsection{Local Growth of the Enhanced Step Map}
 
\begin{lemma}[Local Growth Bound]\label{lem:local_growth}
Let Assumption~\ref{ass:eigengap}-\ref{ass:scaling} hold. If $\norm{\v{v}}\le R\delta$ for some $R$ satisfying $1 \le R \le O(\epsilon^{-1})$, then
\begin{equation}\label{eq:local_growth}
\norm{\estep_t(\v{v})}^{2}\le(1 + O(\epsilon R))\norm{\v{v}}^2 + O(\epsilon R^3\delta^2).
\end{equation}
\end{lemma}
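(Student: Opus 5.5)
Write $\v{v}=x\v{u}_t+\v{v}^\perp$ with $\v{v}^\perp=P_{\v{u}_t}^\perp\v{v}$, so that $x^2+\norm{\v{v}^\perp}^2=\norm{\v{v}}^2$ and $|y|\le\sqrt{\beta_t}\norm{\v{v}}=O(\rho_3R\delta)$. Since $\v{u}_{t+1}$ and $P_{\v{u}_{t+1}}^\perp$ are complementary orthogonal projections,
\[
\norm{\estep_t(\v{v})}^2=\ip{\v{u}_{t+1}}{\estep_t(\v{v})}^2+\norm{P_{\v{u}_{t+1}}^\perp\estep_t(\v{v})}^2 .
\]
The plan is to bound the two pieces separately, using the scale relations from the proof of Lemma~\ref{lem:heur_pred}: $\eta\rho_3\delta=\Theta(\epsilon)$ and $\eta\sqrt{\beta_t}\,\delta=\Theta(\epsilon)$. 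Adding the two bounds should then give $(1+O(\epsilon R))(x^2+\norm{\v{v}^\perp}^2)+O(\epsilon R^3\delta^2)$.

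\emph{The $\v{u}$-component.} Here $|\ip{\v{u}_{t+1}}{\estep_t(\v{v})}|\le|x|\,(1+\eta|y|+\tfrac{\eta}{2}|\kappa_t||x|)$. We have $\eta|y|\le\eta\sqrt{\beta_t}\,R\delta=O(\epsilon R)$ and $\eta|\kappa_t||x|\le\eta\rho_3R\delta=O(\epsilon R)$. Squaring gives $x^2(1+O(\epsilon R))^2=(1+O(\epsilon R))x^2$. The requirement $R\le O(\epsilon^{-1})$ is what keeps $(1+O(\epsilon R))^2=1+O(\epsilon R)$.

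\emph{The perpendicular component.} Write $P_{\v{u}_{t+1}}^\perp\estep_t(\v{v})=P_{\v{u}_{t+1}}^\perp[\mat{A}_t\v{v}+\v{b}]$ with $\v{b}=\eta\nabla S_t^\perp(\delta_t^2-x^2)/2$. Since $\mat{A}_t\v{v}=(I-\eta\mat{H}_t)\v{v}^\perp$ and $\v{v}^\perp\perp\v{u}_t$, the eigengap (Assumption~\ref{ass:eigengap}) together with the bound on $\lambda_{\min}$ in Assumption~\ref{ass:non_worst} gives $\norm{(I-\eta\mat{H}_t)P_{\v{u}_t}^\perp}_{\op}\le\max(c_{\mathrm{gap}}-1,\,1+O(\epsilon))\le1+O(\epsilon)$. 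Hence $\norm{\mat{A}_t\v{v}}^2\le(1+O(\epsilon))\norm{\v{v}^\perp}^2$. For the forcing term, $\norm{\v{b}}\le\eta\sqrt{\beta_t}\,O(R^2\delta^2)=O(\epsilon R^2\delta)$, using $\delta_t\le\delta$ and $R\ge1$.

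Expand $\norm{\mat{A}_t\v{v}+\v{b}}^2=\norm{\mat{A}_t\v{v}}^2+2\ip{\mat{A}_t\v{v}}{\v{b}}+\norm{\v{b}}^2$.
\begin{itemize}
\item The square term is $\norm{\v{b}}^2=O(\epsilon^2R^4\delta^2)\le O(\epsilon R^3\delta^2)$, because $\epsilon R\le O(1)$.
\item The cross term is the main obstacle. The crude bound $2\norm{\v{v}^\perp}\norm{\v{b}}\le O(\epsilon R^3\delta^2)$ already fits the target error $O(\epsilon R^3\delta^2)$, so I would use it directly. If it turned out to be too lossy, I would fall back on AM--GM: $2ab\le\epsilon R\,a^2+b^2/(\epsilon R)$, which gives $O(\epsilon R)\norm{\v{v}^\perp}^2+O(\epsilon R^3\delta^2)$, the same form.
\end{itemize}
Applying the projection $P_{\v{u}_{t+1}}^\perp$ only decreases the norm.

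\emph{Conclusion.} Summing the two components gives $(1+O(\epsilon R))\norm{\v{v}}^2+O(\epsilon R^3\delta^2)$, which is \eqref{eq:local_growth}. The delicate points are, first, confirming that the operator-norm bound on $(I-\eta\mat{H}_t)$ restricted to $\v{u}_t^\perp$ is $1+O(\epsilon)$ rather than something larger, which relies on the Assumption~\ref{ass:non_worst} control of $\lambda_{\min}$; and second, checking that $\delta_t\le\delta$ and $\eta\sqrt{\beta_t}\,\delta=O(\epsilon)$ hold uniformly in $t$. The latter follows because $\eta\sqrt{\beta_t}\,\delta_t=\sqrt2\,\epsilon_t$ and $\delta/\delta_t$ is $O(1)$ by Assumption~\ref{ass:prog_sharp}.
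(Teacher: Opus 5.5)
Your proof is correct and follows the paper's argument. It uses the same split of $\estep_t(\v{v})$ into its $\v{u}_{t+1}$ and $P_{\v{u}_{t+1}}^\perp$ components, the same $(1+O(\epsilon R))|x|$ bound on the first, and the same eigengap/$\lambda_{\min}$ operator-norm bound plus $O(\epsilon R^2\delta)$ forcing-term bound on the second; your bound of the perpendicular part by $\norm{\v{v}^\perp}^2$ even makes the final summation cleaner than the paper's version, which uses $\norm{\v{v}}^2$ there. One minor fix: $\delta/\delta_t=O(1)$ does not follow from Assumption~\ref{ass:prog_sharp}, but you don't need it, since $\eta\sqrt{\beta_t}\,\delta\le\eta\rho_3\,\delta=\sup_s\eta\rho_3\sqrt{2\alpha_s}/\norm{\nabla S_s^\perp}\le\sqrt{2}\,\epsilon/c_\beta$.
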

 
\begin{proof}
Let $x=\ip{ \v{u}_t}{\v{v}}$, $y=\ip{\nabla S_t^{\perp}}{\v{v}}$. To start, we bound the $ \v{u}_{t+1}$-component. By definition of the enhanced step map:
\[
  \langle \v{u}_{t+1},\estep_t(\v{v})\rangle  = -(1+\eta y)x - \tfrac{\eta}{2}\kappa_t x^{2}.
\]
We bound each factor.
Since $\eta|y| \le \eta\norm{\nabla S_t^{\perp}}\norm{\v{v}} \le \eta\rho_3 R\delta$ and $\eta\rho_3\delta = O(\epsilon)$, we have $\eta|y| = O(\epsilon R)$.
Similarly, $\frac{\eta}{2}|\kappa_t||x| \le \frac{\eta}{2}\rho_3 R\delta = O(\epsilon R)$. Therefore,
\begin{equation}\label{eq:wu_growth}
  |\langle \v{u}_{t+1},\estep_t(\v{v})\rangle| \le (1 + O(\epsilon R))|x| 
\end{equation}
Squaring: $\langle \v{u}_{t+1},\estep_t(\v{v})\rangle^{2} \le (1+O(\epsilon R))^{2}|x|^{2} \le (1+O(\epsilon R))|x|^{2}$ since $R \leq O(\epsilon^{-1})$. Next, we bound the $P_{ \v{u}_{t+1}}^{\perp}$-component.
\begin{align*}
  P_{ \v{u}_{t+1}}^\perp \estep_t(\v{v}) = 
  &= P_{ \v{u}_{t+1}}^{\perp}\left[(I-\eta \mat{H}_t)P_{ \v{u}_t}^{\perp}\v{v} + \eta\nabla S_t^{\perp}\tfrac{\delta_t^{2}-x^{2}}{2}\right].
\end{align*}
Notice that for all $\v{v}^\perp \perp \v{u}_t$, we have that
\[
    \norm{(I-\eta \mat{H}_t)\v{v}^\perp} \leq \max_{i \geq 2}|1 - \lambda_i(\mat{H}_t)|\norm{\v{v}^\perp}
\]
By Assumption~\ref{ass:eigengap} and Assumption~\ref{ass:non_worst}, we obtain that
\[
    \max_{i \geq 2}|1 - \lambda_i(\mat{H}_t)| \leq 1 + O(\epsilon) \Rightarrow \norm{(I-\eta \mat{H}_t)P_{ \v{u}_t}\v{v}} \leq (1 + O(\epsilon))\norm{\v{v}}
\]
To bound the term $\eta\nabla S_t^{\perp}\tfrac{\delta_t^{2}-x^{2}}{2}$, we notice that $\norm{\nabla S_t^\perp} \leq \rho_3$ and $|\delta_t^2 - x_t^2| \leq (1 + R^2)\delta^2$. Therefore,
\[
    \norm{\eta\nabla S_t^{\perp}\tfrac{\delta_t^{2}-x^{2}}{2}} \leq O((1+R^2)\eta\rho_3\delta^2) \leq O((1+R^2)\epsilon\delta)
\]
Therefore,
\[
    \norm{P_{ \v{u}_{t+1}}^\perp \estep_t(\v{v})}^2 \leq (1 + O(\epsilon R))\norm{\v{v}}^2 + O(\epsilon R^3\delta^2)
\]
which implies that
\[
    \norm{\estep_t(\v{v})}^2 \leq (1 + O(\epsilon R))\norm{\v{v}}^2 + O(\epsilon R^3\delta^2)
\]
\end{proof}
 
\subsection{One-Step Local Truncation Error}
 
\begin{lemma}[One-Step Local Truncation Error]\label{lem:local_error}
Let Assumption~\ref{ass:eigengap}-\ref{ass:scaling} hold. Define $\v{\varepsilon}_t = \v{v}_{t+1} - \estep_t^S(\v{v}_t)$. If $\norm{\v{v}_t}\leq R\delta$ for some $R$ satisfying $1 \leq R\leq O(\epsilon^{-1})$, then we have that
\begin{align*}
    \E{\norm{\v{\varepsilon}_t}^2\mid \calF_t} \leq O(\epsilon^4\eta^2)\E{\norm{\v{\xi}_t}^2\mid \calF_t} + O\left(\frac{\epsilon^4}{\delta^2}\norm{\v{v}_t}^4 + \frac{\epsilon^4}{\delta^6}\norm{\v{v}_t}^6 + \epsilon^4\norm{\v{v}_t}^2 + \epsilon^5 R^3\delta^2\right)
\end{align*}
\end{lemma}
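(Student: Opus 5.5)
The plan is to build directly on Lemma~\ref{lem:heur_pred}, which already expands the true SGD step $\v{v}_{t+1}$ around $\paramdagger_t$ up to an $O(E_t)$ remainder. Compare that expansion with the stochastic step map $\estep_t^S(\v{v}_t)=\estep_t(\v{v}_t)-\eta\v{\xi}_t$. Since $\estep_t^S$ is defined componentwise in the $\v{u}_{t+1}$ and $P_{\v{u}_{t+1}}^\perp$ directions, decompose
\[
\v{\varepsilon}_t = \bigl\langle \v{u}_{t+1},\v{\varepsilon}_t\bigr\rangle \v{u}_{t+1} + P_{\v{u}_{t+1}}^\perp \v{\varepsilon}_t,
\]
and bound the two pieces separately. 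Because they are orthogonal, their squared norms add.

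\textbf{Perpendicular part.} Lemma~\ref{lem:heur_pred} matches $P_{\v{u}_{t+1}}^\perp\estep_t^S(\v{v}_t)$ exactly, up to $O(E_t)$, with the noise term $-\eta P_{\v{u}_{t+1}}^\perp\v{\xi}_t$ appearing identically on both sides. Hence $\norm{P_{\v{u}_{t+1}}^\perp\v{\varepsilon}_t}^2\le O(E_t^2)$. Expanding $E_t^2$ with $(a+b+c)^2\le 3(a^2+b^2+c^2)$ gives
\[
\frac{\epsilon^4}{\delta^2}\norm{\v{v}_t}^4+\frac{\epsilon^4}{\delta^4}\norm{\v{v}_t}^6+\epsilon^6\delta^2 .
\]
The middle term must be reconciled with the stated $\epsilon^4\norm{\v{v}_t}^6/\delta^6$. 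I expect this to be a normalization of $\delta$, or else harmless given $\norm{\v{v}_t}\le R\delta$. The last term is absorbed into $\epsilon^5R^3\delta^2$ since $R\ge1$.

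\textbf{Parallel part.} Here two discrepancies remain beyond $O(E_t)$. The first is the noise mismatch: the true dynamics carry $-\eta\langle\v{\xi}_t,\v{u}_{t+1}\rangle$, which $\estep_t^S$ reproduces, so the discrepancy reduces to $\eta\langle \v{\xi}_t,\v{u}_{t+1}-\v{u}_t\rangle$ if the step map uses $\v{u}_t$. The second is the rotation term $\langle \v{u}_{t+1}-\v{u}_t,\v{v}_{t+1}\rangle$. For both, use Lemma~\ref{lem:lipschitz}(4): $\norm{\v{u}_{t+1}-\v{u}_t}\le O(\eta\rho_3)\norm{\paramdagger_{t+1}-\paramdagger_t}\le O(\eta^2\rho_3\norm{\nabla L_t})=O(\epsilon^2)$, exactly as in the proof of Lemma~\ref{lem:heur_pred}. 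The noise mismatch then has conditional second moment $O(\epsilon^4\eta^2)\E{\norm{\v{\xi}_t}^2\mid\calF_t}$, which is the first term of the claim. For the rotation term, write $\v{v}_{t+1}=\estep_t(\v{v}_t)-\eta\v{\xi}_t+\v{\varepsilon}_t$. Lemma~\ref{lem:local_growth} gives $\norm{\estep_t(\v{v}_t)}^2\le(1+O(\epsilon R))\norm{\v{v}_t}^2+O(\epsilon R^3\delta^2)$, so the rotation term contributes
\[
O(\epsilon^4)\bigl[\norm{\v{v}_t}^2+\epsilon R^3\delta^2+\eta^2\norm{\v{\xi}_t}^2\bigr].
\]
These account for the $\epsilon^4\norm{\v{v}_t}^2$ and $\epsilon^5R^3\delta^2$ terms and another noise term of the same form. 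The self-referential $\epsilon^2\norm{\v{\varepsilon}_t}$ contribution is absorbed into the left-hand side, since $\epsilon$ is small.

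\textbf{Conclusion and main obstacle.} Finally, take conditional expectations given $\calF_t$. All non-noise terms are $\calF_t$-measurable, and cross terms are handled by $(a+b)^2\le 2a^2+2b^2$, so no martingale structure is needed. The main obstacle is the bookkeeping of the $\v{u}_{t+1}$ versus $\v{u}_t$ mismatch: it appears both in the noise projection and in the rotation term, and the latter involves $\v{v}_{t+1}$ itself. I would handle this first by the absorption argument above, checking that the Lipschitz bound on $\v{u}$ applies because $\paramdagger_{t+1}\in\calS_t$. Beyond that, only the routine conversions $\eta\rho_3\delta=\Theta(\epsilon)$ and $\eta\norm{\nabla L_t}=\Theta(\epsilon\delta)$ are needed.
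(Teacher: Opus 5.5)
Your proposal is correct and follows essentially the same route as the paper: Lemma~\ref{lem:heur_pred} leaves only the rotation term $\langle \v{u}_{t+1}-\v{u}_t,\v{v}_{t+1}\rangle\v{u}_{t+1}$ beyond $O(E_t)$. The paper bounds it the same way you do, via $\norm{\v{u}_{t+1}-\v{u}_t}=O(\epsilon^2)$, the expansion $\v{v}_{t+1}=\estep_t(\v{v}_t)-\eta\v{\xi}_t+\v{\varepsilon}_t$ with Lemma~\ref{lem:local_growth}, and absorption of the resulting $O(\epsilon^4)\E{\norm{\v{\varepsilon}_t}^2\mid\calF_t}$ term.

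On the $\norm{\v{v}_t}^6$ term, the paper's own proof also obtains $\epsilon^4\norm{\v{v}_t}^6/\delta^4$ and then writes $\delta^{-6}$ in its final line. The $\delta^{-6}$ version is dimensionally inconsistent (every other term scales as a squared length, since $\epsilon$ is scale-free and $\delta$ is a length), so your $\delta^{-4}$ term is the right one and no normalization of $\delta$ is involved. It is not literally "harmless" either: $\norm{\v{v}_t}\le R\delta$ only turns it into $R^2\epsilon^4\norm{\v{v}_t}^4/\delta^2$, which is enough for the later use with $R=O(1)$.
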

 
\begin{proof}
Recall that, by definition, $\estep_t^S(\v{v}_t) = \estep_t(\v{v}_t) - \eta\v{\xi}_ts$ with 
\begin{align*}
    P_{ \v{u}_{t+1}}^\perp\estep_t(\v{v}_t) &= P_{ \v{u}_{t+1}}^\perp\left[\mat{A}_t\v{v}_t + \eta\nabla S_t^\perp\frac{\delta_t^2 - x_t^2}{2}\right]\\
    \langle \v{u}_{t+1},\estep_t(\v{v}_t)\rangle & = - (1+\eta y_t)x_t - \frac{\eta}{2}\kappa_tx_t^2
\end{align*}
Thus, let $\zeta_t = \langle \v{u}_{t+1}, \v{\xi}_t\rangle$, we have
\begin{align*}
    P_{ \v{u}_{t+1}}^\perp\estep_t^S(\v{v}_t) &= P_{ \v{u}_{t+1}}^\perp\left[\mat{A}_t\v{v}_t + \eta\nabla S_t^\perp\frac{\delta_t^2 - x_t^2}{2}\right] - \eta P_{ \v{u}_{t+1}}^\perp\v{\xi}\\
    \langle \v{u}_{t+1},\estep_t^S(\v{v}_t)\rangle & = - (1+\eta y_t)x_t - \frac{\eta}{2}\kappa_tx_t^2 - \eta\zeta_t
\end{align*}
By Lemma~\ref{lem:heur_pred}, we have that
\begin{align*}
    P_{ \v{u}_{t+1}}^\perp\v{v}_{t+1} & = P_{ \v{u}_{t+1}}^\perp\left[\mat{A}_t\v{v}_t + \eta \nabla S_t^\perp\frac{\delta_t^2 - x_t^2}{2}\right] - \eta P_{ \v{u}_{t+1}}^\perp\v{\xi}_t + O\left(E_t\right)\\
    \langle \v{u}_{t+1},\v{v}_{t+1}\rangle & =  -(1 +\eta y_t)x_t - \frac{1}{2}\eta\kappa_tx_t^2 - \eta \zeta_t + \langle\v{u}_{t+1} - \v{u}_t, \v{v}_{t+1}\rangle  + O\left(E_t\right)
\end{align*}
where $E_t = \frac{\epsilon^2}{\delta}\norm{\v{v}_t}^2 + \frac{\epsilon^2}{\delta^2}\norm{\v{v}_t}^3 + \epsilon^3\delta$. Therefore $\norm{P_{ \v{u}_{t+1}}^\perp\v{v}_{t+1} - P_{ \v{u}_{t+1}}^\perp\estep_t^S(\v{v}_t)} = O(E_t)$ and
\begin{align*}
    \langle \v{u}_{t+1}, \v{v}_{t+1} - \estep_t^S(\v{v}_t)\rangle = \langle\v{u}_{t+1} - \v{u}_t, \v{v}_{t+1}\rangle  + O\left(E_t\right)
\end{align*}
This gives that
\begin{align*}
    \v{\varepsilon}_t & = \langle\v{u}_{t+1} - \v{u}_t, \v{v}_{t+1}\rangle\v{u}_{t+1} + O\left(\frac{\epsilon^2}{\delta}\norm{\v{v}_t}^2 + \frac{\epsilon^2}{\delta^2}\norm{\v{v}_t}^3 + \epsilon^3\delta\right)
\end{align*}
which implies that
\begin{align*}
    \norm{\v{\varepsilon}_t}^2 & \leq 2\norm{\v{u}_{t+1} - \v{u}_t}^2\norm{\v{v}_{t+1}}^2 + O\left(\frac{\epsilon^4}{\delta^2}\norm{\v{v}_t}^4 + \frac{\epsilon^4}{\delta^4}\norm{\v{v}_t}^6 + \epsilon^6\delta^2\right)\\
    & \leq O(\epsilon^4)\norm{\v{v}_{t+1}}^2 + O\left(\frac{\epsilon^4}{\delta^2}\norm{\v{v}_t}^4 + \frac{\epsilon^4}{\delta^4}\norm{\v{v}_t}^6 + \epsilon^6\delta^2\right)
\end{align*}
Next we bound $\norm{\v{v}_{t+1}}$. We notice that
\[
    \v{v}_{t+1} = \estep_t^S(\v{v}_t) + \v{\varepsilon}_t = \estep_t^S(\v{v}_t) + \v{\varepsilon}_t = \estep_t(\v{v}_t) - \eta\v{\xi}_t + \v{\varepsilon}_t
\]
Therefore,
\begin{align*}
    \E{\norm{\v{v}_{t+1}}^2\mid \calF_t} & = 2\E{\norm{\estep_t(\v{v}_t)}^2\mid \calF_t} + 2\E{\norm{\v{\varepsilon}_t}^2\mid \calF_t} + \eta^2\E{\norm{\v{\xi}_t}^2\mid \calF_t}\\
    & \leq O(1)\norm{\v{v}_t}^2 + O(\epsilon R^3\delta^2) + 2\E{\norm{\v{\varepsilon}_t}^2\mid \calF_t} + \eta^2\E{\norm{\v{\xi}_t}^2\mid \calF_t}
\end{align*}
Plugging back into the bound on $\v{\varepsilon}_t$ gives
\begin{align*}
    \E{\norm{\v{\varepsilon}_t}^2\mid \calF_t} & \leq O(\epsilon^4)\E{\norm{\v{\varepsilon}_t}^2\mid \calF_t} + O(\epsilon^4\eta^2)\E{\norm{\v{\xi}_t}^2\mid\calF_t} \\
    & \quad\quad\quad O\left(\frac{\epsilon^4}{\delta^2}\norm{\v{v}_t}^4 + \frac{\epsilon^4}{\delta^6}\norm{\v{v}_t}^6 + \epsilon^4 \norm{\v{v}_t}^2 + \epsilon^5 R^3\delta^2\right)
\end{align*}
With the assumption that $\epsilon \leq o(1)$ gives
\[
    \E{\norm{\v{\varepsilon}_t}^2\mid \calF_t} \leq O(\epsilon^4\eta^2)\E{\norm{\v{\xi}_t}^2\mid\calF_t} + O\left(\frac{\epsilon^4}{\delta^2}\norm{\v{v}_t}^4 + \frac{\epsilon^4}{\delta^6}\norm{\v{v}_t}^6 + \epsilon^4\norm{\v{v}_t}^2 + \epsilon^5 R^3\delta^2\right)
\]
\end{proof}
\subsection{Mean-Square Boundedness}
 
\begin{lemma}[Mean-Square Boundedness of the Enhanced Predicted Dynamics]\label{lem:bound_pred_full}
Let Assumption~\ref{ass:eigengap}-\ref{ass:scaling} hold. If $\norm{\v{v}_t}\leq R\delta$ for some $R$ satisfying $1 \leq R \leq O(\epsilon^{-1})$, then
\begin{equation}
  \E{\norm{\hat{\v{v}}_{t+1}}^{2}\mid\calF_t}
  \le (1+O(\epsilon R))\norm{\hat{\v{v}}_t}^{2}+O(\epsilon R^3\delta^2).
\end{equation}
\end{lemma}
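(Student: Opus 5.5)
The plan is to write the predicted update as $\hat{\v{v}}_{t+1} = \estep_t(\hat{\v{v}}_t) - \eta\v{\xi}_t$, which follows directly from Definition~\ref{def:stoch_pred} and the definition of the deterministic step map. Squaring and taking the conditional expectation given $\calF_t$ yields three terms:
\[
\E{\norm{\hat{\v{v}}_{t+1}}^2\mid\calF_t} = \norm{\estep_t(\hat{\v{v}}_t)}^2 - 2\eta\,\E{\ip{\estep_t(\hat{\v{v}}_t)}{\v{\xi}_t}\mid\calF_t} + \eta^2\E{\norm{\v{\xi}_t}^2\mid\calF_t}.
\]
Everything on the right that depends on $\hat{\v{v}}_t$ is $\calF_t$-measurable. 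This holds because $\hat{\v{v}}_t$ is built from $\v{\xi}_0,\dots,\v{\xi}_{t-1}$, and $\paramdagger_t$, $\mat{H}_t$, $\v{u}_t$, $\v{u}_{t+1}$ and $\nabla S_t^\perp$ are all deterministic. Since $\E{\v{\xi}_t\mid\calF_t}=\v{0}$, the cross term vanishes. Using this martingale-difference structure is the main point: it avoids the factor-$2$ loss of a naive $(a+b)^2\le 2a^2+2b^2$ bound, which would ruin the $1+O(\epsilon R)$ growth rate.

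For the first term, I will invoke Lemma~\ref{lem:local_growth}. Its hypothesis requires $\norm{\hat{\v{v}}_t}\le R\delta$, whereas the statement as written controls $\norm{\v{v}_t}$. I will therefore apply the lemma under the hypothesis $\norm{\hat{\v{v}}_t}\le R\delta$, which is presumably what is intended. Failing that, I would transfer the bound via the coupling bookkeeping of the induction in which this lemma will be used, with $\norm{\v{v}_t-\hat{\v{v}}_t}\le O(\epsilon\delta)\le\delta$ allowing $R$ to be replaced by $R+1\le 2R$. This gives $\norm{\estep_t(\hat{\v{v}}_t)}^2\le(1+O(\epsilon R))\norm{\hat{\v{v}}_t}^2+O(\epsilon R^3\delta^2)$.

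For the noise term, Assumption~\ref{ass:scaling} gives $\eta^2\E{\norm{\v{\xi}_t}^2}\le O(\epsilon\delta^2)$, and since $R\ge1$ this is absorbed into $O(\epsilon R^3\delta^2)$. The bound is stated for the unconditional second moment, so I will either read it as a conditional bound, which is natural for mini-batch noise at bounded iterates, or carry the expectation outward and state the result after taking total expectations. Adding the three pieces gives the claim.

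I expect the main obstacle to be the localization and conditioning subtlety rather than any computation. Lemma~\ref{lem:local_growth} only applies on the event where the iterate is $O(R\delta)$, and the noise bound is given in expectation. I would handle this by stating the bound on the $\calF_t$-measurable event $\{\norm{\hat{\v{v}}_t}\le R\delta\}$ and leaving the probability-$1-\omega$ control to a later stopping-time and Markov argument.
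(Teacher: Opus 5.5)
Your proposal is correct and follows essentially the same route as the paper: write $\hat{\v{v}}_{t+1}=\estep_t(\hat{\v{v}}_t)-\eta\v{\xi}_t$, drop the cross term using $\E{\v{\xi}_t\mid\calF_t}=\v{0}$, bound the deterministic part with Lemma~\ref{lem:local_growth}, and absorb $\eta^2\E{\norm{\v{\xi}_t}^2}\le O(\epsilon\delta^2)$ into $O(\epsilon R^3\delta^2)$ via Assumption~\ref{ass:scaling}. The paper's proof passes over both subtleties you flag: the local-growth lemma really needs $\norm{\hat{\v{v}}_t}\le R\delta$ rather than $\norm{\v{v}_t}\le R\delta$, and the noise bound must be read conditionally on $\calF_t$; your reading, localizing on $\{\norm{\hat{\v{v}}_t}\le R\delta\}$ as the stopping time in Lemma~\ref{lem:coupling_full} effectively does, is the right one.
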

 
\begin{proof}
By definition, $\hat{\v{v}}_{t+1}=\estep_t^{S}(\hat{\v{v}}_t)=\estep_t(\hat{\v{v}}_t)-\eta\v{\xi}_t$.
Since $\E{\v{\xi}_t\mid\calF_t}=\v{0}$, the cross-term vanishes when squaring:
\begin{equation}\label{eq:pred_sq_expand}
  \E{\norm{\hat{\v{v}}_{t+1}}^{2}\mid\calF_t}
  = \E{\norm{\estep_t(\hat{\v{v}}_t)}^{2}}+\eta^{2}\E{\norm{\v{\xi}_t}^{2}}.
\end{equation}
Taking full expectations and applying Lemma~\ref{lem:local_growth}:
\[
  \E{\norm{\hat{\v{v}}_{t+1}}^{2}\mid\calF_t}
  \le (1+O(\epsilon R))\norm{\hat{\v{v}}_t}^{2} + O(\epsilon R^3\delta^2 + \eta^2\sigma^2).
\]
where we also applied that $\E{\norm{\v{\xi}_t}^2}\leq \sigma^2$.
By the scaling condition in Assumption~\ref{ass:scaling}, $\eta^{2}\sigma^{2}\le O(\epsilon\delta^{2})$. Therefore:
\begin{equation}\label{eq:gronwall_recursion}
  \E{\norm{\hat{\v{v}}_{t+1}}^{2}\mid\calF_t}
  \le (1+O(\epsilon R))\norm{\hat{\v{v}}_t}^{2}+O(\epsilon R^3\delta^2).
\end{equation}
\end{proof}

\subsection{Lipschitz Bound of Enhanced Step Map}

\begin{lemma}[Lipschitz Bound of Enhanced Step Map]\label{lem:lip_step}
    Let Assumption~\ref{ass:eigengap}-\ref{ass:scaling} hold. For $\v{v},\v{w}$ satisfying $\norm{\v{v}},\norm{\v{w}}\leq R\delta$ for some $1 \leq R \leq O(\epsilon^{-1})$, we have that
    \[
        \norm{\estep_t^S(\v{v}) - \estep_t^S(\v{w})} \leq (1 + O(\epsilon R))\norm{\v{v} - \v{w}}
    \]
\end{lemma}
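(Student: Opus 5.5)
The noise term cancels: $\estep_t^S(\v{v})-\estep_t^S(\v{w})=\estep_t(\v{v})-\estep_t(\v{w})$. So it suffices to prove the Lipschitz bound for the deterministic map $\estep_t$. We split the difference into its $\v{u}_{t+1}$-component and its $P_{\v{u}_{t+1}}^\perp$-component, bound each, and recombine by Pythagoras. Throughout we write $x_v=\ip{\v{u}_t}{\v{v}}$, $y_v=\ip{\nabla S_t^\perp}{\v{v}}$, and similarly $x_w,y_w$. We use $|x_v-x_w|\le\norm{\v{v}-\v{w}}$ and $|y_v-y_w|\le\rho_3\norm{\v{v}-\v{w}}$. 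We also reuse the scale relations from Lemma~\ref{lem:heur_pred}: $\eta\rho_3\delta=\Theta(\epsilon)$ and $\norm{\nabla S_t^\perp}\le\rho_3$.

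\emph{The $\v{u}_{t+1}$-component.} Write
\[
-(1+\eta y_v)x_v+(1+\eta y_w)x_w=-(x_v-x_w)-\eta y_v(x_v-x_w)-\eta x_w(y_v-y_w).
\]
Because $|\eta y_v|\le\eta\rho_3R\delta=O(\epsilon R)$ and $\eta|x_w|\rho_3\le O(\epsilon R)$, this difference is at most $(1+O(\epsilon R))\norm{\v{v}-\v{w}}$ in absolute value. The quadratic term $\tfrac{\eta}{2}\kappa_t(x_v^2-x_w^2)$ has size at most $\tfrac{\eta}{2}|\kappa_t|\,|x_v+x_w|\,|x_v-x_w|\le O(\eta\rho_3R\delta)\norm{\v{v}-\v{w}}=O(\epsilon R)\norm{\v{v}-\v{w}}$. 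The important point is that the leading coefficient is exactly $-1$ with respect to $x$, not $-(1+\cdots)$ with respect to the full vector. The $x$-direction therefore contributes at most $(1+O(\epsilon R))|x_v-x_w|$ plus $O(\epsilon R)\norm{\v{v}-\v{w}}$.

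\emph{The perpendicular component.} Write
\[
P_{\v{u}_{t+1}}^\perp\bigl[\mat{A}_t(\v{v}-\v{w})\bigr]-\tfrac{\eta}{2}P_{\v{u}_{t+1}}^\perp\nabla S_t^\perp(x_v^2-x_w^2).
\]
As in Lemma~\ref{lem:local_growth}, Assumption~\ref{ass:eigengap} together with the $\lambda_{\min}$ part of Assumption~\ref{ass:non_worst} gives
\[
\norm{(I-\eta\mat{H}_t)P_{\v{u}_t}^\perp}\le\max\{c_{\mathrm{gap}}-1,\,1+O(\epsilon)\}\le 1+O(\epsilon).
\]
So the first piece is at most $(1+O(\epsilon))\norm{P_{\v{u}_t}^\perp(\v{v}-\v{w})}$. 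The second piece is at most $\tfrac{\eta}{2}\rho_3\cdot 2R\delta\,|x_v-x_w|=O(\epsilon R)\norm{\v{v}-\v{w}}$.

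\emph{Recombining.} Set $a=|x_v-x_w|$ and $b=\norm{P_{\v{u}_t}^\perp(\v{v}-\v{w})}$, so that $a^2+b^2=\norm{\v{v}-\v{w}}^2$. Then
\[
\norm{\estep_t(\v{v})-\estep_t(\v{w})}^2\le\bigl((1+O(\epsilon R))a+O(\epsilon R)\norm{\v{v}-\v{w}}\bigr)^2+\bigl((1+O(\epsilon))b+O(\epsilon R)\norm{\v{v}-\v{w}}\bigr)^2.
\]
Expanding, the cross terms are $O(\epsilon R)\norm{\v{v}-\v{w}}^2$. Since $\epsilon R\le O(1)$, the squared $O(\epsilon R)$ terms are also $O(\epsilon R)\norm{\v{v}-\v{w}}^2$. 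This gives $(1+O(\epsilon R))\norm{\v{v}-\v{w}}^2$, and taking square roots gives the claim.

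The main obstacle is this recombination step. A crude triangle inequality would give a constant like $2$ in place of $1+O(\epsilon R)$. The orthogonal splitting is what avoids this, and it requires that the $x$-part and the $\perp$-part each have an operator norm of essentially $1$ on their own subspace, with the only cross-coupling being the $O(\epsilon R)$ terms ($\eta x_w\nabla S_t^\perp$ into $x$, and $\eta\nabla S_t^\perp x$ into $\perp$).
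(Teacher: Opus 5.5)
Your proof is correct. It differs from the paper's in how it handles the one delicate point, the linear part of the step map.

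The paper rewrites $\estep_t(\v{v})-\estep_t(\v{w})$ as the single vector $\mat{A}_t(\v{v}-\v{w})+\tfrac{\eta}{2}\nabla S_t^\perp(x_v^2-x_w^2)-\eta(y_vx_v-y_wx_w)\v{u}_{t+1}$. It then applies the triangle inequality, bounding the linear piece by $\norm{I-\eta\mat{H}_t}\le 1+O(\epsilon)$ and the two nonlinear pieces exactly as you do.

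In that rewriting, however, the term $-(x_v-x_w)\v{u}_{t+1}$ disappears, along with the outer projection $P_{\v{u}_{t+1}}^\perp$ and the $\kappa_t$ term. If the term is kept, a plain triangle inequality gives $(1+O(\epsilon))\norm{P_{\v{u}_t}^\perp(\v{v}-\v{w})}+|x_v-x_w|$. Its leading constant can be as large as $\sqrt{2}$, which is precisely the obstruction you flag.

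The paper's route can be repaired. Since $S(\paramdagger_t)=2/\eta$ gives $(I-\eta\mat{H}_t)\v{u}_t=-\v{u}_t$, the true linear part $P_{\v{u}_{t+1}}^\perp\mat{A}_t-\v{u}_{t+1}\v{u}_t^\top$ differs from $I-\eta\mat{H}_t$ only by $O(\norm{\v{u}_{t+1}-\v{u}_t})=O(\epsilon^2)$ in operator norm, via Lemma~\ref{lem:lipschitz}. This is what the paper's bound $\norm{\mat{A}_t}\le\norm{I-\eta\mat{H}_t}$ implicitly relies on.

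Your orthogonal split with Pythagoras needs neither the eigenvalue identity nor the drift estimate. The $-1$ on the $x$-coordinate is built into the definition of $\estep_t$. Inputs are split along $\v{u}_t$ and outputs along $\v{u}_{t+1}$, so only the norm of $I-\eta\mat{H}_t$ on $\v{u}_t^\perp$ enters. The cost is a slightly longer squared-norm computation; the gain is that the leading constant $1$ is actually justified.
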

\begin{proof}
    By definition, we have that
    \begin{align*}
        \estep_t^S(\v{v}) - \estep_t^S(\v{w}) & = \estep_t(\v{v}) - \estep_t(\v{w})\\
        & = P_{ \v{u}_{t+1}}^\perp\mat{A}_t(\v{v} - \v{w}) + \frac{\eta}{2} P_{ \v{u}_{t+1}}^\perp\nabla S_t^\perp (x_v^2 - x_w^2)\\
        & \quad\quad\quad - \left((1 + \eta y_v)x_v - (1+\eta y_w)x_w\right) \v{u}_{t+1} - \frac{\eta}{2}\kappa_t(x_v^2 - x_w^2) \v{u}_{t+1}\\
        & = \mat{A}_t(\v{v} - \v{w}) + \frac{\eta}{2}\nabla S_t^\perp (x_v^2 - x_w^2)  - \eta\left(y_v x_v - y_w  x_w\right) \v{u}_{t+1}
    \end{align*}
    where $x_v = \langle  \v{u}_{t}, \v{v}\rangle, x_w = \langle  \v{u}_t, \v{w}\rangle$ and $y_v = \langle \nabla S_t^\perp, \v{v}\rangle,y_w = \langle \nabla S_t^\perp, \v{w}\rangle$. Therefore
    \begin{align*}
        \norm{\estep_t^S(\v{v}) - \estep_t^S(\v{w})} & \leq \underbrace{\norm{\mat{A}_t}\norm{\v{v} - \v{w}}}_{T_1} + \underbrace{\frac{\eta}{2}\norm{\nabla S_t}\left|x_v^2 - x_w^2\right|}_{T_2} + \underbrace{\left|y_v x_v - y_w  x_w\right|}_{T_3}.
    \end{align*}
    Next, we will analyze each term individually. To start, for $T_1$ we have that
    \[
        \norm{\mat{A}_t} \leq \norm{ \mat{I} - \eta\mat{H}_t} \leq 1 + \eta\lambda_{\min}(\mat{H}_t) \leq 1 + O(\eta\epsilon)\norm{\mat{H}_t} \leq 1 + O(\epsilon)
    \]
    Therefore, $T_1 \leq (1 + O(\epsilon))\norm{\v{v} - \v{w}}$. For $T_2$, we have that $\norm{\nabla S_t} \leq O(\rho_3)$ and
    \[
        \left|x_v^2 - x_w^2\right| \leq |x_v + x_w|\cdot |x_v - x_w| \leq (\norm{\v{v}} + \norm{\v{w}})\norm{\v{v} - \v{w}} \leq O(R\delta)\norm{\v{v} - \v{w}}
    \]
    Therefore, $T_2 \leq O(\eta\rho_3\delta R)\norm{\v{v} - \v{w}} \leq O(\epsilon R)\norm{\v{v} - \v{w}}$. Lastly, for $T_3$ we have that
    \begin{align*}
        T_3 & \leq \eta \left|y_v\right|\left|x_v - x_w\right| + \eta |x_w| |y_v - y_w|\\
        & \leq O(\eta\rho_3\delta R)\norm{\v{v} - \v{w}} + \eta\delta\cdot O(\rho_3)\norm{\v{v} - \v{w}}\\
        & \leq O(\eta\rho_3\delta R)\norm{\v{v} - \v{w}}\\
        & \leq O(\epsilon R)\norm{\v{v} - \v{w}}
    \end{align*}
    where in the second inequality we used the definition $y_v = \langle \nabla S_t^\perp, \v{v}\rangle,y_w = \langle \nabla S_t^\perp, \v{w}\rangle$ and the fact that $\norm{\nabla S_t^\perp} \leq O(\rho_3)$. Combining all three terms gives the desired result.
\end{proof}
 
\subsection{Mean-Square Coupling}
 
\begin{lemma}[Mean-Square Coupling]\label{lem:coupling_full}
Let Assumption~\ref{ass:eigengap}-\ref{ass:scaling} hold with a small enough constant $c_{\mathscr{T}}$. If $\norm{\v{v}_0}^2 \leq c_0\delta^2$ for some small enough constant $c_0$, then for any small constant $\omega$ such that $\omega^{-1}\leq O(1)$, we have that with proability at least $1 - \omega$, the following holds for all $t\leq \mathscr{T}$
\[
    \norm{\v{v}_t - \hat{\v{v}}_t} \leq O(\epsilon\delta);\quad \norm{\v{v}_t} \leq O(\delta);\quad \norm{\hat{\v{v}}_t} \leq O(\delta)
\]
\end{lemma}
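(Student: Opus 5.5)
The proof runs a stopping-time argument that tracks the coupling error $\v{w}_t := \v{v}_t - \hat{\v{v}}_t$ together with the two norms. Fix a constant $R\ge 1$ (independent of $\epsilon$), to be chosen from $\omega$, and define the stopping time
\[
\tau := \min\left\{t : \norm{\v{v}_t} > R\delta \ \text{or}\ \norm{\hat{\v{v}}_t} > R\delta\right\}.
\]
Before $\tau$, all the local lemmas apply with this $R$. The plan has three steps. First, control second moments of the stopped norms. Second, control the stopped coupling error. Third, convert both to high-probability statements with Markov/Doob inequalities, and check that $\tau > \mathscr{T}$ with probability at least $1-\omega$.

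\emph{Step 1 (norm bounds).} Lemma~\ref{lem:bound_pred_full} gives
\[
\E{\norm{\hat{\v{v}}_{t+1}}^2 \mid \calF_t} \le (1+O(\epsilon R))\norm{\hat{\v{v}}_t}^2 + O(\epsilon R^3\delta^2)
\]
on $\{t<\tau\}$. The same bound holds for the true iterate $\v{v}_{t+1} = \estep_t(\v{v}_t) - \eta\v{\xi}_t + \v{\varepsilon}_t$. To see this, combine Lemma~\ref{lem:local_growth}, the martingale property of $\v{\xi}_t$, and Lemma~\ref{lem:local_error} for $\v{\varepsilon}_t$. The cross term $2\ip{\estep_t(\v{v}_t)}{\v{\varepsilon}_t}$ is absorbed by Young's inequality with weight $\epsilon$. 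The error bound is $\E{\norm{\v{\varepsilon}_t}^2} \le O(\epsilon^4 R^2\delta^2 + \epsilon^3\delta^2)$ after using $\eta^2\E{\norm{\v{\xi}_t}^2} \le O(\epsilon\delta^2)$, so this contributes only $O(\epsilon R^2\delta^2)$ per step. Apply a discrete Gr\"onwall argument to the stopped processes $\norm{\v{v}_{t\wedge\tau}}^2$ and $\norm{\hat{\v{v}}_{t\wedge\tau}}^2$ over $t \le \mathscr{T} \le c_{\mathscr{T}}/\epsilon$. This yields
\[
\E{\norm{\v{v}_{t\wedge\tau}}^2} \le e^{O(c_{\mathscr{T}}R)}\left(c_0 + O(c_{\mathscr{T}}R^3)\right)\delta^2,
\]
and the same for the hat process. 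Choosing $c_0$ and $c_{\mathscr{T}}$ small relative to $R$ makes this at most $\omega R^2\delta^2/4$. To get a uniform-in-$t$ bound rather than a fixed-$t$ one, I would note that $M_t := (1+C\epsilon R)^{-t}\norm{\cdot_{t\wedge\tau}}^2 - (\text{accumulated drift})$ is a supermartingale, and apply Doob's maximal inequality to it. This gives $\Pr[\tau \le \mathscr{T}] \le \omega/2$.

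\emph{Step 2 (deviation).} Write
\[
\v{w}_{t+1} = \bigl[\estep_t^S(\v{v}_t) - \estep_t^S(\hat{\v{v}}_t)\bigr] + \v{\varepsilon}_t .
\]
The noise cancels exactly in the bracket because both processes use the same $\v{\xi}_t$; this cancellation is the point of the coupling. By Lemma~\ref{lem:lip_step}, the bracket has norm at most $(1+O(\epsilon R))\norm{\v{w}_t}$. Expanding the square and using Young's inequality on the cross term with weight $\epsilon$ gives
\[
\E{\norm{\v{w}_{t+1}}^2} \le (1+O(\epsilon R))\E{\norm{\v{w}_t}^2} + O(\epsilon^{-1})\,\E{\norm{\v{\varepsilon}_t}^2}.
\]
On $\{t<\tau\}$, Lemma~\ref{lem:local_error} gives $\E{\norm{\v{\varepsilon}_t}^2} \le O(\epsilon^4 R^6\delta^2) + O(\epsilon^4\eta^2)\E{\norm{\v{\xi}_t}^2}$, and the last term is $O(\epsilon^5\delta^2)$. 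Starting from $\v{w}_0 = 0$, Gr\"onwall over $\mathscr{T} = O(\epsilon^{-1})$ steps yields $\E{\norm{\v{w}_{t\wedge\tau}}^2} \le O(\epsilon^2 R^6\delta^2)$. Doob's inequality then gives $\sup_{t\le\mathscr{T}}\norm{\v{w}_{t\wedge\tau}} \le O(\epsilon\delta)$ with probability at least $1-\omega/2$, where the constant depends on $\omega^{-1} = O(1)$. A union bound with Step 1 completes the argument.

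\emph{Main obstacle.} The delicate point is the residual $\epsilon^5 R^3\delta^2$-type term in Lemma~\ref{lem:local_error}. After dividing by $\epsilon$ in the Young step, it contributes $\epsilon^4\delta^2$ per step. Summed over $\epsilon^{-1}$ steps this is $\epsilon^3\delta^2$, which fits within the $\epsilon^2\delta^2$ budget, but only if Young's weight is chosen as $\epsilon$ and not smaller. Similarly, the $O(\epsilon\delta^2)$ noise variance must be kept out of the $\v{w}$ recursion entirely, which the exact cancellation guarantees. The other fragile spot is closing the bootstrap. All constants must be chosen in a definite order: first $R$ from $\omega$, then $c_0$ and $c_{\mathscr{T}}$ from $R$. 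If the Gr\"onwall exponent $O(c_{\mathscr{T}}R)$ is not kept $O(1)$, the argument fails.
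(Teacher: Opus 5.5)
Your proposal is correct and follows essentially the same route as the paper: a stopping time at radius $M\delta$, and the error recursion $\v{v}_{t+1}-\hat{\v{v}}_{t+1}=\estep_t^S(\v{v}_t)-\estep_t^S(\hat{\v{v}}_t)+\v{\varepsilon}_t$ controlled by Lemma~\ref{lem:lip_step}, Young's inequality with weight of order $\epsilon$, and Lemma~\ref{lem:local_error}. As in the paper, you then use a geometrically discounted supermartingale for the error and a linear-drift supermartingale for $\norm{\hat{\v{v}}_t}^2$, followed by Doob's maximal inequality and a union bound. The only difference is that you bound $\norm{\v{v}_t}$ through its own one-step recursion (Lemmas~\ref{lem:local_growth} and~\ref{lem:local_error}), whereas the paper closes the stopping-time argument more cheaply via $\norm{\v{v}_t}\le\norm{\hat{\v{v}}_t}+\norm{\v{v}_t-\hat{\v{v}}_t}\le 2\delta<3\delta$.
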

 
\begin{proof}
Define $\v{e}_t:=\v{v}_t-\hat{\v{v}}_t$. Recall that in Lemma~\ref{lem:local_error}, we defined $\v{\varepsilon}_t =\v{v}_{t+1} - \estep_t^S(\v{v}_t)$. Then we have that
\[
    \v{e}_{t+1} =\v{v}_{t+1}-\hat{\v{v}}_{t+1} = \estep_t^S(\v{v}_t) - \estep)_t^S(\hat{\v{v}}_t) + \v{\varepsilon}_t
\]
This gives that
\[
    \norm{\v{e}_{t+1}} \leq \norm{\estep_t^S(\v{v}_t) - \estep_t^S(\hat{\v{v}}_t)} + \norm{\v{\varepsilon}_t}
\]
By Lemma~\ref{lem:lip_step}, the Lipschitzness of $\estep_t^S(\cdot)$ can be bounded as
\[
    \norm{\estep_t^S(\v{v}_t) - \estep_t^S(\hat{\v{v}}_t)} \leq (1 + O(\epsilon R))\norm{\v{v}_t - \hat{\v{v}}_t} = (1 + O(\epsilon R))\norm{\v{e}_t}
\]
Therefore, we have that
\[
    \E{\norm{\v{e}_{t+1}}^2\mid \calF_t} \leq (1+ O(\epsilon R))\norm{\v{e}_t}^2 + O(\epsilon^{-1}R^{-1})\E{\norm{\v{\varepsilon}_t}^2\mid \calF_t}
\]
Define $\tau:=\inf\{t\le\mathscr{T}:\norm{\v{v}_t}>M\delta\text{ or }\norm{\hat{\v{v}}_t}>M\delta\}$ for a constant $M \geq 1$ to be chosen. Then for all $t < \tau$, we have that 
\begin{align*}
    O(\epsilon^{-1}R^{-1})\E{\norm{\v{\varepsilon}_t}^2\mid \calF_t} & \leq O(\epsilon^{-1}M^{-1})\cdot O(\sigma^2\eta^2\epsilon^4 + \epsilon^4\delta^2M^4 + \epsilon^4M^6 + \epsilon^4\delta^2M^2 + \epsilon^5M^3\delta^2)\\
    & \leq O(M^3\epsilon^3(\delta^2 + M^2))
\end{align*}
This gives that when $t < \tau$
\[
    \E{\norm{\v{e}_{t+1}}^2\mid \calF_t} \leq (1+ O(M\epsilon))\norm{\v{e}_t}^2 +  O(M^3\epsilon^3(\delta^2 + M^2))
\]
In the meantime, by Lemma~\ref{lem:bound_pred_full} we have that
\[
    \E{\norm{\hat{\v{v}}_{t+1}}^2\mid \calF_t} \leq  (1+ O(M\epsilon))\norm{\hat{\v{v}}_t}^2 + O(M^3\epsilon\delta^2)+ O(M^3\epsilon\delta^2)
\]
Therefore, there exists constant $C_1, C_2, C_3$ such that
\begin{gather*}
    \E{\norm{\v{e}_{t+1}}^2\mid \calF_t} \leq (1+ C_1M\epsilon)\norm{\v{e}_t}^2 +  C_1C_2M^3\epsilon^3(\delta^2 + M^2)\\
    \E{\norm{\hat{\v{v}}_{t+1}}^2\mid \calF_t} \leq  \norm{\hat{\v{v}}_t}^2 + C_3M^3\epsilon\delta^2
\end{gather*}
Define $F_1, G_1$ as
\begin{gather*}
    F_t^{(1)} = \norm{\v{e}_t}^2 + C_2M^2\epsilon^2(\delta^2 + M^2);\quad 
    G_t^{(1)} = (1 + C_1 M\epsilon)^{-t}F_t^{(1)}
\end{gather*}
Then it holds that
\begin{align*}
    \E{F_{t+1}^{(1)}\mid \calF_t} & \leq (1+ C_1M\epsilon)\norm{\v{e}_t}^2 +  C_1C_2M^3\epsilon^3(\delta^2 + M^2) + C_2M^2\epsilon^2(\delta^2 + M^2)\\
    & \leq (1+ C_1M\epsilon)\left(\norm{\v{e}_t}^2 + C_2M^2\epsilon^2(\delta^2 + M^2)\right).
\end{align*}
Thus, $\E{G_{t+1}^{(1)}\mid \calF_t} \leq G_t^{(1)}$, which implies that $G_t^{(1)}$ is a supermartingale. Similarly, we define
\[
    G_t^{(2)} = \norm{\hat{\v{v}}_t}^2 - C_3M^3\epsilon\delta^2t.
\]
Then we have that
\begin{align*}
    \E{G_{t+1}^{(2)}} & = \E{\norm{\hat{\v{v}}_{t+1}}^2} - C_3M^3\epsilon\delta^2(t+1)\\
    & = \norm{\hat{\v{v}}_t}^2 + C_3M^3\epsilon\delta -C_3M^3\epsilon\delta^2(t+1)\\
    & = \norm{\hat{\v{v}}_t}^2 -C_3M^3\epsilon\delta^2t\\
    & = G_t^{(2)}
\end{align*}
which shows that $G_t^{(2)}$ is also a supermartingale.
Applying Doob's maximal inequality gives that
\[
    \Pr{\max_{t\leq \mathscr{T}}G_t^{(1)} \geq B_1}\leq B_1^{-1}\E{G_0^{(1)}};\quad \Pr{\max_{t\leq \mathscr{T}}G_t^{(2)} \geq B_2}\leq B_2^{-1}\E{G_0^{(2)}}
\]
By definition,
\begin{gather*}
    \E{G_0^{(1)}} = \E{F_0^{(1)}} = \E{\norm{\v{e}_0}^2} + C_2M^2\epsilon^2(\delta^2+M^2) = C_2M^2\epsilon^2(\delta^2+M^2)\\
    \E{G_0^{(1)}}  = \E{\norm{\hat{\v{v}}_0}^2} = \norm{\v{v}_0}^2
\end{gather*}
Let $\omega < \frac{1}{4}$ be a small constant. Choose $B_1 = \omega^{-1}C_2M^2\epsilon^2(\delta^2+M^2)$ and $B_2 = \omega^{-1}\norm{\v{v}_0}^2$. Then with probability at least $1 - 2\omega$ we have that
\[
    G_t^{(1)} \leq \omega^{-1}C_2M^2\epsilon^2(\delta^2+M^2);\quad G_t^{(2)} \leq \omega^{-1}\norm{\v{v}_0}^2
\]
which, under the assumption that $t\leq \mathscr{T} \leq \sfrac{c_{\mathscr{T}}}{\epsilon}$, implies that
\begin{align*}
    \norm{\v{e}_t}^2 & \leq (1 + C_1M\epsilon)^t\omega^{-1}C_2M^2\epsilon^2(\delta^2+M^2)\\
    & \leq \omega^{-1}\exp(C_1M\epsilon t)C_2M^2\epsilon^2(\delta^2+M^2)\\
    & \leq \omega^{-1}\exp(c_{\mathscr{T}}C_1M)C_2M^2\epsilon^2(\delta^2+M^2)\\
    & \leq C_4^2M^2\epsilon^2(\delta^2 + M^2)
\end{align*}
for some constant $C_4$. Also, under the assumption that $\norm{\v{v}_0}^2 \leq c_0\delta^2$,
\[
    \norm{\hat{\v{v}}_t}^2 \leq \omega^{-1}\norm{\v{v}_0}^2 + C_3M\epsilon\delta^2t \leq \omega^{-1}\norm{\v{v}_0}^2 + c_{\mathscr{T}}C_3M^3\delta^2 \leq (\sfrac{c_0}{\omega} + c_{\mathscr{T}}C_3M^3)\delta^2 \leq \delta^2
\]
for small enough $c_{\mathscr{T}}$ and $c_0$
Altogether, this implies that 
\begin{align*}
    \norm{\v{v}_t} \leq \norm{\hat{\v{v}}_t} + \norm{\v{e}_t} \leq \delta + C_4M\epsilon(\delta + M) \leq 2\delta
\end{align*}
by taking $M = 3$ and $\epsilon \leq o(1)$. This gives that for all $t\leq \mathscr{T}$, $t \leq \tau$. Thus, for all $t\leq \mathscr{T}$, we have that
\begin{gather*}
    \norm{\v{v}_t - \hat{\v{v}}_t} \leq O(\epsilon\delta);\quad \norm{\v{v}_t} \leq O(\delta);\quad \norm{\hat{\v{v}}_t} \leq O(\delta)
\end{gather*}
\end{proof}

\subsection{Sharpness Approximation}
\begin{lemma}[Sharpness Approximation]\label{lem:sharp_approx}
Under Assumptions~\ref{ass:eigengap}-\ref{ass:scaling}, if $\norm{\v{v}_t - \hat{\v{v}}_t} \leq O(\epsilon\delta)$ and $\norm{\hat{\v{v}}_t} \leq O(\delta)$, then for any $\param_t = \paramdagger_t + \v{v}_t$ we have
\begin{equation}\label{eq:sharp_approx_full}
   S(\param_t)
  = \frac{2}{\eta} + \hat{y}_t + \kappa_t \hat{x}_t + O\left(\frac{\epsilon^2}{\eta}\right)
\end{equation}
\end{lemma}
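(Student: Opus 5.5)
We will expand the sharpness to second order around $\paramdagger_t$ and then replace $\v{v}_t$ by $\hat{\v{v}}_t$.

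\textbf{Step 1: expansion along $\v{v}_t$.} By Lemma~\ref{lem:lipschitz}, $\nabla S$ is $L_S$-Lipschitz on $\calS_t$ with $L_S=O(\eta\rho_3^2)$. The hypotheses give $\norm{\v{v}_t}\le O(\delta)$, and $\eta\rho_3\delta=\Theta(\epsilon)$, so $\param_t\in\calS_t$. Applying the fundamental theorem of calculus along the segment from $\paramdagger_t$ to $\param_t$ yields
\[
S(\param_t)=S(\paramdagger_t)+\ip{\nabla S_t}{\v{v}_t}+O(L_S\norm{\v{v}_t}^2).
\]
Lemma~\ref{lem:dagger_step} gives $S(\paramdagger_t)=2/\eta$. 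The remainder is $O(\eta\rho_3^2\delta^2)=O(\epsilon^2/\eta)$, since $\eta^2\rho_3^2\delta^2=\Theta(\epsilon^2)$.

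\textbf{Step 2: split the linear term.} Decompose $\nabla S_t=\nabla S_t^{\perp}+\kappa_t\v{u}_t$. This gives $\ip{\nabla S_t}{\v{v}_t}=y_t+\kappa_t x_t$, with $x_t,y_t$ as in \eqref{eq:xt_yt_def}.

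\textbf{Step 3: pass to the predicted quantities.} We have
\[
|y_t-\hat{y}_t|\le\norm{\nabla S_t^{\perp}}\norm{\v{v}_t-\hat{\v{v}}_t}\le O(\rho_3\epsilon\delta)=O(\epsilon^2/\eta),
\]
again using $\rho_3\delta=\Theta(\epsilon/\eta)$. Similarly, $|\kappa_t||x_t-\hat{x}_t|\le\rho_3\cdot O(\epsilon\delta)=O(\epsilon^2/\eta)$. Collecting the three error terms gives \eqref{eq:sharp_approx_full}.

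\textbf{Main obstacle.} The Taylor remainder in Step 1 is the delicate part. The Lipschitz constant of $\nabla S$ must be $O(\eta\rho_3^2)$, not merely $O(\rho_4)$. This bound relies on the eigengap (Assumption~\ref{ass:eigengap}) through the $O(\eta\rho_3)$-Lipschitzness of $\v{u}(\cdot)$ in Lemma~\ref{lem:lipschitz}, together with $\rho_4\le O(\eta\rho_3^2)$ from Assumption~\ref{ass:scaling}. We also need the segment $[\paramdagger_t,\param_t]$ to stay inside $\calS_t$, where the top eigenvalue remains simple. This holds because $\norm{\v{v}_t}=O(\delta)$ is much smaller than $(2-c_{\mathrm{gap}})/(4\eta\rho_3)=\Theta(\delta/\epsilon)$ when $\epsilon$ is small.
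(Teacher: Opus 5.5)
Your proposal is correct and follows the paper's proof: expand $S$ to second order around $\paramdagger_t$, use $S(\paramdagger_t)=2/\eta$, bound the remainder by $O(\eta\rho_3^2\delta^2)=O(\epsilon^2/\eta)$, and replace $\v{v}_t$ by $\hat{\v{v}}_t$ in the linear term at cost $O(\rho_3\epsilon\delta)$. Your remainder bound is slightly more careful than the paper's: you use the $O(\eta\rho_3^2)$-Lipschitz constant of $\nabla S$ on $\calS_t$ after checking the segment lies in $\calS_t$, whereas the paper writes $\norm{\nabla^2 S}\le O(\rho_4)$, which omits the eigenvector-rotation term but still reaches the same order because $\rho_4\le O(\eta\rho_3^2)$.
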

 
\begin{proof}
Recall that $\param_t = \param_t^\dagger + \v{v}_t$. Applying Taylor expansion to $S(\param_t)$ gives that
\[
    S(\param_t) = S(\param_t^\dagger) + \langle\nabla S(\param_t^\dagger),\v{v}_t\rangle + \int_0^1\lambda\v{v}_t^\top\nabla^2S(\param_t^\dagger + \lambda\v{v}_t)\v{v}_td\lambda
\]
Where the last term can be bounded as
\[
    \left|\int_0^1\lambda\v{v}_t^\top\nabla^2f(\param_t^\dagger + \lambda\v{v}_t)\v{v}_td\lambda\right| \leq \norm{\nabla^2S(\param_t^\dagger + \lambda \v{v}_t)}\norm{\v{v}_t}^2 \leq O(\rho_4)\norm{\v{v}_t}^2
\]
Since $\norm{\v{v}_t} \leq O(\delta)$ and $\rho_4 \leq O(\eta\rho_3^2)$ by Assumption~\ref{ass:scaling}, we have that
\[
    \left|\int_0^1\lambda\v{v}_t^\top\nabla^2f(\param_t^\dagger + \lambda\v{v}_t)\v{v}_td\lambda\right|  \leq O(\eta\rho_3^2\delta^2)\leq O(\eta^{-1}\epsilon^2)
\]
Next, we decompose $\langle\nabla S(\param_t^\dagger),\v{v}_t\rangle = \langle\nabla S_t,\v{v}_t\rangle$. We write
\[
    \nabla S_t = \langle \nabla S_t,  \v{u}_t\rangle  \v{u}_t + \nabla S_t^\perp = \kappa_t  \v{u}_t + \nabla S_t^\perp
\]
Therefore
\begin{align*}
    \langle\nabla S_t,\v{v}_t\rangle & = \langle\nabla S_t,\hat{\v{v}}_t\rangle + \langle\nabla S_t,\v{v}_t - \hat{\v{v}}_t\rangle\\
    & = \kappa_t\langle  \v{u}_t,\hat{\v{v}}_t\rangle + \langle \nabla S_t^\perp,\hat{\v{v}}_t\rangle + O(\norm{\nabla S_t}\norm{\v{v}_t - \hat{\v{v}}_t})\\
    & = \kappa_t\hat{x}_t + \hat{y}_t + O(\rho_3\epsilon\delta)\\
    & = \kappa_t\hat{x}_t + \hat{y}_t + O(\eta^{-1}\epsilon^2)
\end{align*}
where we used $\norm{\v{v} - \hat{\v{v}}_t} \leq O(\epsilon\delta)$. Noticing that $S(\param_t)=\frac{2}{\eta}$ by Lemma~\ref{lem:dagger_step}, and combining the results gives that
\[
    S(\param_t) = \frac{2}{\eta} + \kappa_t\hat{x}_t + \hat{y}_t + O(\eta^{-1}\epsilon^2)
\]
\end{proof}
 
\subsection{Proof of Theorem~\ref{thm:coupling}}\label{app:coupling_main}
 
\begin{proof}[Proof of Theorem~\ref{thm:coupling}]
Under the assumptions of the theorem, Lemma~\ref{lem:coupling_full} implies that $t\leq \mathscr{T}$
\[
    \norm{\v{v}_t - \hat{\v{v}}_t} \leq O(\epsilon\delta);\quad \norm{\v{v}_t} \leq O(\delta);\quad \norm{\hat{\v{v}}_t} \leq O(\delta)
\]
with probability at least $1 - \omega$. Assume that such event happens. Then by Lemma~\ref{lem:sharp_approx} we have that
\[
     S(\param_t)
  = \frac{2}{\eta} + \hat{y}_t + \kappa_t \hat{x}_t + O\left(\frac{\epsilon^2}{\eta}\right)
\]
It suffice to show the loss approximation. Perform Taylor expansion for the loss gives
\[
    L(\param_t) = L(\param_t^\dagger) + \langle\nabla L(\param_t^\dagger),\v{v}_t\rangle + \frac{1}{2}\v{v}^\top\nabla^2L(\param_t^\dagger)\v{v}_t + O(\rho_3\norm{\v{v}_t}^2)
\]
Recall that $\alpha_t = -\langle \nabla L_t, \nabla S_t\rangle = \Theta(\norm{\nabla L_t}\norm{\nabla S_t^\perp})$ by Assumption~\ref{ass:prog_sharp}. Therefore, $\norm{\nabla L_t} \leq O\left(\sfrac{\alpha_t}{\norm{\nabla S_t^\perp}}\right) \leq O(\sfrac{\alpha_t}{\rho_3}) \leq O(\sfrac{\epsilon\delta}{\eta})$. This gives that $\langle\nabla L(\param_t),\v{v}_t\rangle \leq O(\sfrac{\epsilon\delta^2}{\eta})$. For the second-order term, we have that
\begin{align*}
    \v{v}_t^\top\nabla^2L(\param_t^\dagger)\v{v}_t & = \hat{\v{v}}_t^\top\nabla^2L(\param_t^\dagger)\hat{\v{v}}_t + O\left(\norm{\nabla^2L(\param_t^\dagger)}\norm{\v{v}_t}\norm{\v{v}_t - \hat{\v{v}}_t}\right)\\
    & = \hat{\v{v}}_t^\top\nabla^2L(\param_t^\dagger)\hat{\v{v}}_t + O\left(\frac{\epsilon\delta^2}{\eta}\right)\\
    & = \left(\hat{\v{v}}_t^\perp + \langle\hat{\v{v}}_t,  \v{u}_t\rangle \v{u}_t\right)^\top\nabla^2L(\param_t^\dagger)\left(\hat{\v{v}}_t^\perp + \langle\hat{\v{v}}_t,  \v{u}_t\rangle \v{u}_t\right) + O\left(\frac{\epsilon\delta^2}{\eta}\right)\\
    & = \frac{2}{\eta}\hat{x}_t^2 + \hat{\v{v}}_t^\top\nabla^2L(\param_t^\dagger)\hat{\v{v}}_t + O\left(\frac{\epsilon\delta^2}{\eta}\right)\\
    & = \frac{2}{\eta}\hat{x}_t^2 + O\left(\frac{\epsilon\delta^2}{\eta}\right)
\end{align*}
where the last step follows from Assumption~\ref{ass:non_worst}. Putting things together, and noticing that $O(\rho_3\norm{\v{v}_t}^2) = O(\rho_3\delta^2)= O(\sfrac{\epsilon\delta}{\eta})$, we have that
\[
    L(\param_t) = L(\param_t^\dagger) + \frac{\hat{x}_t^2}{\eta} + O\left(\frac{\epsilon\delta^2}{\eta}\right)
\]
\end{proof}

\section{Proof of the Sharpness Gap Theorem}\label{app:gap_proof}

\begin{proof}[Proof of Theorem~\ref{thm:sharp_gap}]
Let the condition in Theorem~\ref{thm:coupling} hold. Then we have
\begin{equation}\label{eq:good_event}
  \norm{\v{v}_t - \hat{\v{v}}_t} \le O(\delta),
  \qquad
  \norm{\v{v}_t} \le O(\delta),
  \qquad
  \norm{\hat{\v{v}}_t} \le O(\delta)
\end{equation}
hold simultaneously for all $t \le \mathscr{T}$. This gives that $|\hat{x}_t|\leq \norm{\hat{\v{v}}_t} \leq O(\delta)$ and $|\hat{y}_t|\leq \norm{\nabla S_t}\norm{\hat{\v{v}}_t} \leq O(\rho_3\delta)$.
This also implies that $\eta|\hat{y}_t| \leq O(\eta\rho_3\delta) = O(\epsilon)$ and $\eta|\kappa| \leq \eta\epsilon\norm{\nabla S_t} \leq O(\eta\epsilon\rho_3)$. With the assumption that all landscape parameters are frozen (Assumption~\ref{ass:frozen}), we shall start the proof. The $\hat{y}$ formula from Lemma~\ref{lem:stoch_pred_xy} is:
\begin{gather}\label{eq:yhat_formula_proof}
  \hat{y}_{t+1}
  = \mathcal{I}_{t+1}
    + \eta\sum_{s=0}^{t}\beta_{s\rightarrow  t}
      \frac{\delta_s^{2}-\hat{x}_s^{2}}{2}
    - \eta\sum_{s=0}^{t}\langle \v{\gamma}_{s\rightarrow  t},\v{\xi}_s\rangle\\ \mathcal{I}_{t+1}
= (\nabla S_{t+1}^{\perp})^{\top}
\bigl[\prod_{k=t}^{0}\mat{A}_k\bigr]
\hat{\v{v}}_0^{\perp};\quad \beta_{s\rightarrow t} = (\nabla S_{t+1}^\perp)^\top\left[\prod_{k=t}^{s+1}\mat{A}_k\right]\nabla S_s^\perp
\end{gather}
where $\mat{A}_k = ( \mat{I} - \eta \mat{H}_k)P_{\v{u}_k}^\perp$. By Assumption~\ref{ass:frozen}, we have that $\mat{H}_k = \mat{H}$ and $\v{u}_k = \v{u}$. This gives that $\mat{A}_k = \mat{I} - \eta\mat{H}\mid_{\v{u}^\perp}$. Since $\nabla S_t^\perp = \nabla S^\perp \in \text{ker}(\mat{H}\mid_{\v{u}^\perp})$, we have that $\mat{A}_k\nabla S_t^\perp = \nabla S_t^\perp = \nabla S^\perp$. Therefore, $\mathcal{I}_{t+1} = \langle \nabla S^\perp, \hat{\v{v}}_0^\perp\rangle $. Similarly, we also have that
$\beta_{s\rightarrow  t}
= (\nabla S^{\perp})^{\top}
\mat{A}^{t-s}\nabla S^{\perp}
= (\nabla S^{\perp})^{\top}
\nabla S^{\perp}
= \norm{\nabla S^{\perp}}^{2} = \beta$
for all $s,t$. Therefore, under Assumption~\ref{ass:frozen}, the dynamic of $\hat{y}_t$ and $\hat{x}_t$ boils down to
\begin{align*}
    \hat{y}_{t+1} & = \langle \nabla S^\perp, \hat{\v{v}}_0^\perp\rangle + \frac{\eta\beta}{2}\sum_{s=0}^t(\delta - \hat{x}_s^2) - \eta\sum_{s=0}^t\v{\gamma}_{s\rightarrow t}\cdot\v{\xi}_s\\
    \hat{x}_{t+1} & = - (1 + \eta\hat{y}_t)\hat{x}_t - \frac{\eta}{2}\kappa\hat{x}_t^2 - \eta\zeta_t
\end{align*}
Taking expectation gives
\begin{align*}
    \E{\hat{y}_{t+1}} & = \E{\hat{y}_{t}} + \frac{\eta\beta}{2}(\delta - \E{\hat{x}_t^2})\\
    \E{\hat{x}_{t+1}} & = \E{\left((1 + \eta\hat{y}_t)\hat{x}_t - \frac{\eta}{2}\kappa\hat{x}_t^2 - \eta\zeta_t\right)^2}
\end{align*}
since the gradient noise has zero-mean. Setting $\E{\hat{y}_{t+1}}=\E{\hat{y}_t}$ gives
\begin{equation}\label{eq:x2_stationary_proof}
  \E{\hat{x}_t^{2}} = \delta^{2}
  = \frac{2\alpha}{\beta}.
\end{equation}
\medskip\noindent
From the $\hat{x}$-update~\eqref{eq:xhat_update} under frozen
coefficients:
\[
  \hat{x}_{t+1}
  = -(1+\eta\hat{y}_t)\hat{x}_t
    - \tfrac{\eta}{2}\kappa\hat{x}_t^{2}
    - \eta\zeta_t.
\]
Define
$D_t := (1+\eta\hat{y}_t)\hat{x}_t
+ \frac{\eta}{2}\kappa\hat{x}_t^{2}$,
so that $\hat{x}_{t+1} = -D_t - \eta\zeta_t$.
Squaring:
\begin{equation}\label{eq:x2_squared_proof}
  \hat{x}_{t+1}^{2}
  = D_t^{2}
    + 2\eta\zeta_t D_t
    + \eta^{2}\zeta_t^{2}.
\end{equation}
$D_t$ is $\calF_t$-measurable (it depends on
$\hat{x}_t,\hat{y}_t$, which are functions of
$B_0,\ldots,B_{t-1}$), while $\zeta_t$ depends on $B_t$,
which is independent of $\calF_t$.
Taking conditional expectations:
\begin{align}\label{eq:x2_cond_expanded}
  \E{\hat{x}_{t+1}^{2}\mid\calF_t}
  &= \E{D_t^{2}\mid\calF_t}
    + \E{2\eta\zeta_t D_t\mid\calF_t}
    + \E{\eta^{2}\zeta_t^{2}\mid\calF_t} \nonumber\\
  &= D_t^{2}
    + 2\eta D_t\underbrace{\E{\zeta_t\mid\calF_t}}_{=\,0}
    + \eta^{2}\E{\zeta_t^{2}\mid\calF_t} \nonumber\\
  &\stackrel{\ref{ass:frozen}\text{(iii)}}{=}
    D_t^{2}
    + \eta^{2}\sigma_{ \v{u}}^{2}
    + O(\epsilon^{2}\delta^{2}).
\end{align}
Now taking full expectations and applying the property
$\E{\E{\hat{x}_{t+1}^{2}\mid\calF_t}} = \E{\hat{x}_{t+1}^{2}}$:
\begin{align}\label{eq:expect}
  \E{\hat{x}_{t+1}^{2}} &= \E{D_t^{2}
    + \eta^{2}\sigma_{ \v{u}}^{2}
    + O(\epsilon^{2}\delta^{2})} \nonumber\\
  &= \E{D_t^{2}} + \eta^{2}\sigma_{ \v{u}}^{2} + O(\epsilon^{2}\delta^{2}).
\end{align}
At stationarity,
$\E{\hat{x}_{t+1}^{2}}=\E{\hat{x}_t^{2}}=\delta^{2}$,
so:
\begin{equation}\label{eq:x2_stat_Dt}
  \delta^{2}
  = \E{D_t^{2}}
    + \eta^{2}\sigma_{ \v{u}}^{2}
    + O(\epsilon^{2}\delta^{2}).
\end{equation}
We now show that
$\E{D_t^{2}}
= \E{(1+\eta\hat{y}_t)^{2}\hat{x}_t^{2}}
+ O(\epsilon^{2}\delta^{2})$.
Expanding $D_t^{2}$:
\begin{align}\label{eq:Dt2_expansion_proof}
  D_t^{2}
  = \left((1+\eta\hat{y}_t)\hat{x}_t
    + \tfrac{\eta}{2}\kappa\hat{x}_t^{2}\right)^{2}
    \nonumber = (1+\eta\hat{y}_t)^{2}\hat{x}_t^{2}
    + \eta\kappa(1+\eta\hat{y}_t)\hat{x}_t^{3}
    + \frac{\eta^{2}\kappa^{2}}{4}\hat{x}_t^{4}
\end{align}
Due to the upper bound on $\hat{y}_t,\hat{x}_t$ and $\kappa$, we have that
\begin{gather}
    \frac{\eta^2\kappa^2}{4}\hat{x}_t^4 \leq O(\eta^2\epsilon^2\rho_3^2\delta^4) \leq O(\epsilon^4\delta^2) \leq O(\epsilon^2\delta^2)\\
    \left|\eta\kappa(1+\eta\hat{y}_t)\hat{x}_t^{3}\right| \leq O(\eta\cdot\epsilon\rho_3\cdot(1+\epsilon)\cdot \delta^3) \leq O(\eta\epsilon\rho_3\delta^3) \leq O(\epsilon^2\delta^2)
\end{gather}
This shows that $\E{D_t^{2}}
= \E{(1+\eta\hat{y}_t)^{2}\hat{x}_t^{2}}
+ O(\epsilon^{2}\delta^{2})$.
Substituting into~\eqref{eq:x2_stat_Dt}:
\begin{equation}\label{eq:x2_stat_simplified_proof}
  \delta^{2}
  = \E{(1+\eta\hat{y}_t)^{2}\hat{x}_t^{2}}
    + \eta^{2}\sigma_{ \v{u}}^{2}
    + O(\epsilon^{2}\delta^{2}).
\end{equation}
\medskip\noindent
By Assumption~\ref{ass:decorr}(a) and, at equilibrium,
$\E{\hat{x}_t^{2}}=\delta^{2}$, we have that
\begin{equation}\label{eq:mf_applied_proof}
  \E{(1+\eta\hat{y}_t)^{2}\hat{x}_t^{2}}
  = (1+\eta\E{\hat{y}_t})^{2}\delta^{2}
    + O(\epsilon^{2}\delta^{2}).
\end{equation}
Substituting~\eqref{eq:mf_applied_proof} into~\eqref{eq:x2_stat_simplified_proof}:
\begin{align}\label{eq:quadratic_ybar_proof}
  \delta^{2}
  & = (1+\eta\E{\hat{y}_t})^{2}\delta^{2}
    + \eta^{2}\sigma_{ \v{u}}^{2}
    + O(\epsilon^{2}\delta^{2})\\
    & = \delta^2 + 2\eta\E{\hat{y}_t}\delta^2 + \eta^2\E{\hat{y}_t}^2\delta^2 + \eta^2\sigma_{ \v{u}}^{2}
    + O(\epsilon^{2}\delta^{2})\\
    & = (1 + 2\eta\E{\hat{y}_t})\delta^2 +\eta^2\sigma_{ \v{u}}^{2}
    + O(\epsilon^{2}\delta^{2})
\end{align}
where we used the fact that $\left|\eta\E{\hat{y}_t}\right| \leq O(\epsilon)$. Therefore, solving for $\E{\hat{y}_t}$ gives that, at equilibrium,
\[
    \E{\hat{y}_t} = -\frac{\eta\sigma_{\v{u}}^2}{2\delta^2} + O\left(\frac{\epsilon^2}{\eta}\right) = -\frac{\eta\beta\sigma_{\v{u}}^2}{4\alpha} + O\left(\frac{\epsilon^2}{\eta}\right)
\]
In the meantime, recall that $|\kappa_t| \leq O(\epsilon\rho_3)$ and $|\hat{x}_t| \leq \delta$. Therefore $|\kappa_t\hat{x}_t| \leq O(\epsilon\rho_3\delta) \leq O\left(\frac{\epsilon^2}{\eta}\right)$. By Theorem~\ref{thm:coupling}, we have that at equilibrium
\[
    \E{S(\param_t)} = \frac{2}{\eta} + \E{\hat{y}_t} + O\left(\frac{\epsilon^2}{\eta}\right) = \frac{2}{\eta } - \frac{\eta\beta\sigma_{\v{u}}^2}{4\alpha} + O\left(\frac{\epsilon^2}{\eta}\right)
\]
\end{proof}

\section{Further Experiment Details and Results}\label{app:further_exp}
\subsection{Setup}\label{sec:exp-setup}

\medskip \noindent \textbf{Architectures and losses.}
Our primary experimental setup follows \cite{cohen2021eos}: a
fully-connected network with two hidden layers of 200 units each
and $\tanh$ activation (FC-Tanh), trained with mean squared error
(MSE) loss on CIFAR-10. This setup produces clean edge-of-stability
behavior with a well-separated top Hessian eigenvalue. To verify
generality across architectures, we also test a convolutional
network (CNN: two convolutional layers with $\tanh$ activation
followed by two fully-connected layers) trained with MSE loss,
as well as a fully-connected network with ReLU activation (FC-ReLU)
with MSE loss. We additionally test the CNN with cross-entropy loss
to examine the role of the loss function
(see Section~\ref{sec:exp-discussion}).

\medskip \noindent \textbf{Training protocol.}
We train on a random subset of 5{,}000 CIFAR-10 training images
using vanilla SGD (no momentum, no weight decay) with a fixed
learning rate $\eta$. The default learning rate is $\eta = 0.01$,
giving a stability threshold of $2/\eta = 200$. For the learning
rate scaling experiment, we vary $\eta \in \{0.005, 0.008, 0.01,
0.015, 0.02\}$. Each experiment is repeated across 3--5 random
seeds for statistical significance.

\medskip \noindent \textbf{Metrics.}
At regular intervals during training, we compute:
\begin{itemize}[nosep, leftmargin=*]
    \item The top Hessian eigenvalue $\lambda_{\max}(\nabla^2 L)$
    (sharpness) via Lanczos with Hessian-vector products;
    \item The projected gradient noise variance
    $\sigma_u^2 = \mathrm{Var}_B[\langle g_B, u\rangle]$
    estimated from 50 random mini-batches;
    \item The batch sharpness
    $\mathrm{BS}(\theta) = \mathbb{E}_B[g_B^\top H_B g_B / \|g_B\|^2]$
    estimated from 30 random mini-batches;
    \item Quantities $\alpha = -\langle \nabla L,
    \nabla S \rangle$ and
    $\beta = \|\nabla S^\perp\|^2$ via the sharpness gradient
    $\nabla S = \nabla^3 L(u,u)$ measured along the SGD trajectory as approximation of the corresponding quantities along the constrained trajectory.
\end{itemize}
We define the \emph{equilibrium sharpness} $S_{\mathrm{eq}}$ as
the average of $\lambda_{\max}$ over the last 20 recorded
measurements, after the sharpness trajectory has stabilized near
the edge of stability.

\medskip \noindent \textbf{Hardware.}
All experiments are run on NVIDIA RTX 6000 Ada GPUs (48\,GB).
The total computational budget is approximately 20 GPU-hours
across 172 individual training runs.

\subsection{Equilibrium Sharpness Across Conditions}\label{sec:exp-summary}

Figure~\ref{fig:summary}A provides a comprehensive view of the
equilibrium sharpness across all batch sizes tested for the FC-Tanh
architecture. The monotonic increase from $S_{\mathrm{eq}} \approx 190.8$
at $b = 50$ to $S_{\mathrm{eq}} \approx 207.9$ at $b = 2500$ (matching
the GD baseline within noise) demonstrates that the stochastic
self-stabilization effect is robust and spans a wide range of
batch sizes.

Figure~\ref{fig:summary}B shows the equilibrium sharpness for both
GD and SGD ($b = 200$) across five learning rates. Both GD and SGD
sharpness scale with $2/\eta$ as expected, with a consistent gap
between GD and SGD at each learning rate. Interestingly, the
absolute gap $\Delta S$ is larger at smaller $\eta$ (4.27 at
$\eta = 0.005$ versus 2.29 at $\eta = 0.02$). This occurs because
the landscape quantities $\alpha$ and $\beta$ depend on $\eta$
through the equilibrium point: at smaller $\eta$, the system
equilibrates at higher sharpness where the landscape geometry
may amplify the noise effect.

% \begin{figure}[t]
%     \centering
%     \includegraphics[width=\textwidth]{figures/fig3_equilibrium_summary.pdf}
%     \caption{\textbf{(A)}~Equilibrium sharpness versus batch size
%     for FC-Tanh + MSE ($\eta = 0.01$). The dashed line marks the
%     GD baseline $S_{\mathrm{GD}} = 207.8$.
%     \textbf{(B)}~Equilibrium sharpness for GD (gray) and SGD with
%     $b = 200$ (blue) across learning rates. SGD consistently
%     equilibrates below GD.}
%     \label{fig:summary}
% \end{figure}

\subsection{Batch Sharpness Saturation}\label{sec:exp-bs}

In our theory, we predict that the
\emph{batch sharpness}---the curvature in the direction of the
mini-batch gradient---approaches $2/\eta$ even as the full-batch
sharpness $\lambda_{\max}(\nabla^2 L)$ remains suppressed below
$S_{\mathrm{GD}}$. This resolves an apparent paradox: how can SGD
remain at the ``edge of stability'' if its sharpness is below $2/\eta$?
The answer is that the \emph{relevant} curvature for mini-batch
dynamics is the batch sharpness, not the full-batch sharpness.

Figure~\ref{fig:batch-sharpness} confirms this prediction across
three batch sizes ($b = 100, 500, 2000$). In all cases, the
full-batch sharpness (blue curve) equilibrates below $S_{\mathrm{GD}}$,
while the batch sharpness measurements (red, shown as a running
median with interquartile range) cluster near $2/\eta = 200$.
The effect is most pronounced at $b = 100$, where the full-batch
sharpness is $\approx 200$ while the GD baseline is $\approx 208$,
yet the batch sharpness reaches $\approx 183$.
For $b = 2000$, both sharpness measures are close to the GD value,
consistent with the vanishing gap as $b \rightarrow  n$. The violin plots
at the right edge of each panel show the distribution of batch
sharpness measurements during the equilibrium phase, confirming
concentration near $2/\eta$.

\begin{figure}[t]
    \centering
    \includegraphics[width=\textwidth]{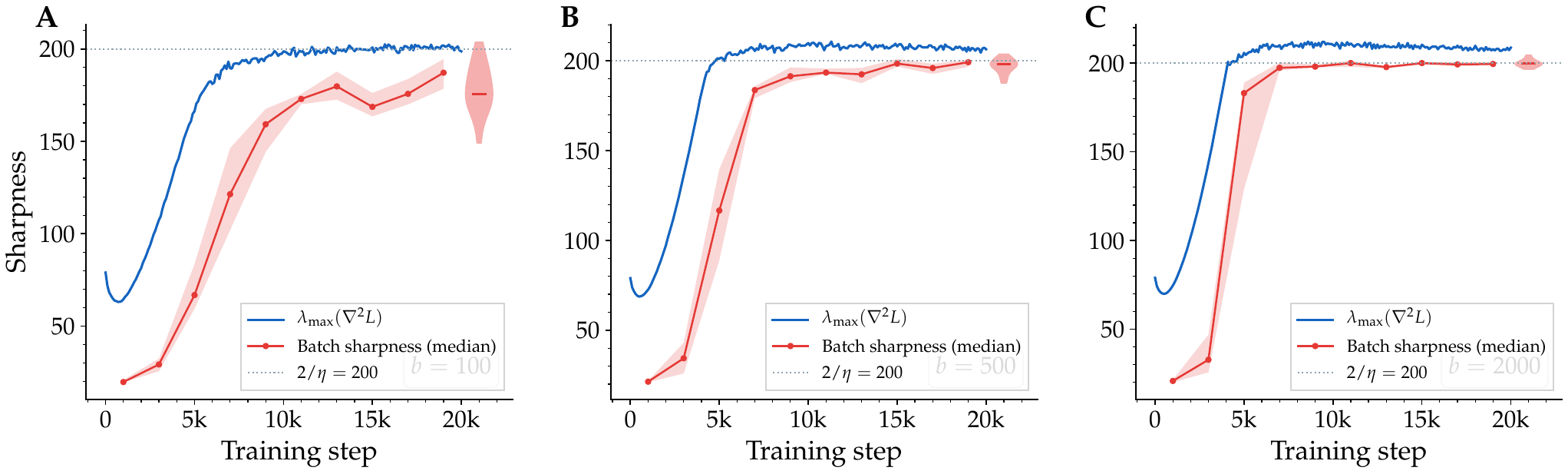}
    \caption{Batch sharpness versus full-batch sharpness
    during training for batch sizes $b \in \{100, 500, 2000\}$.
    Blue: $\lambda_{\max}(\nabla^2 L)$ (full-batch sharpness).
    Red: batch sharpness running median with interquartile range
    shading. The dotted line marks $2/\eta = 200$.
    Violin plots show the equilibrium distribution of batch
    sharpness, confirming our theory.}
    \label{fig:batch-sharpness}
\end{figure}

\subsection{Landscape Quantities}\label{sec:exp-landscape}

Figure~\ref{fig:landscape} shows the evolution of the key landscape
quantities during training for a representative run ($b = 200$,
$\eta = 0.01$). The sharpness (panel~A) exhibits the characteristic
progressive sharpening followed by EoS oscillation. The projected
noise variance $\sigma_u^2$ (panel~B) fluctuates but remains
consistently positive throughout training, confirming that gradient
noise along the top eigenvector is non-negligible. The
self-stabilization strength $\beta = \|\nabla S^\perp\|^2$ (panel~C)
increases during progressive sharpening and stabilizes at a large
positive value ($\beta \approx 7 \times 10^5$), confirming that
the restoring force is active at the equilibrium.

The progressive sharpening coefficient $\alpha = -\langle \nabla L,
\nabla S \rangle$ (panel~D) presents an important measurement
challenge. At the oscillating SGD iterate, $\alpha$ is frequently
negative because the iterate overshoots the stability threshold
$2/\eta$ during EoS oscillation. The theory predicts $\alpha > 0$
at the PGD reference trajectory (which lies on the manifold
$\{S(\theta) \leq 2/\eta\}$), not at the instantaneous SGD iterate.
This discrepancy between the measurement point and the theoretical
reference point means that the \emph{quantitative} gap prediction
$\Delta S = \eta \beta \sigma_u^2 / (4\alpha)$ requires care in
practice: one must either measure at a time-averaged iterate or
use the GD equilibrium landscape quantities. The \emph{qualitative}
predictions---the direction, monotonicity, and power-law scaling
of the gap---are robustly confirmed without needing to resolve
this measurement issue.

\begin{figure}[t]
    \centering
    \includegraphics[width=\textwidth]{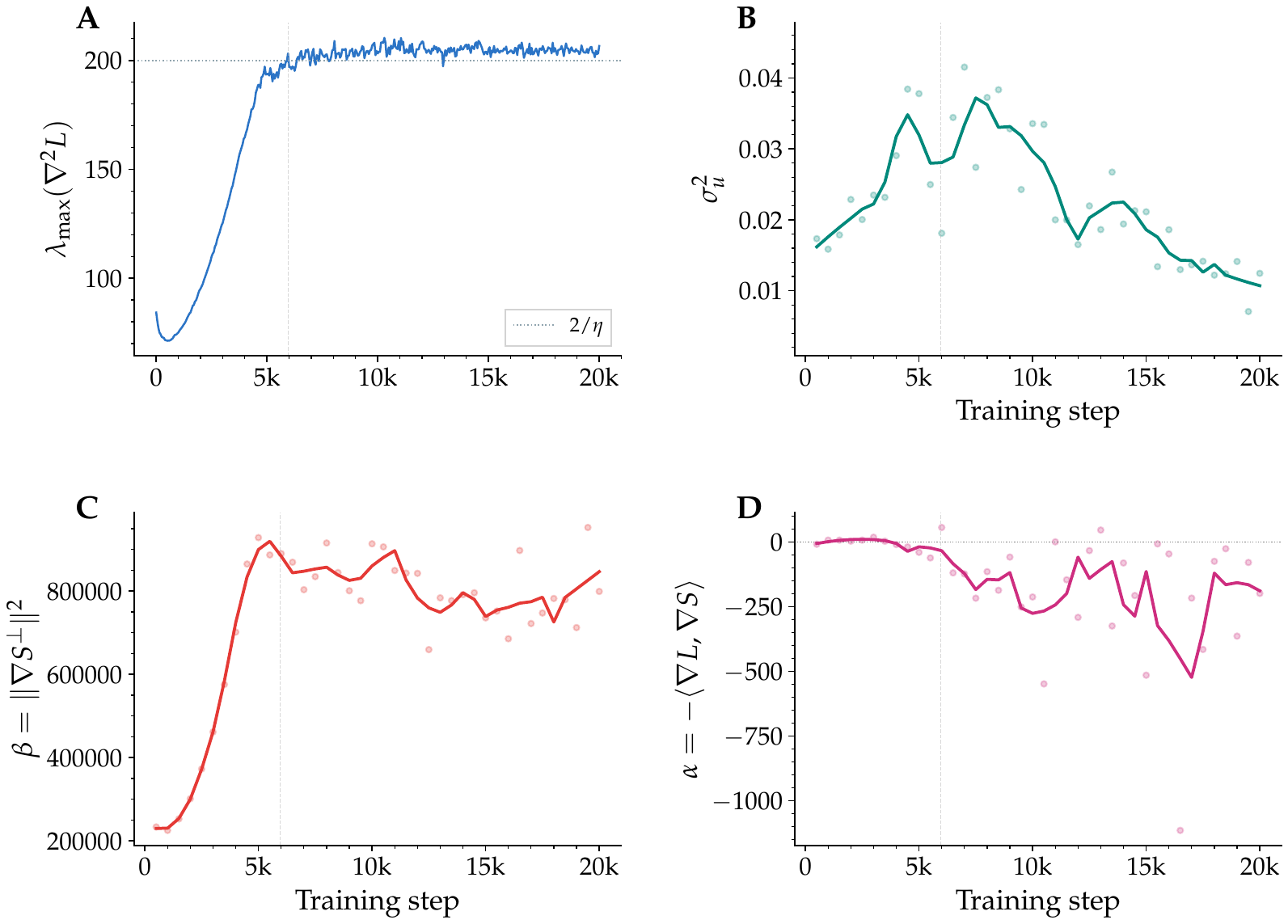}
    \caption{Landscape quantities during training ($b = 200$,
    $\eta = 0.01$).
    \textbf{(A)}~Sharpness with the $2/\eta$ threshold.
    \textbf{(B)}~Projected noise variance $\sigma_u^2$.
    \textbf{(C)}~Self-stabilization strength
    $\beta = \|\nabla S^\perp\|^2$.
    \textbf{(D)}~Progressive sharpening rate $\alpha$; negative
    values reflect measurement at the oscillating iterate rather
    than the PGD reference trajectory (see text).
    Raw measurements are shown as semi-transparent dots;
    solid lines show Savitzky--Golay smoothed trends.}
    \label{fig:landscape}
\end{figure}

\begin{figure}[t]
    \centering
    \includegraphics[width=\textwidth]{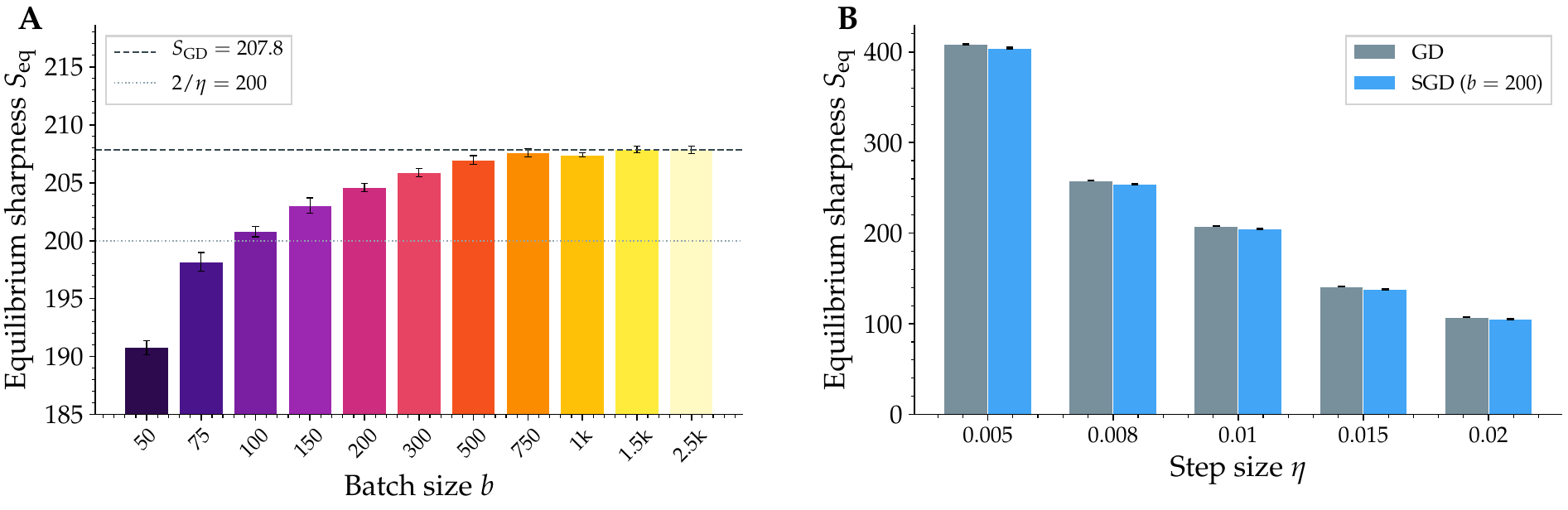}
    \caption{\textbf{(A)}~Equilibrium sharpness versus batch size
    for FC-Tanh + MSE ($\eta = 0.01$). The dashed line marks the
    GD baseline $S_{\mathrm{GD}} = 207.8$.
    \textbf{(B)}~Equilibrium sharpness for GD (gray) and SGD with
    $b = 200$ (blue) across learning rates. SGD consistently
    equilibrates below GD.}
    \label{fig:summary}
\end{figure}

\subsection{Verification of the Mean-Field Approximation}\label{app:mean-field-exp}

\begin{wrapfigure}{r}{0.5\textwidth}
    \centering
    \includegraphics[width=\linewidth]{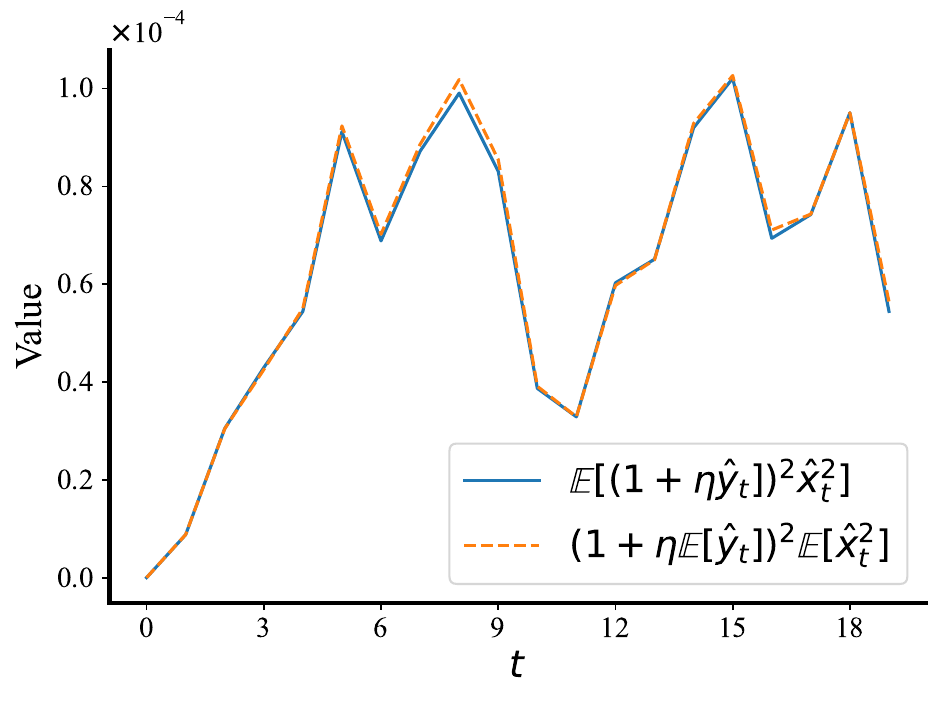}
    \caption{Dynamic of the two quantities involved in Assumption~\ref{ass:decorr}}
    \label{fig:mean_field}\vspace{-0.5cm}
\end{wrapfigure}

We verify that the mean-field approximation error assumed in Assumption~\ref{ass:decorr} is small in practice. To do so, we took a parameter checkpoint in the GD training at the time where the sharpness just entered EoS and use it as $\paramdagger_0$. Starting from this checkpoint, we generate the constrained trajectory using the update rule in Lemma~\ref{lem:dagger_step} using a learning rate of $\eta = 0.1$. Based on the constrained trajectory, the sequence $\hat{x}_t$ and $\hat{y}_t$ are generated according to the Definition~\ref{def:stoch_pred}. Here $\v{\xi}_t$ are generated by running a separate mini-batch SGD with batch size 128 starting from $\paramdagger_0$. We repeat this procedure 10 times to obtain a set of sample trajectories of $\hat{x}_t$ and $\hat{y}_t$. Based on the sample trajectories, the two quantities in Assumption~\ref{ass:decorr} are computed at each time step $t$, and plotted in Figure~\ref{fig:mean_field}. One could observe that the two values closely track each other, justifying that the mean-field approximation error is small.

\subsection{Discussion of Experimental Results}\label{sec:exp-discussion}

Our experiments provide strong support for the stochastic
self-stabilization mechanism across eight complementary tests:

\begin{itemize}[nosep,leftmargin=*]
    \item The batch-size scaling $\Delta S \propto b^{-1.27}$
    closely matches the theoretical $\Delta S \propto 1/b$,
    with high statistical significance ($R^2 = 0.98$).
    \item The noise variance scaling $\sigma_u^2 \propto b^{-1.21}$
    confirms the predicted mechanism: the projected gradient noise
    drives the sharpness suppression ($R^2 = 0.979$).
    \item Batch sharpness saturates near $2/\eta$ while full-batch
    sharpness remains suppressed, resolving the apparent paradox
    of SGD stability below the classical threshold.
    \item The product $\beta\sigma_u^2$ tracks the measured gap
    across batch sizes, confirming the decomposition predicted by
    the equilibrium formula.
    \item The gap scales as $\Delta S \propto \eta^{-0.47}$ with
    step size, reflecting the $\eta$-dependence of the equilibrium
    landscape geometry.
    \item The effect generalizes across architectures:
    a CNN with MSE loss shows an even stronger gap
    ($\Delta S \approx 71$ at $b=50$), and an FC-ReLU network
    confirms the mechanism is not activation-specific
    ($\Delta S \approx 16$ at $b=50$).
\end{itemize}

The main experimental limitation is the difficulty of measuring
$\alpha$ at the correct reference point. We note that the
\emph{scaling} predictions (which test the functional form of the
dependence on~$b$ and~$\eta$) are more robust than the
\emph{quantitative} prediction (which requires accurate
measurements of~$\alpha$,~$\beta$, and~$\sigma_u^2$ at a specific
point on the loss landscape). Developing practical methods for
measuring landscape quantities at the PGD reference trajectory
is an interesting direction for future work.

\medskip \noindent \textbf{Role of the loss function.}
We also tested the same CNN architecture with cross-entropy (CE)
loss on the same 5{,}000 CIFAR-10 subset.
Figure~\ref{fig:cnn-ce}A shows the sharpness trajectories for
CNN+CE across four batch sizes and GD. Unlike the MSE experiments,
the CNN+CE setup did not exhibit edge-of-stability behavior:
sharpness peaked at $\approx 85$--$105$ (well below $2/\eta = 200$)
around step 4{,}000 and then declined to $\approx 14$--$18$ by
step 15{,}000. Crucially, neither the batch size nor the use of
full-batch GD meaningfully affected the equilibrium sharpness---all
trajectories collapsed to the same low value, in stark contrast to
the batch-size-dependent separation seen with MSE loss
(Figure~\ref{fig:cnn-ce}B, reproduced for comparison).

This collapse occurs because cross-entropy loss on a small
dataset drives the network to near-zero loss rapidly, at which
point the Hessian spectrum flattens as the model enters an
interpolation regime. The stochastic self-stabilization mechanism
requires sustained edge-of-stability dynamics---specifically,
a balance between progressive sharpening ($\alpha > 0$) and
noise-induced stabilization ($\beta \sigma_u^2 > 0$)---which
cannot operate when the sharpness is far below $2/\eta$.
This observation is consistent with prior work showing that
clean EoS behavior is more readily obtained with MSE loss
\citep{cohen2021eos}, and highlights that the \emph{loss
function}, not just the architecture, plays a critical role in
determining whether the edge-of-stability regime is reached.

\begin{figure}[t]
    \centering
    \includegraphics[width=\textwidth]{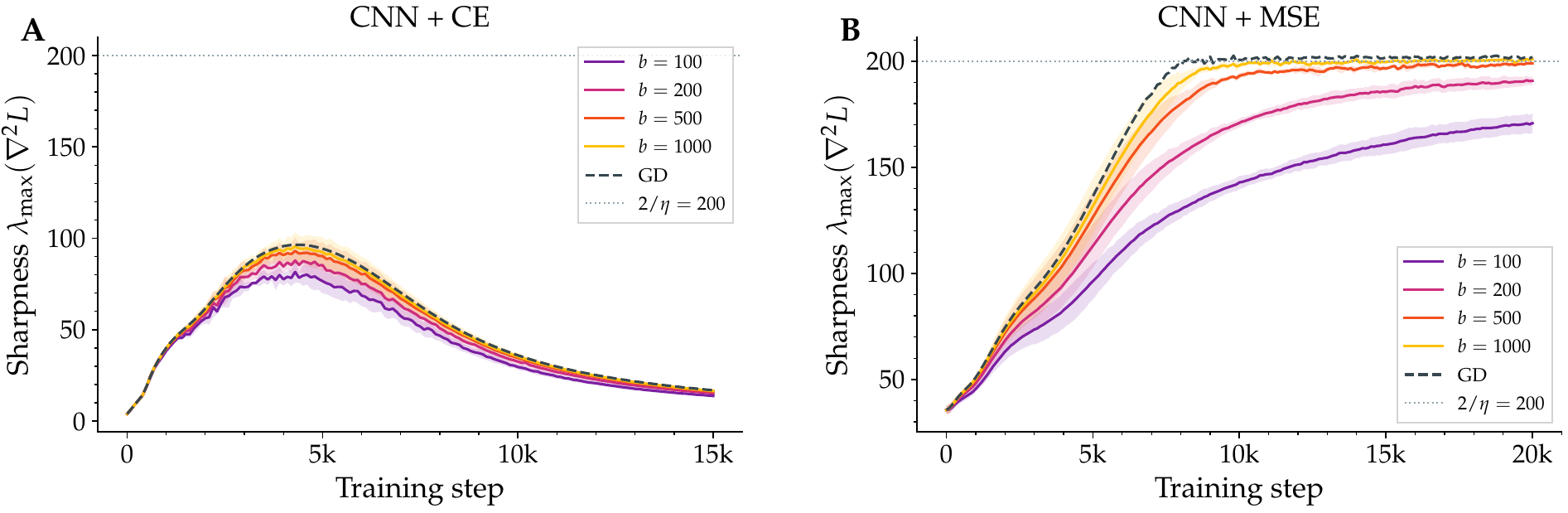}
    \caption{The role of the loss function.
    \textbf{(A)}~CNN + CE: sharpness peaks at $\approx 95$ and
    collapses to $\approx 15$, far below $2/\eta = 200$, with no
    batch-size dependence. Edge-of-stability dynamics never emerge.
    \textbf{(B)}~CNN + MSE (same architecture): clean EoS behavior
    with pronounced batch-size-dependent sharpness gap.
    Shaded regions indicate $\pm 1$ standard deviation across 3 seeds.}
    \label{fig:cnn-ce}
\end{figure}
\end{document}